\theoremstyle{plain}
\newtheorem{theorem}{Theorem}[section]
\newtheorem{proposition}[theorem]{Proposition}
\newtheorem{lemma}[theorem]{Lemma}
\newtheorem{definition}[theorem]{Definition}
\newtheorem{assumption}[theorem]{Assumption}
\theoremstyle{remark}
\DeclareMathOperator*{\argmax}{arg\,max}
\DeclareMathOperator*{\argmin}{arg\,min}
\newcommand{\ee}{E}
\newcommand{\pp}{P}
\newcommand{\ppn}{\hat{P}_{n}}
\newcommand{\dd}{\mathrm{d}}
\newcommand{\hnorm}[1]{\left\| {#1} \right\|_\mathcal{H}}
\newcommand{\hdnorm}[1]{\left\| {#1} \right\|_{\mathcal{H}^\ast}}
\newcommand{\op}[1]{O_p\left( {#1} \right)}
\newcommand{\jpsi}{J_\psi}
\newcommand{\jh}{J_h}
\DeclareMathOperator*{\sumint}{%
\mathchoice%
  {\ooalign{$\displaystyle\sum$\cr\hidewidth$\displaystyle\int$\hidewidth\cr}}
  {\ooalign{\raisebox{.14\height}{\scalebox{.7}{$\textstyle\sum$}}\cr\hidewidth$\textstyle\int$\hidewidth\cr}}
  {\ooalign{\raisebox{.2\height}{\scalebox{.6}{$\scriptstyle\sum$}}\cr$\scriptstyle\int$\cr}}
  {\ooalign{\raisebox{.2\height}{\scalebox{.6}{$\scriptstyle\sum$}}\cr$\scriptstyle\int$\cr}}
}
\title{Geometry-Aware Instrumental Variable Regression}
\author{Heiner Kremer}
\author{Bernhard Sch{\"o}lkopf}
\affil{Max Planck Institute for Intelligent Systems, Tübingen, Germany}
\begin{document}
\floatpagestyle{plain}
\date{}
\maketitle
\begin{abstract}
\noindent
Instrumental variable (IV) regression can be approached through its formulation in terms of conditional moment restrictions (CMR). Building on variants of the generalized method of moments, most CMR estimators are implicitly based on approximating the population data distribution via reweightings of the empirical sample. While for large sample sizes, in the independent identically distributed (IID) setting, reweightings can provide sufficient flexibility, they might fail to capture the relevant information in presence of corrupted data or data prone to adversarial attacks. To address these shortcomings, we propose the Sinkhorn Method of Moments, an optimal transport-based IV estimator that takes into account the geometry of the data manifold through data-derivative information. We provide a simple plug-and-play implementation of our method that performs on par with related estimators in standard settings but improves robustness against data corruption and adversarial attacks.
\end{abstract}

\section{Introduction}
\begin{figure}[t]
    \centering
    \includegraphics[width=.65\linewidth]{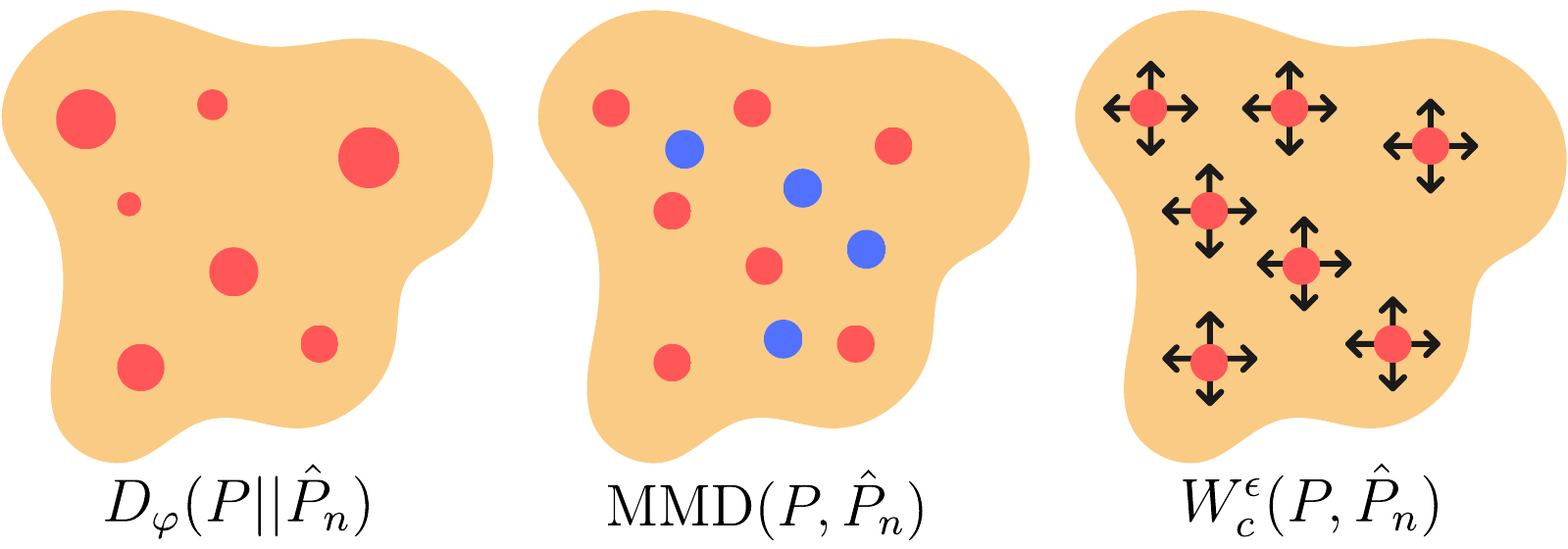}
    \caption{Paradigms to approximate $P_0$ from data (red dots) in the GEL framework. $\varphi$-divergence-based estimators (left) approximate $P_0$ by reweighting (weight $\hat{=}$ size) the sample (e.g., \citep{ai2003efficient,bennett2020variational}.
    MMD-based estimators (middle) allow for sampling additional data points (blue dots)~\citep{pmlr-v202-kremer23a}. In contrast, optimal transport-based estimators (right) allow to move around the data points (present work).}
    \label{fig:distances}
\end{figure}
Instrumental variable regression is one of the most widespread approaches for learning in presence of confounding~\citep{angrist2008mostly}.
It is applicable in situation where one is interested in inferring the outcome $Y$ of some treatment $T$, where both, treatment and outcome, are affected by a so-called unobserved confounder $U$.
To eliminate the confounding bias, one can take into account an instrumental variable $Z$, which i) affects the treatment $T$, ii) affects the outcome $Y$ only through its effect on $T$, and, iii) is independent of the confounder $U$. 
While traditionally the problem has been addressed through the 2-stage least squares approach~\citep{angrist2008mostly}, in recent years the formulation in terms of conditional moment restrictions (CMR) has gained popularity for its potential to benefit from advances in machine learning models~\citep{bennett2020deep,bennett2020variational,Dikkala20:Minimax,zhang2021maximum,muandet2020dual,kremer2022functional,pmlr-v202-kremer23a}.
The CMR formulation of IV regression is based on restricting the expectation of the prediction residual $Y-f(T)$ conditioned on the instruments $Z$, where $f$ denotes a causal relation from $T$ to $Y$ that one wants to infer.
In general, this leads to a zero-sum game in which one minimizes an objective with respect to the model parameters and maximizes it with respect to an adversary function that detects the moment violations~\citep{bennett2020deep,Dikkala20:Minimax}. 
One of the most general frameworks for learning with moment restrictions is the family of generalized empirical likelihood (GEL) estimators~\citep{owen2001empirical,qin-lawless,imbens1998,kitamura}, which includes the prominent generalized method of moments~\citep{hansen,hansen-finite-samples,hall2004generalized}. 
The idea behind empirical likelihood is to learn a model via maximum likelihood estimation without specifying a parametric form of the data distribution~\citep{owen2001empirical}.
In practice, this is realized by learning a non-parametric approximation of the population data distribution $P_0$ along with the model $f$ by means of minimizing a $\varphi$-divergence under the moment restrictions. 
However, by relying on $\varphi$-divergences one effectively restricts the estimator of the population distribution to reweightings of the sample. The reweighting assumption has recently been lifted by \citet{pmlr-v202-kremer23a} by introducing an estimator based on maximum mean discrepancy~\citep{gretton2012kernel}.
Their estimator allows for more fine-grained approximations of $P_0$ by sampling additional data points from a generative model.
While reweightings of the present data or sampling of additional points might be suitable to find sufficiently close approximations of the population distribution in some cases, in presence of highly complex data manifolds, e.g., image spaces, they might become ineffective as they are blind towards the geometry of the data space. This is particularly relevant in the presence of poisoned~\citep{chen2017targeted} or adversarial~\citep{goodfellow2014explaining} data points, i.e., data that has been corrupted with small perturbations which lead to vastly inaccurate predictions. The key to robustness against such perturbations is to look at how the learning signal changes around the empirical data points, i.e., to take into account the geometry of the signal with respect to the data manifold.
We implement the idea of a geometry-aware learning with conditional moment restrictions by proposing an empirical likelihood-type estimator based on a regularized optimal transport distance, which we call the Sinkhorn Method of Moments (SMM). Figure~\ref{fig:distances} schematically compares our method to previous approaches to empirical likelihood estimation.
\paragraph{Our contributions}
\begin{itemize}
    \item We propose the Sinkhorn Method of Moments (SMM), the first geometry-aware approach to IV regression resulting from an empirical likelihood-type estimator based on the Sinkhorn distance.
    \item We derive the dual form of our estimator and a leading order expansion that lets us compute our estimator with stochastic gradient methods.
    \item We show that under standard assumptions, our method is consistent for models identified via conditional moment restrictions.
    \item We derive a kernel-based implementation of our method that can be interpreted as a geometry-aware variant of a 2-stage generalized method of moments estimator for conditional moment restrictions. 
    \item Our experiments demonstrate that SMM is competitive with state-of-the-art IV estimators in standard settings and can provide an improvement in presence of corrupted data and adversarial examples.
\end{itemize}

Section~\ref{sec:3:background} introduces empirical likelihood estimation for conditional moment restrictions, followed by the derivation of our method in Section~\ref{sec:3:smm}. Empirical results are provided in Section~\ref{sec:3:results} and related work is discussed in Section~\ref{sec:3:related-work}.

\section{Empirical Likelihood Estimation for CMR} \label{sec:3:background}
In the following let $T$, $Y$ and $Z$ denote random variables taking values in $\mathcal{T} \subseteq \mathbb{R}^{d_t}$, $\mathcal{Y} \subseteq \mathbb{R}^{d_y}$ and $\mathcal{Z} \subseteq \mathbb{R}^{d_z}$ respectively.
We denote by $E_{P}[\cdot]$ the expectation operator with respect to a distribution $P$ and drop the subscript whenever we refer to the population distribution $P_0$.

Conditional moment restrictions identify a function of interest $f_0 \in \mathcal{F}$ by restricting the conditional expectation of a so-called moment function $\psi: \mathcal{T} \times \mathcal{Y} \times \mathcal{F} \rightarrow \mathbb{R}^m$,
\begin{align}
    E[\psi(T,Y;f_0) | Z] = 0 \ P_Z\mathrm{-a.s.}
    \label{eq:3:cmr}
\end{align}
The most prominent example of this problem is instrumental variable (IV) regression, where the moment function is given by the prediction residual $\psi(t,y;f) = y - f(t)$ and the conditioning variable $Z$ denotes the instrument. IV regression is one of the major practical approaches to deal with endogenous variables~\citep{Pearl2000:CMR} and has been largely adopted by the causal machine learning community~\citep{deepiv,xu2021learning,singh2019kernel,zhang2021maximum,saengkyongam2022exploiting}.

Learning with conditional moment restrictions is challenging mostly due to two factors. The first one is that equation~\eqref{eq:3:cmr} contains a \emph{conditional} expectation over the treatments $T$ and outcomes $Y$, while one generally has access to a sample from the \emph{joint} distribution over $(T,Y,Z) \sim P_0$. For a sufficiently complex data generating process the accurate estimation of a conditional distribution from the corresponding joint distribution can require large amounts of data~\citep{hall1999methods}.
This can be avoided by rewriting the CMR \eqref{eq:3:cmr} in terms of an equivalent \emph{variational} formulation~\citep{BIERENS1982105}
\begin{align}
    E[\psi(T,Y;f_0)^T h(Z)] = 0 \ \forall h\in\mathcal{H},
    \label{eq:3:vmr}
\end{align}
where $\mathcal{H}$ is a sufficiently rich function space, e.g., the space of square-integrable functions~\citep{BIERENS1982105} or the reproducing kernel Hilbert space of a certain kind of kernel~\citep{kremer2022functional}. While \eqref{eq:3:vmr} avoids the conditional expectation operator, it involves an infinite-dimensional over-determined system of equations. 
The second difficulty is the fact that the moment restrictions identify the function of interest $f_0$ via the \emph{population} distribution $P_0$ of the data, about which one usually only has partial information in terms of a sample $\mathcal{D} = \{(t_i,y_i,z_i) \}_{i=1}^n$ with empirical distribution $\ppn := \frac{1}{n} \sum_{i=1}^n \delta_{(t_i,y_i,z_i)}$, where $\delta_{(t_i,y_i,z_i)}$ denotes a point mass centered at $(t_i,y_i,z_i)$. While the true function $f_0$ is identified by the population moment restrictions \eqref{eq:3:vmr}, it might not satisfy the empirical counterpart of \eqref{eq:3:vmr} and thus one might not retrieve $f_0$ by enforcing it.
Empirical likelihood estimation \citep{owen88,owen90,qin-lawless} has been proposed as a flexible tool to solve over-determined moment restriction problems with access to only a finite sample. The idea is based on approximating the population distribution by seeking a distribution with minimal distance to the empirical one for which the moment restrictions can be fulfilled. We visualize this approach in Figure~\ref{fig:GEL}. The standard generalized empirical likelihood estimator~\citep{qin-lawless} with the extension to conditional moment restrictions of \citet{kremer2022functional} takes the form $f^{\mathrm{FGEL}} = \argmin_{f \in \mathcal{F}} R(f)$  with
\begin{align*}
    R(f) =  &\min_{\pp \in \mathcal{P}_n} D_\varphi(\pp || \ppn) \quad \mathrm{s.t.} \quad E_P[\psi(T,Y;f)^T h(Z)] = 0 \ \ \forall h \in \mathcal{H}
\end{align*}
where $D_\varphi(P||Q) = \int \varphi\left(\frac{\dd P}{\dd Q} \right) \dd Q$ denotes the $\varphi$-divergence between distributions $P$ and $Q$ and $\mathcal{P}_n = \{P \ll \ppn : E_P[1]=1 \}$ denotes the set of distributions that are absolutely continuous with respect to the empirical one, i.e., re-weightings of the data points.

\section{Sinkhorn Method of Moments} \label{sec:3:smm}
The goal of this work is to extend the idea of empirical likelihood estimation to optimal transport distances.
Before deriving the method, we provide a brief introduction to optimal transport. 
Consider the random variable $\xi := (T,Y,Z)$ taking values in $\Xi := \mathcal{T} \times \mathcal{Y} \times \mathcal{Z} \subseteq \mathbb{R}^{d_\xi}$, with $d_\xi = d_t + d_y + d_z$, and let $\mathcal{P}(\Xi)$ denote the space of probability distributions over $\Xi$.

\paragraph{Optimal Transport}
Optimal transport provides an intuitive way of comparing two distributions by means of measuring the minimum effort of transforming one to another by moving probability mass at a certain cost. 
Let $P \in \mathcal{P}(\Xi)$ and $Q \in \mathcal{P}(\Xi)$ denote two probability distributions over $\Xi$ with densities or probability mass functions (pmf) $p$ and $q$ respectively. 
Let $\Pi(P,Q) \subset \mathcal{P}(\Xi \times \Xi)$ denote the space of joint probability distributions over the product space $\Xi \times \Xi$ with marginals $P$ and $Q$. Define the projection operators $\mathbb{P}_{1}$ and $\mathbb{P}_{2}$ with $\mathbb{P}_1(x,y) = x$ and $\mathbb{P}_2(x,y) = y$ and their push-forward operation $\mathbb{P}_{i\sharp}$ such that for any element of $\Pi(P,Q)$, with density (or pmf) $\pi$ we have $\mathbb{P}_{1\sharp}\pi = \sumint \pi(\xi,\xi') \dd \xi' = p(\xi)$ and $\mathbb{P}_{2\sharp} \pi = \sumint \pi(\xi,\xi') \dd \xi = q(\xi')$. Then, for a cost function $c: \Xi \times \Xi \rightarrow \mathbb{R}$ we can define the Wasserstein distance between $P$ and $Q$ in the Kantorovich formulation as $W_c(P,Q) := \min_{\pi \in \Pi(P,Q)} \int c(\xi,\xi') \dd \pi(\xi,\xi')$. 
Computation of the Wasserstein distance requires the solution of an infinite-dimensional linear program. In order to enhance its computational efficiency, \citet{cuturi2013sinkhorn} proposed to modify the distance with an entropy regularization penalty, 
\begin{align*}
    W_c^\epsilon(P,Q) = \! \! \min_{\pi \in \Pi(P,Q)} \int c(\xi, \xi') \dd \pi(\xi, \xi') + \epsilon H(\pi | \mu \otimes \nu), 
\end{align*}
where the relative entropy between $\pi$ and a reference measure $\mu \otimes \nu \in \mathcal{P}(\Xi \times \Xi)$ is defined as 
\begin{align*}
    H(\pi| \mu \otimes \nu) = \int_{\Xi \times \Xi}\log\left( \frac{\dd \pi(\xi,\xi')}{\dd \mu(\xi) \dd \nu(\xi')} \right) \dd \pi(\xi,\xi').
\end{align*}
The resulting distance can be efficiently computed with the matrix scaling algorithm of \citet{sinkhorn1967concerning}, from where it derives its name, Sinkhorn distance. 
We refer to \citet{peyre2019computational} for a comprehensive introduction to computational optimal transport for machine learning.
In order to define an estimator for the conditional moment restriction problem \eqref{eq:3:cmr}, first, we resort to the functional formulation of \citet{kremer2022functional}. Let $\mathcal{H}$ denote a sufficiently rich space of functions such that equivalence between \eqref{eq:3:cmr} and \eqref{eq:3:vmr} holds. Then we define the moment functional $\Psi: \mathcal{T} \times \mathcal{Y} \times \mathcal{Z} \times \mathcal{F} \rightarrow \mathbb{R}^m$ via its action on $h\in\mathcal{H}$ as $\Psi(t,y,z;f)(h) = \psi(t,y;f)^T h(z)$. This lets us express the CMR \eqref{eq:3:cmr} in its equivalent functional form, $ \| E[\Psi(T,Y,Z;f)] \|_{\mathcal{H}^\ast} = 0$, where $\| \cdot \|_{\mathcal{H}^\ast}$ denotes the norm in the dual space $\mathcal{H}^\ast$ of $\mathcal{H}$.
\begin{figure}
    \centering
    \includegraphics[width=.5\linewidth]{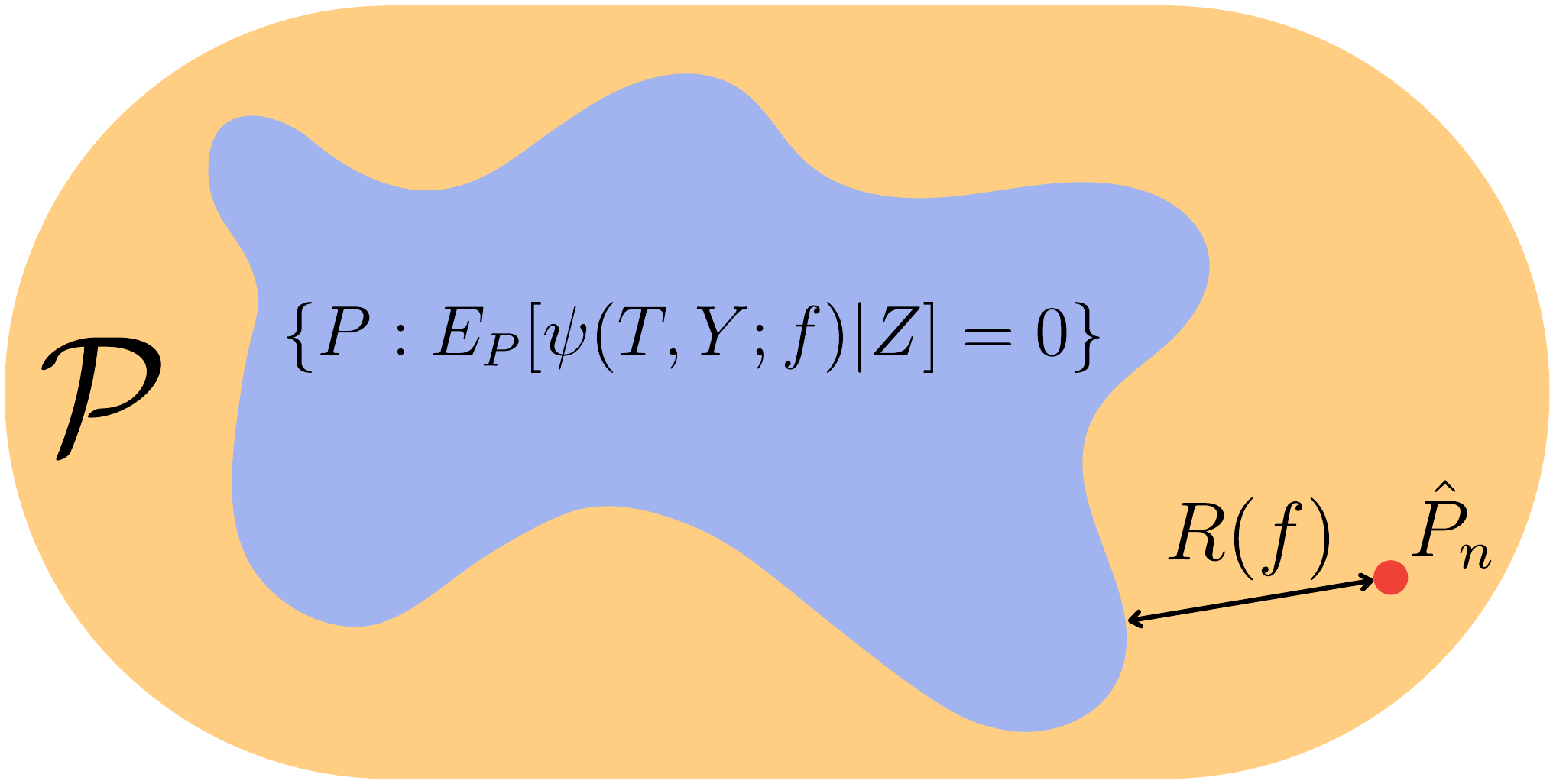}
    \caption{Sinkhorn profile. For every $f \in \mathcal{F}$, the Sinkhorn profile $R(f)$, \eqref{eq:3:primal}, is the minimal distance between the empirical distribution $\ppn$ and the set of distributions satisfying the CMR \eqref{eq:3:cmr}.}
    \label{fig:GEL}
\end{figure}

With this at hand, we can define the primal problem of the Sinkhorn Method of Moments estimator for conditional moment restrictions as the minimizer of the \emph{Sinkhorn profile} $R_\epsilon$ defined as
\begin{align}
   R_\epsilon(f) := &\min_{P \in \mathcal{P}} W_c^\epsilon(P, \ppn) \quad \mathrm{s.t.} \quad \| E_P[\Psi(T,Y,Z;f)] \|_{\mathcal{H}^\ast} = 0. \label{eq:3:primal} 
\end{align}
Using Lagrangian duality we can go over to the dual formulation of \eqref{eq:3:primal} as formalized by the following theorem whose proof is inspired by the mathematically closely related Sinkhorn Distributionally Robust Optimization (DRO) method of \citet{wang2023sinkhorn}.

\begin{theorem}[Duality] \label{th:3:duality}
Consider the Sinkhorn profile \eqref{eq:3:primal} with reference measure $\mu \otimes \nu \in \mathcal{P}(\Xi \times \Xi)$. Then \eqref{eq:3:primal} has the strongly dual form $R_\epsilon(f) = \sup_{h \in \mathcal{H}} D(f, h)$, where 
    \begin{align}
        D(f, h) :=
         \ee_{\xi' \sim \nu}\!\left[-\epsilon \log \ee_{\xi\sim\mu}\!\left[ e^{-\Psi(\xi;f)(h) -c(\xi,\xi')/{\epsilon}} \right] \right] \! . \label{eq:3:dual}
    \end{align}
\end{theorem}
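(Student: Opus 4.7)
The plan is to establish the claimed duality by a Lagrangian argument in the spirit of \citet{wang2023sinkhorn}: attach a multiplier to the infinite-dimensional moment constraint, exchange $\inf$ and $\sup$ by strong duality, and then solve the resulting inner minimization in closed form via disintegration and a standard entropy-regularized calculation. The first step is to unfold the Sinkhorn distance in \eqref{eq:3:primal} through its coupling representation: introducing $\pi$ with prescribed marginals, observing that $P$ is determined by the first marginal of $\pi$, and rewriting $\|E_P[\Psi(\cdot;f)]\|_{\mathcal{H}^\ast}=0$ as the family of linear equalities $E_\pi[\Psi(\xi;f)(h)]=0$ for every $h \in \mathcal{H}$, the primal becomes a minimization over $\pi$ of $\int c\,d\pi + \epsilon H(\pi|\mu\otimes\nu)$ subject to the marginal and moment constraints. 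Attaching a dual function $h \in \mathcal{H}$ to the family of moment equalities yields the Lagrangian
\begin{equation*}
L(\pi,h) = \int c(\xi,\xi')\,d\pi(\xi,\xi') + \epsilon H(\pi|\mu\otimes\nu) + E_\pi[\Psi(\xi;f)(h)],
\end{equation*}
so that $R_\epsilon(f) = \inf_\pi \sup_{h\in\mathcal{H}} L(\pi,h)$.

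The crux is to exchange $\inf$ and $\sup$. Since $L$ is strictly convex in $\pi$ (from the strict convexity of relative entropy) and linear in $h$, I would invoke a Rockafellar-type strong duality theorem for convex programming in measure spaces along the lines of \citet{wang2023sinkhorn}; the required Slater-type qualification is the existence of at least one coupling with finite entropy relative to $\mu\otimes\nu$ whose first marginal satisfies the CMR \eqref{eq:3:cmr}. Once the exchange is valid, the inner problem decomposes: disintegrating $\pi$ along its second coordinate via $d\pi(\xi,\xi') = d\pi_{\cdot|\xi'}(\xi)\,d\nu(\xi')$ and applying the chain rule $H(\pi|\mu\otimes\nu) = \int H(\pi_{\cdot|\xi'}|\mu)\,d\nu(\xi')$, the inner minimization decouples into a family of per-$\xi'$ entropy-regularized linear programs
\begin{equation*}
\inf_{\pi_{\cdot|\xi'}\in\mathcal{P}(\Xi)}\ \int\bigl[c(\xi,\xi') + \Psi(\xi;f)(h)\bigr]\,d\pi_{\cdot|\xi'}(\xi) + \epsilon H(\pi_{\cdot|\xi'}|\mu),
\end{equation*}
each of which is solved by the Gibbs tilting $d\pi^\ast_{\cdot|\xi'}(\xi) \propto \exp\!\bigl(-[c(\xi,\xi') + \Psi(\xi;f)(h)]/\epsilon\bigr)\,d\mu(\xi)$, with optimal value $-\epsilon \log E_{\xi\sim\mu}\bigl[\exp(-[c(\xi,\xi') + \Psi(\xi;f)(h)]/\epsilon)\bigr]$.

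Integrating this value against $\xi'\sim\nu$, and using the linearity of $\Psi(\xi;f)(\cdot)$ in $h$ to absorb the $1/\epsilon$ factor into the multiplier (so that $\Psi(\xi;f)(h)$ appears unscaled, as in \eqref{eq:3:dual}), yields exactly $D(f,h)$, whence $R_\epsilon(f) = \sup_{h\in\mathcal{H}} D(f,h)$. The main technical obstacle is justifying the $\inf$--$\sup$ exchange: convexity and linearity are transparent, but verifying a Slater-style constraint qualification for an infinite-dimensional equality constraint indexed by functions in the (possibly infinite-dimensional) space $\mathcal{H}$ requires measure-theoretic care, essentially amounting to adapting the strong duality machinery of \citet{wang2023sinkhorn} to our functional moment setting.
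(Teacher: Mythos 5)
Your proposal is correct and follows essentially the same route as the paper: attach a dual $h\in\mathcal{H}$ to the moment constraint, collapse the minimization over $P$ and $\pi$ into a single minimization over couplings, disintegrate along the second marginal, solve each per-$\xi'$ entropy-regularized subproblem in closed form, and rescale $h$ by $1/\epsilon$ to obtain \eqref{eq:3:dual}. The paper's only cosmetic differences are that it first introduces a scalar multiplier $\rho$ for the $\mathcal{H}^\ast$-norm constraint and then merges $\rho h$ into an unconstrained $h$, and it derives the inner Gibbs formula from scratch (via the Fenchel conjugate of $t\log t$ and a secondary multiplier $\eta$ for normalization) rather than invoking the variational principle directly; like you, the paper asserts rather than rigorously proves the $\inf$--$\sup$ exchange.
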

In contrast to its original purpose, in our application, the goal of the entropic regularization penalty is not to make computation of the distance more efficient but rather to arrive at a relaxed dual problem~\eqref{eq:3:dual}. 
The dual Sinkhorn profile~\eqref{eq:3:dual} contains expectation operators with respect to the reference distributions $\mu$ and $\nu$ combined in a non-linear way. This casts optimization of the objective difficult as stochastic gradient estimates will be biased. 
One way to proceed is to resort to de-biasing techniques as discussed by \citet{wang2023sinkhorn} for their related DRO objective. However, on top of the problem of gradient estimation, computation of \eqref{eq:3:dual} requires sampling from two reference distributions $\mu$ and $\nu$ such that accurate gradient estimation becomes costly.

To avoid these issues, we propose an alternative solution for a special choice of reference measures and cost function. \citet{cuturi2013sinkhorn} chooses the reference measure as the product of the marginals of the coupling distribution $\pi$. For $W_c^\epsilon(P,Q)$ this corresponds to the choice $\mu \otimes \nu = P \otimes Q$.
The choice of $\mu$ and $\nu$ can be interpreted as a prior for distributions $P$ and $Q$ respectively. Motivated by this, we choose $\nu = \ppn$ and in order not to restrict the form of $P$ we use an uninformative prior and choose $\mu$ as the Lebesgue measure.   

The second modeling choice is the transport cost function $c$. Here, we use a weighted Euclidean norm,
\begin{align}
    c(\xi, \xi') &:= \frac{1}{2} (\xi - \xi')^T \Gamma (\xi - \xi') \label{eq:3:cost}\\
    &= \frac{1}{2}
\sum_{w \in \{t,y,z\}} \gamma_w \|w - w' \|_2^2, \nonumber
\end{align}
where the factors $\gamma_w > 0$ determine the transport cost in the spaces $\mathcal{T}$, $\mathcal{Y}$ and $\mathcal{Z}$ and we defined the block diagonal matrix $\Gamma: = \operatorname{diag}(\{\gamma_t I_{d_t}, \gamma_y I_{d_y}, \gamma_z I_{d_z} \}) \in \mathbb{R}^{d_\xi \times d_\xi}$, with $I_{d_i}$ denoting the identity matrix in $\mathbb{R}^{d_i}$. 
With these choices, the objective~\eqref{eq:3:dual} becomes
\begin{align}
    D(f,h)  = E_{\xi' \sim \ppn} \left[ - \epsilon \log E_{\xi \sim \mathcal{N}(\xi', \epsilon \Gamma^{-1})} \left[ e^{-\Psi(\xi;f)(h)} \right]\right], \label{eq:3:linear1}
\end{align}
where $\mathcal{N}(\xi', \epsilon \Gamma^{-1})$ denotes a multivariate Gaussian centered at $\xi' = (t',y',z')$ with diagonal covariance $\epsilon \Gamma^{-1}$. 
Thus, for each value of $\xi'$ we need to carry out an expectation over the moment violation $\exp(-\Psi(\xi;f)(h))$ with respect to a narrow Gaussian distribution centered at $\xi'$.
Now, as $\epsilon$ is a small regularization parameter, the integrand will only provide relevant contributions in a neighborhood of $\xi'$ and thus, for a sufficiently smooth moment function $\psi$ and instrument function $h$, we can employ a Taylor expansion and carry out the Gaussian expectation over $\xi$ in closed form. 
In the following, we define the weighted Laplacian $\Delta_\xi = \nabla_\xi \cdot \left( \Gamma^{-1} \nabla_\xi \right) = \sum_{w \in \{t,y,z \}} \frac{1}{\gamma_w} \Delta_w$ and the weighted $l_2$-norm $\|\cdot \|_{\Gamma}$ as $\| v \|_{\Gamma}^2 = v^T \Gamma^{-1}v$ for $v \in \mathbb{R}^{d_\xi}$.
\begin{theorem} \label{th:3:linearization}
Let the moment functional $\Psi(\cdot; f): \Xi \rightarrow \mathcal{H}^\ast$ be continuously differentiable everywhere for any $f \in \mathcal{F}$. 
Consider the SMM estimator with transport cost function \eqref{eq:3:cost} and reference measure $\ppn \otimes L$, where $L$ denotes the Lebesgue measure over $\Xi$. 
Then, for $\epsilon / \gamma_{i}$, $i \in [t,y,z]$, sufficiently small, up to constants and rescalings the objective of the dual Sinkhorn profile \eqref{eq:3:dual} takes the form
    \begin{align}
        D(f,h) =& E_{\xi \sim \ppn}\left[\left(I + \frac{\epsilon}{2} \Delta_\xi \right)\Psi(\xi;f)(h) \right]  - \frac{\epsilon}{2} E_{\xi \sim \ppn}\left[ \| \nabla_\xi \Psi(\xi;f)(h) \|_{\Gamma}^2 \right] + O(\epsilon^{3/2}).   \label{eq:3:smm-cue}
    \end{align}
\end{theorem}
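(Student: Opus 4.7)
The plan is to work inside the inner expectation of $D(f,h)$, use a change of variables to obtain a centered Gaussian with covariance $\epsilon\Gamma^{-1}$, Taylor-expand the integrand around $\xi'$, and read off the leading-order terms from the Gaussian moments. The $\epsilon^{1/2}$ scale of the Gaussian sets the small parameter of the expansion.

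Concretely, I would first substitute $u = \xi - \xi'$ inside the inner expectation of \eqref{eq:3:linear1} so that $u \sim \mathcal{N}(0,\epsilon\Gamma^{-1})$, and write a second-order Taylor expansion
\[
\Psi(\xi'+u;f)(h) = \Psi(\xi';f)(h) + u^{T} g(\xi') + \tfrac{1}{2} u^{T} H(\xi') u + r(u),
\]
with $g := \nabla_\xi \Psi(\xi';f)(h)$, $H := \nabla_\xi^2 \Psi(\xi';f)(h)$, and $r(u) = O(\|u\|^3)$. I would factor out $e^{-\Psi(\xi';f)(h)}$ and expand the remaining exponential $e^{-u^T g - \tfrac{1}{2} u^T H u - r(u)}$ as a power series. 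Since under $u\sim\mathcal{N}(0,\epsilon\Gamma^{-1})$ we have $E[u]=0$, $E[uu^T] = \epsilon\Gamma^{-1}$, and $E[\|u\|^k] = O(\epsilon^{k/2})$, only even-order terms contribute at each order in $\epsilon$, and the cubic remainder is $O(\epsilon^{3/2})$. Taking the Gaussian expectation yields
\[
E_u\bigl[e^{-u^T g - \tfrac{1}{2} u^T H u - r(u)}\bigr] = 1 - \tfrac{\epsilon}{2}\operatorname{tr}(\Gamma^{-1} H) + \tfrac{\epsilon}{2}\, g^T\Gamma^{-1} g + O(\epsilon^{3/2}),
\]
where $\operatorname{tr}(\Gamma^{-1}H) = \Delta_\xi \Psi(\xi';f)(h)$ by definition of the weighted Laplacian and $g^T \Gamma^{-1} g = \|\nabla_\xi \Psi(\xi';f)(h)\|_\Gamma^2$.

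From there I apply $\log(1+x) = x + O(x^2)$ with $x = O(\epsilon)$, so the quadratic log-correction is absorbed into $O(\epsilon^{3/2})$. Multiplying by $-\epsilon$, taking the outer expectation over $\xi'\sim\hat{P}_n$, and dividing through by the common factor of $\epsilon$ (the \emph{rescaling} allowed by the theorem statement, with the $\epsilon$-independent constant $E_{\hat{P}_n}[\Psi(\xi';f)(h)\cdot 1]$ coming along for free from the first term in the log expansion) produces exactly \eqref{eq:3:smm-cue}.

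The main technical obstacle is not the algebra but justifying the Gaussian moment estimates and the $O(\epsilon^{3/2})$ bound uniformly in $\xi'$ despite the unboundedness of $u$. The hypothesis that $\Psi(\cdot;f)$ is continuously differentiable is really used together with the implicit integrability of $e^{-\Psi(\cdot;f)(h)}$ against the narrow Gaussian; one would need to assume enough regularity on $\Psi$ (at least piecewise $C^3$ with at-most-polynomial growth of its derivatives) so that the cubic Taylor remainder $r(u)$ and the growth of $\|u\|^3$ against the Gaussian yield a genuine $O(\epsilon^{3/2})$ control after exponentiation, e.g.\ by splitting the integral into a ball of radius $\epsilon^{1/2-\delta}$ around $\xi'$ (where Taylor gives a uniform bound) and its complement (whose Gaussian mass is super-polynomially small and kills any polynomial growth of $\Psi$). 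This concentration-of-measure argument is where all the non-trivial work sits; the expansion itself is a routine Laplace/small-noise calculation.
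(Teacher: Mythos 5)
Your computation is correct and reaches the same leading-order objective, but the route is genuinely different from the paper's. You expand $e^{-u^Tg-\frac{1}{2}u^THu-r(u)}$ as a power series in $u$ and integrate term by term against the fixed Gaussian $\mathcal{N}(0,\epsilon\Gamma^{-1})$, read off the Gaussian moments $E[u]=0$, $E[uu^T]=\epsilon\Gamma^{-1}$, then apply $\log(1+x)=x+O(x^2)$. The paper instead keeps the quadratic Taylor truncation of $\Psi$ inside the exponential and evaluates the resulting Gaussian integral \emph{exactly}: it absorbs the Hessian into a regularized covariance $\Omega_\epsilon := \Gamma + \epsilon H(\xi')$, completes the square after the substitution $\omega=\Omega_\epsilon^{1/2}\delta$, obtaining the closed form $(2\pi/\epsilon)^{d_\xi/2}|\det\Omega_\epsilon|^{-1/2}e^{-\Psi}e^{\frac{\epsilon}{2}G^T\Omega_\epsilon^{-1}G}$, and only then Taylor-expands $\Omega_\epsilon^{-1}=\Gamma^{-1}+O(\epsilon)$ and $\log|\det\Omega_\epsilon|=C+\epsilon\operatorname{Tr}(\Gamma^{-1}H)+O(\epsilon^2)$ via the $\log\det=\operatorname{Tr}\log$ identity. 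Your approach is more elementary: it needs neither the invertibility of $\Omega_\epsilon$ (and the associated smallness condition $\lambda_{\min}(\Gamma)>\epsilon\lambda_{\min}(H)$), nor the change of variables and square-completion, nor any determinant or operator-inverse expansions; it is a bare-hands Gaussian moment computation. The paper's approach has the aesthetic advantage of separating an exact intermediate formula from the $\epsilon$-asymptotics. Both proofs commit the same leap of faith at the same spot — the Taylor remainder $r(u)=O(\|u\|^3)$ is dropped from inside the exponential before integration — and you are the one who explicitly flags this and sketches the ball/complement concentration argument that would be needed to make the $O(\epsilon^{3/2})$ control honest; the paper silently asserts it. One cosmetic slip: $E_{\hat P_n}[\Psi(\xi';f)(h)]$ is not an ``$\epsilon$-independent constant'' (it depends on $f$ and $h$) — it is the zeroth-order term in $\epsilon$ that survives the final division, which is what you evidently mean.
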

Motivated by the classical 2-stage generalized method of moments (GMM) estimator \citep{hansen} we define the Sinkhorn Method of Moments by substituting the instrument function in the second term in \eqref{eq:3:smm-cue} by a first stage estimate $\tilde{f}$. We will show below that this does not harm the consistency and convergence properties of our method. Additionally, we add regularization on the instrument function $-\frac{\lambda}{2} \|h \|_\mathcal{H}^2$ to ensure that the optimization over $h$ is well behaved on finite samples.
\begin{definition}[SMM] \label{def:3:smm}
    Let $\tilde{f} \in \mathcal{F}$ denote a first-stage estimate of $f_0 \in \mathcal{F}$, then we define the Sinkhorn Method of Moments (SMM) estimator as the solution of the saddle-point problem 
    \begin{align}
    f^{\mathrm{SMM}} = \argmin_{f \in \mathcal{F}} \max_{h \in \mathcal{H}} M(f,h) - \epsilon \mathcal{R}(\tilde{f},h) \label{eq:3:smm}
    \end{align}
    with 
    \begin{align*}
    M(f,h) &= E_{\ppn}\left[\left( I+\frac{\epsilon}{2} \Delta_\xi \right) \Psi(\xi;f)(h) \right] \\
    \mathcal{R}(\tilde{f},h) &= \frac{1}{2} E_{\ppn}\left[ \| \nabla_\xi \Psi(\xi;\tilde{f})(h) \|_{\Gamma}^2 \right]  + \frac{\lambda}{2\epsilon} \| h \|_{\mathcal{H}}^2,
    \end{align*}
    where as before $\Psi(\xi;f)(h) = \psi(t,y;f)^T h(z)$.
\end{definition}
By using the 2-stage GMM-style estimator we shift most of the computational complexity into the optimization of the instrument function $h \in \mathcal{H}$. The optimization over the possibly high-dimensional model remains simple and even is a convex program, whenever $f$ has a convexity preserving parameterization, e.g., for linear models. In practice, if \eqref{eq:3:smm} is optimized with stochastic gradient methods, one can dynamically update the first stage estimate $\tilde{f}$ using the result from the previous iteration.
In the context of CMR estimation this GMM-inspired two stage procedure is a popular approach to stabilize the training~\citep{bennett2020deep,lewis2018adversarial,bennett2020variational}.
Note that without the 2-stage adaptation we would obtain an estimator similar in spirit to the continuous updating GMM estimator of \citet{hansen-finite-samples} or the FGEL estimator of \citet{kremer2022functional}, which can be harder to train in practice~\citep{hall2004generalized}.
The objective \eqref{eq:3:smm} involves a gradient and a Laplacian with respect to the data, which allows the method to take into account the geometry of the moment violation with respect to the data manifold. As we maximize the objective over $h \in \mathcal{H}$, we promote instrument functions which correspond to local minima of the moment violation $\psi(t,y;f)^T h(z)$ with respect to the data. 
Generally for CMR estimators the instrument function is responsible for translating the data into a learning signal for the model $f$. Choosing $h$ in a local minimum w.r.t.\ the data means that we attribute less importance to data points that lead to large increases in the moment violation when perturbed slightly. This makes the model less vulnerable to poisoned data and adversarial attacks. 
SMM's property to take into account how the learning signal changes in proximity of the data is unique compared to related estimators which are blind towards the geometry of the data manifold as they are based on reweighting the existing data \citep{bennett2020deep,lewis2018adversarial,Dikkala20:Minimax,bennett2020variational,kremer2022functional} or sampling additional data points~\citep{pmlr-v202-kremer23a} respectively.

\subsection{Consistency} \label{sec:3:consistency}
The following assumptions allow us to guarantee consistency and derive a convergence rate of our 2-stage estimator \eqref{eq:3:smm} in the parametric, uniquely identified setting. Suppose there exists a unique parameter $\theta_0 \in \Theta \subseteq \mathbb{R}^p$ for which $E[\psi(T,Y;\theta_0)|Z] = 0 \ P_Z\mathrm{-a.s.}$. In the following, let $x \in \mathcal{X} \subseteq \mathbb{R}^{d_x}$ denote the concatenation of $(t,y) \in \mathcal{T} \times \mathcal{Y}$ and let $i \in [m]$ be a shorthand for $i \in \{1, \ldots, m \}$. Further, we define the Jacobian of a vector-valued function $\psi: \mathcal{X} \times \Theta \rightarrow \mathbb{R}^m$ as $J_x \psi (x;\theta) \in \mathbb{R}^{m \times d_x}$.

\begin{assumption}[Identifiability] \label{as2:psi}
    $\theta_0 \in \Theta$ is the unique solution to $E[\psi(X;\theta)| Z] = 0 \ P_Z\mathrm{-a.s.}$; $\Theta$ is compact; $\psi(X;\theta)$ is continuous in $\theta$ everywhere w.p.1.
\end{assumption}
This is a standard assumption that provides identifiability of the true parameter $\theta_0$.

\begin{assumption}[Data regularity] \label{as9:data}
    The space $\Xi = \mathcal{T} \times \mathcal{Y} \times \mathcal{Z} \subset \mathbb{R}^{d_\xi} $ is compact.
\end{assumption} 

\begin{assumption}[Smoothness w.r.t.\ data] \label{as4:regularity}
    The moment function $\psi(\cdot;\theta): \mathcal{T} \times \mathcal{Y} \rightarrow \mathbb{R}^m$ is $C^{\infty}$-smooth in the data for every $\theta \in \Theta$.
    Further the sets of functions $\{ \psi(\cdot;\theta)_l : \theta \in \Theta \}$ and $\{\left(J_x \psi(\cdot;\theta) \right)_{lr} : \theta \in \Theta \}$, are $P_0$-Donsker for every $l \in [m]$ and $r \in [d_x]$. 
\end{assumption}
Assumption~\ref{as9:data} and \ref{as4:regularity} ensure that the moment function and its derivatives are well-behaved with respect to the data. While the compactness of the data space might be violated in practice, usually one can construct a sufficiently large compact set that contains the data with high probability. 

\begin{assumption} \label{as7:covariance}
    The matrix $V(Z;\theta) \in \mathbb{R}^{m \times m}$ defined as
    \begin{align}
        V(Z;\theta) = E[J_x \psi (X;\theta) \Gamma^{-1} J_x \psi(X;\theta)^T |Z] \label{eq:3:gradient-cov}
    \end{align}
    is non-singular for $\theta \in \{\theta_0, \bar{\theta}\}$ w.p.1, where $\bar{\theta}$ is an initial parameter estimate defined in Assumption~\ref{as6:first-estimate}.
\end{assumption}
This corresponds to the common assumption of a non-singular covariance matrix required by related estimators~\citep{newey04,bennett2020variational,kremer2022functional}, 
but, here, imposed on the covariance of the data-Jacobian.

\begin{assumption}[Instrument function] \label{as1:instrument}
    $\mathcal{H} = \bigoplus_{l=1}^m \mathcal{H}_i$ is a a sufficiently rich space of vector-valued functions such that equivalence between \eqref{eq:3:cmr} and \eqref{eq:3:vmr} holds. Further for $l \in [m]$, $h \in \mathcal{H}_l$ is $C^\infty$-smooth and the unit ball $\mathcal{H}_{l,1} := \{h \in \mathcal{H}_l : \| h\|_{\mathcal{H}_l} \leq 1 \}$ as well as $\{J_z h: h \in \mathcal{H}_{l,1} \}$ are $P_0$-Donsker. 
\end{assumption}
This is fulfilled, for example, by choosing each $\mathcal{H}_l$ as the RKHS of a universal, integrally stricly positive definite kernel, e.g., the Gaussian kernel, which we will formalize later. For neural network instrument function classes, equivalence between the variational and conditional formulations can be shown on basis of universal approximation theorems~\citep{yarotsky2017error,yarotsky2018optimal}.
In this case $\mathcal{H}_{l,1}$, $C^\infty$-smoothness can be realized by using smooth activation functions.

\begin{assumption}[Regularization] \label{as6:first-estimate}
One has access to a consistent first-stage parameter estimate $\bar{\theta}_n \overset{p}{\rightarrow} \bar{\theta}$ for which $E\left[\| \psi(X;\bar{\theta}_n) - \psi(X;\bar{\theta}) \|_\infty \right] = O_p(n^{-\zeta})$ and  $E \left[ \|J_x\psi(X;\bar{\theta}_n) - J_x\psi(X;\bar{\theta}) \|_\infty \right] = O_p(n^{-\zeta})$ with $0<\zeta \leq 1/2$. Choose $\lambda_n = O_p(n^{-\rho})$ with $0 < \rho < \zeta$.
\end{assumption}

For linear IV regression this implies $\|\bar{\theta}_n - \bar{\theta} \|_\infty = O_p(n^{-\zeta})$, which means $\bar{\theta}_n$ has to be a $n^{-\zeta}$-consistent estimator for $\bar{\theta}$, which can be any parameter for which \eqref{eq:3:gradient-cov} is non-singular, e.g., the true parameter $\theta_0$.

\begin{assumption}[Smoothness w.r.t.\ $\theta$] \label{as10:smooth-param}
    $\theta_0 \in \operatorname{int}(\Theta)$; 
    $\psi(x;\theta)$ is continuously differentiable in a neighborhood $\bar{\Theta}$ of $\theta_0$; and 
    $E[\sup_{\theta \in \bar{\Theta}} \| J_\theta \psi(X;\theta) \|^2 | Z] < \infty$ w.p.1; $\operatorname{rank}\left(E[J_\theta \psi(X;\theta_0) |Z] \right) = p$, w.p.1.
\end{assumption}
This smoothness assumption allows us to translate the convergence rate of the moment functional into a convergence rate for the parameter estimate.

With that, we are ready to state the consistency theorem for our estimator.

\begin{theorem}[Consistency] \label{th:3:consistency}
    Let Assumptions~\ref{as2:psi}-\ref{as6:first-estimate} be satisfied. For any $0 < \epsilon_1 < \epsilon_2$, choose $\epsilon \sim \operatorname{Uniform}([\epsilon_1, \epsilon_2])$.
    Then the SMM estimator $\hat{\theta}$ converges to the true parameter $\theta_0$ in probability $\hat{\theta} \overset{p}{\rightarrow} \theta_0$.
    
    If additionally Assumption~\ref{as10:smooth-param} is satisfied, then $\|\hat{\theta} - \theta_0 \| = O_{p}(n^{-1/2})$.
\end{theorem}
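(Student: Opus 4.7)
I follow the standard profile M-estimator route: (i) concentrate out $h$ to get a criterion $\widehat{Q}_n(\theta)$, (ii) prove uniform convergence to a population limit $Q_\infty(\theta)$, (iii) identify $Q_\infty$'s unique minimizer as $\theta_0$, and (iv) Taylor-expand the first-order condition to get the rate.

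\emph{Concentrating out $h$.} The inner program in \eqref{eq:3:smm} is strictly concave quadratic in $h$: $M(\theta,h)$ is linear in $h$ (since $\Psi(\xi;\theta)(h)=\psi(t,y;\theta)^\top h(z)$ is linear in $h$ and $\Delta_\xi$ preserves linearity), while $-\epsilon\mathcal{R}(\tilde\theta,h)$ is strictly concave quadratic by virtue of the $\lambda_n\|h\|_\mathcal{H}^2$ ridge and the data-gradient penalty. Via Riesz representation the inner objective has the form $\langle B_n(\theta,\epsilon),h\rangle-\tfrac12\langle h,A_n(\tilde\theta,\epsilon,\lambda_n)h\rangle$, giving a closed-form profile $\widehat{Q}_n(\theta)=\tfrac12\,B_n(\theta,\epsilon)^\top A_n(\tilde\theta,\epsilon,\lambda_n)^{-1}B_n(\theta,\epsilon)$, so that $\hat\theta=\argmin_{\theta\in\Theta}\widehat{Q}_n(\theta)$ is a standard extremum estimator.

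\emph{Uniform convergence.} Assumptions~\ref{as9:data}--\ref{as4:regularity} (compact $\Xi$, smooth $\psi$, $P_0$-Donsker $\{\psi(\cdot;\theta)_l\}$ and $\{(J_x\psi)_{lr}\}$) together with Assumption~\ref{as1:instrument} (smooth $h$, Donsker $\mathcal{H}_{l,1}$ and $\{J_z h\}$) feed into standard empirical-process bounds to give $\sup_\theta\|B_n(\theta,\epsilon)-B(\theta,\epsilon)\|_{\mathcal{H}^\ast}=O_p(n^{-1/2})$ and the analogous operator-norm estimate for $A_n$. Assumption~\ref{as7:covariance} provides a uniform spectral lower bound on $A(\bar\theta,\epsilon)$, and combined with the first-stage consistency of $\tilde\theta_n$ and the choice $\lambda_n=O_p(n^{-\rho})$, $\rho<\zeta$, from Assumption~\ref{as6:first-estimate}, this bounds $\|A_n^{-1}\|$ with high probability. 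Hence $\widehat{Q}_n\to Q_\infty$ uniformly on $\Theta$.

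\emph{Identification, consistency, and rate.} The leading part of $Q_\infty(\theta)$ is a weighted squared norm of the moment functional $\theta\mapsto E[\Psi(\cdot;\theta)(\cdot)]$, which by Assumption~\ref{as1:instrument} vanishes iff $\theta=\theta_0$; the $\tfrac{\epsilon}{2}E[\Delta_\xi\Psi]$ term enters as a lower-order perturbation, and the random draw $\epsilon\sim\operatorname{Uniform}([\epsilon_1,\epsilon_2])$ is used to exclude a measure-zero set of $\epsilon$ at which this correction could align with a null direction of the leading signal. The well-separated minimum combined with Step~(ii) then yields $\hat\theta\overset{p}{\to}\theta_0$ via a standard argmin-continuity argument (Newey--McFadden). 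Adding Assumption~\ref{as10:smooth-param}, the FOC $\nabla_\theta\widehat{Q}_n(\hat\theta)=0$ can be Taylor-expanded about $\theta_0$: the full-rank condition on $E[J_\theta\psi(X;\theta_0)\mid Z]$ makes $\nabla^2_\theta Q_\infty(\theta_0)$ positive definite, the Donsker classes give stochastic equicontinuity, and $\nabla_\theta\widehat{Q}_n(\theta_0)=O_p(n^{-1/2})$ by the CLT for empirical moments; a delta-method step then yields $\|\hat\theta-\theta_0\|=O_p(n^{-1/2})$.

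\emph{Main obstacle.} The delicate piece lives in the identification step: naively, the $O(\epsilon)$ Laplacian correction makes $Q_\infty(\theta_0)>0$ and could shift the argmin by $O(\epsilon)$, which at fixed $\epsilon$ would be incompatible with plain consistency. Closing this gap will require exploiting the variational/integration-by-parts structure of the Laplacian term against the density of $P_0$ (available under the compactness of Assumption~\ref{as9:data}), the randomization over $\epsilon$, and the joint calibration of $\lambda_n$, $\rho$, and the first-stage rate $\zeta$. By comparison, once consistency is in hand the rate step is routine.
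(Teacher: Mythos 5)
Your profile M-estimator route runs into a genuine obstacle that you yourself flag but do not resolve, and the resolution you gesture at is not what the argument actually needs. At a fixed $\epsilon>0$ the population profile $Q_\infty$ is \emph{not} minimized at $\theta_0$: the Laplacian correction contributes terms of the form $E\!\left[(\Delta_x\psi(X;\theta_0))^T h(Z)\right]$, and $E[\psi(X;\theta_0)\mid Z]=0$ does not imply $E[\Delta_x\psi(X;\theta_0)\mid Z]=0$. So the leading-plus-perturbation decomposition you describe yields an $O(\epsilon)$ shift in the population argmin, and uniform convergence $\widehat{Q}_n\to Q_\infty$ plus argmin continuity would only give $\hat\theta\to\theta_\epsilon\neq\theta_0$. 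Randomizing $\epsilon$ over $[\epsilon_1,\epsilon_2]$ cannot repair this — a draw from a continuous distribution is still a strictly positive number, so the bias persists almost surely — and it is not clear that an integration-by-parts identity against the $P_0$ density cancels the term either, since nothing in Assumptions~\ref{as2:psi}--\ref{as6:first-estimate} places conditions on that density.

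The paper's proof takes a different route that avoids any population-level identification of the argmin of the perturbed criterion. It never establishes uniform convergence of a concentrated criterion $\widehat{Q}_n$. Instead it works entirely with the \emph{empirical} profile: Lemma~\ref{lemma:3:rate-h} shows that whenever $\|E_{\ppn}[\Psi(\cdot;\theta)]\|_{\mathcal{H}^\ast}=O_p(n^{-1/2})$ (in particular at $\theta=\theta_0$, by Lemma~\ref{lemma:3:donsker-psi}), the inner maximum over $h$ is $O_p(n^{-1})$; this uses the spectral lower bound on $\widehat{\Omega}_{\lambda_n}$ from Lemma~\ref{lemma:3:covariance}. Then Lemma~\ref{lemma:3:rate-psi} lower-bounds $\widehat{D}(\hat\theta, h)$ at a hand-picked $h$, a scaled Riesz representer of the empirical moment functional $\hat{\Psi}$, by $C'\sigma_n\|\hat\Psi\|^2_{\mathcal{H}^\ast}-\tfrac{C\epsilon}{2}\sigma_n^2\|\hat\Psi\|^2_{\mathcal{H}^\ast}$. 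Sandwiching this between $0$ and $\max_h\widehat{D}(\theta_0,h)=O_p(n^{-1})$ yields $\|\hat\Psi\|_{\mathcal{H}^\ast}=O_p(n^{-1/2})$ directly, after which consistency follows from a triangle inequality, Lemma~\ref{lemma:3:donsker-psi}, and identifiability (Assumptions~\ref{as2:psi} and~\ref{as1:instrument}); the rate then follows from a mean-value expansion in $\theta$ under Assumption~\ref{as10:smooth-param}. The randomization of $\epsilon$ plays a much more modest role than you assign it: it only ensures that the constant $C'=1+\tfrac{\epsilon}{2}\lambda_{\min}(\Delta_\xi)$ appearing in that lower bound is nonzero almost surely, which is a nondegeneracy condition on a single constant, not a mechanism for cancelling the Laplacian bias. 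Because your proposal does not supply this comparison argument and leaves the identification gap open, it does not constitute a complete proof of the theorem.
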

The consistency result is independent of the choice of instrument function space $\mathcal{H}$ as long as it fulfills Assumption~\ref{as1:instrument}. We now discuss two different implementations based on kernel methods and neural networks.

\subsection{Kernel-SMM} \label{sec:3:kernel-smm}
Choosing $\mathcal{H}$ as the RKHS of a suitable kernel, we can guarantee equivalence between the conditional and variational moment restrictions formulations \eqref{eq:3:cmr} and \eqref{eq:3:vmr}. On top of that, for RKHS instrument functions we can employ a representer theorem and carry out the optimization over the instrument function $h \in \mathcal{H}$ in closed form. The resulting estimator can be obtained as the solution of a simple minimization problem bearing close resemblance to the optimally weighted 2-stage GMM estimator but taking into account the geometry of the moment violation with respect to the data. Before deriving the result we provide the necessary background on reproducing kernel Hilbert spaces (RKHS).

\paragraph{Reproducing kernel Hilbert space}
An RKHS $\mathcal{H}$ is a Hilbert space of functions $h: \mathcal{Z} \rightarrow \mathbb{R}$ in which point evaluation is a bounded functional. With every RKHS one can associate a positive semi-definite kernel $k(\cdot, \cdot): \mathcal{Z} \times \mathcal{Z} \rightarrow \mathbb{R}$ with the reproducing property, i.e., for any $h \in \mathcal{H}$ we have $h(z) = \langle h, k(z,\cdot) \rangle_\mathcal{H}$. A kernel is called universal if its RKHS is dense in the set of all continuous real-valued functions~\citep{universalkernel06}. Further, a kernel is called integrally strictly positive definite (ISPD) if for any $h \in \mathcal{H}$ with $0 <\| h \|_\mathcal{H}^2 < \infty$, we have $\int_\mathcal{Z} h(z) k(z,z') h(z') \dd z \dd z' >0$. Refer to, e.g., \citet{scholkopf2002learning} and \citet{berlinet2011reproducing} for comprehensive introductions.

The following proposition specifies the properties of an RKHS for which Assumption~\ref{as1:instrument} is satisfied. 
\begin{proposition} \label{prop:3:rkhs}
    Let $\mathcal{Z} \subset \mathbb{R}^{d_z}$ be compact. Then,
    the instrument function space $\mathcal{H} = \bigoplus_{l=1}^m \mathcal{H}_l$, where each $\mathcal{H}_l$ corresponds to the RKHS of universal, integrally strictly positive definite kernel $k_l$, $l \in [m]$ fulfills Assumption~\ref{as1:instrument}.
\end{proposition}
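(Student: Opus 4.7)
My plan is to verify the four clauses of Assumption~\ref{as1:instrument} component by component for a single summand $\mathcal{H}_l$; the direct-sum structure $\mathcal{H} = \bigoplus_l \mathcal{H}_l$ lifts each property to the full space trivially. Throughout I will use the tacit requirement, satisfied by the Gaussian kernel singled out in the remark after Assumption~\ref{as1:instrument}, that $k_l$ is $C^\infty$-smooth on $\mathcal{Z}\times\mathcal{Z}$; this will be needed both for the smoothness of RKHS elements and for the Donsker entropy estimates.

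For the equivalence \eqref{eq:3:cmr}$\Leftrightarrow$\eqref{eq:3:vmr}, only the direction \eqref{eq:3:vmr}$\Rightarrow$\eqref{eq:3:cmr} needs work. I would introduce the kernel mean embedding of the $l$-th residual component, $\mu_l := E[\psi(X;f)_l\, k_l(Z,\cdot)] \in \mathcal{H}_l$; the reproducing property gives $\langle h, \mu_l\rangle_{\mathcal{H}_l} = E[\psi_l(X;f)\, h(Z)]$, so \eqref{eq:3:vmr} forces $\mu_l = 0$, equivalently
\[
\|\mu_l\|_{\mathcal{H}_l}^2 \;=\; \iint g_l(z)\, g_l(z')\, k_l(z,z')\, dP_Z(z)\, dP_Z(z') \;=\; 0,
\]
with $g_l(z) := E[\psi_l(X;f) \mid Z=z]$. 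Integral strict positive definiteness of $k_l$ then forces $g_l = 0$ $P_Z$-a.s., yielding \eqref{eq:3:cmr}. The $C^\infty$-smoothness of every $h \in \mathcal{H}_l$ is immediate from the reproducing identity for derivatives, $\partial^\alpha h(z) = \langle h, \partial_1^\alpha k_l(z,\cdot)\rangle_{\mathcal{H}_l}$, which is well-defined and continuous in $z$ whenever $\partial_1^\alpha \partial_2^\alpha k_l$ is continuous on the diagonal.

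For the two remaining Donsker clauses, I first obtain uniform boundedness: compactness of $\mathcal{Z}$ and continuity of $k_l$ give $\kappa_l := \sup_z k_l(z,z) < \infty$, and Cauchy-Schwarz on the reproducing formula yields $\|h\|_\infty \le \sqrt{\kappa_l}$ for every $h \in \mathcal{H}_{l,1}$. The analogous identity for derivatives bounds $\sup_z |\partial_r h(z)|$ by $\sqrt{\sup_z \partial_r^{(1)}\partial_r^{(2)} k_l(z,z)} < \infty$, so the Jacobian class $\{J_z h : h \in \mathcal{H}_{l,1}\}$ inherits a uniform envelope. Because $k_l$ is $C^\infty$, every element of $\mathcal{H}_{l,1}$ lies in a Hölder (equivalently Sobolev) ball in $C^m(\mathcal{Z})$ with derivatives up to order $m$ uniformly controlled on the unit ball; invoking the classical bracketing entropy estimate for uniformly bounded smooth function classes on compact domains, $\log N_{[]}(\epsilon, \mathcal{H}_{l,1}, L^2(P_0)) \lesssim \epsilon^{-d_z/m}$, and choosing $m > d_z/2$, makes $\int_0^\infty \sqrt{\log N_{[]}(\epsilon, \mathcal{H}_{l,1}, L^2(P_0))}\, d\epsilon$ finite, which is the Donsker criterion. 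The identical argument applied to each coordinate $\partial_r h$ handles the Jacobian class.

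The main obstacle is this final entropy step: translating smoothness of $k_l$ into a bracketing-number rate for $\mathcal{H}_{l,1}$ in $L^2(P_0)$ is not automatic. The cleanest route is to pass through the $L^\infty$-entropy of the Sobolev/Hölder ball containing $\mathcal{H}_{l,1}$ via kernel-specific covering-number bounds (classical for Gaussian and Matérn kernels) and then use the uniform envelope to convert $L^\infty$ brackets into $L^2(P_0)$ brackets. Everything else is standard RKHS bookkeeping.
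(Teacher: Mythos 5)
Your proposal is correct in substance but takes a genuinely different route from the paper. The paper's own proof is a three-sentence citation chain: the equivalence \eqref{eq:3:cmr}$\Leftrightarrow$\eqref{eq:3:vmr} is taken from Theorem~3.9 of \citet{kremer2022functional}, the Donsker property of $\mathcal{H}_{l,1}$ from Lemma~17 of \citet{bennett2020variational}, and the Donsker property of $\{J_z h : h \in \mathcal{H}_{l,1}\}$ from an argument analogous to Lemma~17 of \citet{bennett2020deep}. You instead reconstruct all three ingredients from scratch: the kernel-mean-embedding argument with ISPD for the equivalence, the derivative reproducing identity $\partial^\alpha h(z) = \langle h, \partial_1^\alpha k_l(z,\cdot)\rangle$ for smoothness and uniform envelopes, and the Hölder-ball bracketing-entropy bound $\log N_{[]}(\epsilon) \lesssim \epsilon^{-d_z/m}$ with $m > d_z/2$ for the Donsker clauses. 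This buys you self-containedness and transparency; what the paper's approach buys is brevity and the reassurance that each piece has been carefully checked elsewhere (in particular, Lemma~17 of \citet{bennett2020variational} is exactly the entropy argument you flag as ``not automatic'' in your final paragraph).

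Two remarks worth noting. First, the paper's definition of ISPD integrates against Lebesgue measure with $h$ ranging over the RKHS, whereas your argument needs $\iint g_l(z) g_l(z') k_l(z,z')\, dP_Z(z)\, dP_Z(z') = 0 \Rightarrow g_l = 0$ $P_Z$-a.s., i.e.\ strict positivity of the integral bilinear form over finite signed measures, not just over RKHS functions against Lebesgue measure. These are compatible for the kernels the paper has in mind (Gaussian on a compact domain) but they are not literally the same statement, so if you wanted to make your argument airtight against the paper's stated definition you would have to bridge that gap (or appeal to the characterization that universal kernels on compact domains are characteristic, which is what the cited Theorem~3.9 does). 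Second, you leave the bracketing-number rate for $\mathcal{H}_{l,1}$ as the ``main obstacle''; the route you suggest (pass through Sobolev/Hölder covering numbers, then convert $L^\infty$ brackets to $L^2(P_0)$ brackets via the uniform envelope) is correct and is essentially what the cited lemma does. Neither point is a genuine gap in the sense of a step that would fail; the argument is sound.
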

Now, for a representer theorem to hold, in the following, we place infinite cost $\gamma_z = \infty$ on the transport of $z \in \mathcal{Z}$, i.e., we fix the instruments at their empirical locations. 
As long as $\gamma_t, \gamma_y < \infty$ this still allows for varying the functional relation between $Z$ and $T$ as well as $T$ and $Y$ in the training data.
In the following, define the block-diagonal matrix $\Gamma_x: = \operatorname{diag}(\{\gamma_t I_{d_t}, \gamma_y I_{d_y} \}) \in \mathbb{R}^{d_x}$ and the weighted Laplace operator $\Delta_x = \nabla_x \cdot \left(\Gamma_x^{-1} \nabla_x \right)$.
\\

\begin{theorem}[Kernel-SMM] \label{th:3:kernel-smm}
    Let $\mathcal{H} = \bigoplus_{l=1}^m \mathcal{H}_l$ be the direct sum of $m$ reproducing kernel Hilbert spaces with kernels $k_l: \mathcal{Z} \times \mathcal{Z} \rightarrow \mathbb{R}$.
    Let $\tilde{f} \in \mathcal{F}$ denote a first stage estimate of $f_0$ and let $\gamma_z = \infty$.
    Define $\psi_\Delta(f) \in \mathbb{R}^{nm}$, $L \in \mathbb{R}^{nm \times nm}$ and $Q(f) \in \mathbb{R}^{nm \times nm}$ with entries
    \begin{align*}
        \psi_\Delta(f)_{i \cdot l} =& \left( I + \frac{\epsilon}{2} \Delta_x \right)\psi_l(x_i;f)  \\
        L_{(i\cdot l), (j \cdot r)} =& \delta_{lr}  k_l(z_i,z_j) \\ 
        Q(f)_{ (i\cdot l) , (j \cdot r)} =& \frac{1}{n} \sum_{k=1}^n \sum_{s=1}^{d_x} \Big\{  k_l(z_i, z_k) \nabla_{x_s} \psi_l(x_k;{f})  
         \left(\Gamma_x^{-1}\right)_{ss} \nabla_{x_s} \psi_r(x_k; f) k_r(z_k, z_j) \Big\}.
    \end{align*}
    Then the Sinkhorn profile is given by
    \begin{align}
        R_{Q(\tilde{f})}(f) = \frac{1}{2n^2}  \psi_\Delta(f)^T L \left(Q(\tilde{f}) + \frac{\lambda}{\epsilon} L \right)^{-1} L \psi_\Delta(f). \label{eq:3:kernel-smm}
    \end{align}
\end{theorem}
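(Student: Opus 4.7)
The plan is to reduce the inner maximization over $h \in \mathcal{H}$ in Definition~\ref{def:3:smm} to a finite-dimensional concave quadratic program via a representer theorem, and then write down its closed-form maximizer by solving a linear system.

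First I would specialize the SMM objective to the choice $\gamma_z = \infty$. The weighted Laplacian $\Delta_\xi = \sum_{w}\gamma_w^{-1}\Delta_w$ collapses to $\Delta_x$, and the weighted gradient norm $\|\nabla_\xi \Psi(\xi;\tilde{f})(h)\|_\Gamma^2$ retains only its $x$-block. Since $\Psi(\xi;f)(h) = \sum_l \psi_l(x;f)\, h_l(z)$, the operator $\Delta_x$ only acts on $\psi$, so $M(f,h)$ becomes $\tfrac{1}{n}\sum_{i,l}\psi_\Delta(f)_{i\cdot l}\, h_l(z_i)$, while $\mathcal{R}(\tilde{f},h)$ is a quadratic form in the values $\{h_l(z_k)\}$ plus $\tfrac{\lambda}{2\epsilon}\sum_l\|h_l\|_{\mathcal{H}_l}^2$. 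Crucially, no derivative of $h$ appears; each $h_l$ enters the objective only through pointwise evaluations at $\{z_i\}_{i=1}^n$ and its RKHS norm.

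Next, I would invoke a standard representer theorem, which applies because the objective is strictly concave in $h$ (by $\lambda>0$) and depends on each $h_l$ only through $\{h_l(z_i)\}$ and $\|h_l\|_{\mathcal{H}_l}^2$. The maximizer therefore has the form $h_l = \sum_{j=1}^n \alpha_{jl}\, k_l(z_j,\cdot)$. Packing the coefficients into $\boldsymbol{\alpha}\in\mathbb{R}^{nm}$ with the block index $(j\cdot l)$ of the theorem, the reproducing property gives $h_l(z_i)=(L\boldsymbol{\alpha})_{i\cdot l}$ and $\sum_l\|h_l\|_{\mathcal{H}_l}^2 = \boldsymbol{\alpha}^T L\boldsymbol{\alpha}$. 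Substitution yields $M(f,h)=\tfrac{1}{n}\psi_\Delta(f)^T L\boldsymbol{\alpha}$, while expanding $\nabla_x\Psi(\xi_k;\tilde{f})(h)=\sum_l \nabla_x\psi_l(x_k;\tilde{f})\, h_l(z_k)$ and averaging the weighted norm produces exactly the double-kernel sandwich $\boldsymbol{\alpha}^T Q(\tilde{f})\boldsymbol{\alpha}$ encoded in the stated entries of $Q(\tilde{f})$.

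The inner maximization then becomes
\begin{equation*}
    \max_{\boldsymbol{\alpha}\in\mathbb{R}^{nm}}\ \tfrac{1}{n}\psi_\Delta(f)^T L\boldsymbol{\alpha} - \tfrac{\epsilon}{2}\boldsymbol{\alpha}^T Q(\tilde{f})\boldsymbol{\alpha} - \tfrac{\lambda}{2}\boldsymbol{\alpha}^T L\boldsymbol{\alpha}.
\end{equation*}
Setting the gradient to zero gives the first-order condition $(\epsilon Q(\tilde{f})+\lambda L)\boldsymbol{\alpha}^\ast = \tfrac{1}{n}L\psi_\Delta(f)$, hence $\boldsymbol{\alpha}^\ast = \tfrac{1}{n}\bigl(\epsilon Q(\tilde{f})+\lambda L\bigr)^{-1}L\psi_\Delta(f)$, and using this condition inside the objective collapses the quadratic and linear parts into a single bilinear expression, which is~\eqref{eq:3:kernel-smm} up to the rescaling implicit in the theorem's $1/\epsilon$ factorization of the inverse. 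The only real obstacles are bookkeeping: checking that the cross-derivative products $\nabla_{x_s}\psi_l\,(\Gamma_x^{-1})_{ss}\,\nabla_{x_s}\psi_r$ summed over $k$ reassemble into the $(i\cdot l),(j\cdot r)$ entries of $Q(\tilde{f})$, and ensuring invertibility of $\epsilon Q(\tilde{f})+\lambda L$ on the relevant subspace, which is immediate from $\lambda>0$ and positive semidefiniteness of both $Q(\tilde{f})$ and $L$, with any null space of $L$ being irrelevant since $\mathbf{h}=L\boldsymbol{\alpha}$.
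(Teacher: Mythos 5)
Your proposal is correct and follows essentially the same route as the paper's proof: specialize to $\gamma_z=\infty$, invoke the representer theorem to parametrize $h_l = \sum_j \alpha_{jl} k_l(z_j,\cdot)$, rewrite the inner maximization as the concave quadratic $\tfrac{1}{n}\psi_\Delta^T L\boldsymbol{\alpha} - \tfrac{\epsilon}{2}\boldsymbol{\alpha}^T Q(\tilde f)\boldsymbol{\alpha} - \tfrac{\lambda}{2}\boldsymbol{\alpha}^T L\boldsymbol{\alpha}$, solve the first-order condition, and substitute back, with the final $\epsilon$-rescaling producing the $(Q+\tfrac{\lambda}{\epsilon}L)^{-1}$ form. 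Your added remark on why $\epsilon Q(\tilde f)+\lambda L$ is invertible is a small bonus the paper leaves implicit, but the argument is otherwise the same.
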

Compared to the general saddle point formulation \eqref{eq:3:smm} the kernelized version \eqref{eq:3:kernel-smm} has the significant advantage that it only involves a minimization over the model parameters and thus avoids the difficulties of mini-max optimization~\citep{daskalakis2018training}. Algorithm~\ref{alg:kernel-smm} details the implementation of the multi-stage Kernel-SMM approach. In order to minimize the number of hyperparameters, we implement the gradient descent step with the limited memory BFGS method~\citep{liu1989limited}. We empirically observed that the $n$-step estimator effectively converges with the second iteration. 

\begin{algorithm}[t]
   \caption{$n$-stage Kernel-SMM}
   \label{alg:kernel-smm}
\begin{algorithmic}
  \STATE {\bfseries Input:} Initial function $\tilde{f}$, hyperparameters $\epsilon$, $\lambda$, $\gamma_x$  
  \FOR{$i=1,\ldots,n$}
    \STATE Compute $Q(\tilde{f})$
    \WHILE{not converged}
        \STATE $f \gets \operatorname{GradientDescent}(f, \nabla_f R_{Q(\tilde{f})}(f))$
    \ENDWHILE
        \STATE $\tilde{f} \gets f$
  \ENDFOR
  \STATE {\bfseries Output:} Function estimate $f$
\end{algorithmic}
\end{algorithm}

\subsection{Neural-SMM}
A particularly interesting alternative choice of instrument function space are neural network classes, as they can represent highly flexible functions while allowing for optimization via mini-batch stochastic gradient methods.
As demonstrated by related works~\citep{lewis2018adversarial,bennett2020deep,kremer2022functional}, such neural network-based approaches can lead to powerful and scalable estimators that may outperform the corresponding kernel method on large samples.
On the downside, they tend to be difficult to train due to the instability and hyperparameter sensitivity of mini-max optimization. This is particularly problematic for IV regression, as in contrast to standard supervised learning, it is non-trivial to define suitable validation metrics to set these hyperparameters. As a result, compared to \eqref{eq:3:kernel-smm}, those estimators require more attention and careful evaluation which makes them less suitable as plug-and-play IV estimators for practitioners. As the primary focus of this work is to introduce a new geometry-aware learning paradigm for IV regression independent of the instrument function class, we consider the simpler kernel version in the following and defer results for the Neural-SMM estimator to Appendix~\ref{app:3:additional}.

\section{Experimental Results} \label{sec:3:results}
We benchmark the kernel version of our method against a selection of plug-and-play IV estimators including maximum moment restrictions (MMR) \citep{zhang2021maximum}, sieve miniminum distance (SMD)~\citep{ai2003efficient} as well as the kernel variational method of moments (VMM)~\citep{bennett2020variational}. Results for the neural network version and related estimators can be found in Appendix~\ref{app:3:additional}.
For all kernel methods we choose a radial basis function kernel $k(z,z') = \exp(-\eta \| z- z'\|_2^2)$, where we set $\eta$ according to the median heuristic~\citep{garreau2018large}. The remaining hyperparameters of all methods are set by using the MMR objective on a validation data set (see Appendix~\ref{app:3:details}). 
In all experiments we consider perturbations in the treatment variable $t$ and fix the other variables at their empirical values by setting $\gamma_y, \gamma_z = \infty$ for SMM. Implementations of our estimators are available at \url{https://github.com/HeinerKremer/sinkhorn-iv/}.
\paragraph{IV Regression with Corrupted Data}
We consider the SimpleIV experiment of \citet{bennett2020variational} with the following data generating process,
\begin{align}
    &Z = \sin(\pi Z_0 / 10)  \label{eq:3:exp1}\\
    &T = -0.75 Z_0 + 3.5 H + 0.14 \eta - 0.6 \nonumber \\
    &Y = f(T;\theta_0) - 10 U + 0.1 \eta_2 \nonumber 
\end{align}
where $\eta_1, \eta_2, U \sim N(0,I)$ and $Z_0 \sim \operatorname{Uniform}([-5, 5])$. The model is given by $f(t;\theta) = \theta^1 t^2 + \theta^2 t + \theta^3$ with $\theta_0 = [3.0, -0.5, 0.5]$. This is a typical IV problem, where the unobserved confounder $U$ induces a non-causal dependence between $T$ and $Y$.
To investigate the robustness against corrupted data, we sample training sets of 1000 points and exchange a proportion of the covariates $T$ by random values generated according to $\operatorname{Uniform}([t_\text{min}, t_\text{max}])$. Figure~\ref{fig:label-noise} shows the mean-squared error of the models trained with different methods over the proportion of random covariates in the training data. We observe that for no data corruption, all estimators perform similarly, with SMM providing a small advantage. With increasing proportion of corrupted data, SMM scales more favorably compared to the baselines. We provide more details and a hyperparameter sensitivity analysis in Appendix~\ref{app:3:details}.
\begin{figure}[t]
    \centering
    \includegraphics[width=.5\linewidth]{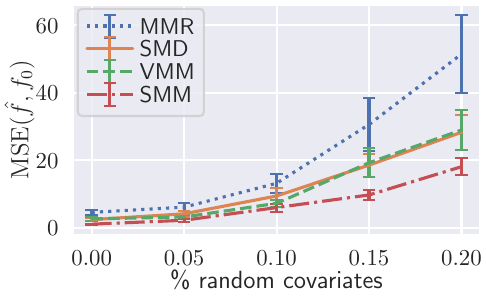}
    \caption{Robustness against corrupted data. We generate 1000 points from the process \eqref{eq:3:exp1} and substitute in a proportion of the data the treatment variable $T$ for a random value sampled uniformly over the domain. Lines and error bars correspond to the mean and standard error computed over $20$ training datasets.
    }
    \label{fig:label-noise}
\end{figure}
\paragraph{Adversarially Robust IV Regression}
We test the adversarial robustness of different IV estimators in the following setting.
Define $c = 0.2 I \in \mathbb{R}^{5\times 1}$ and the linear transformations $A \in \mathbb{R}^{1 \times 5}$, $B \in \mathbb{R}^{5 \times 1}$, with elements sampled according to $A \sim \operatorname{Uniform}([-1.5, 1.5])$ and $B \sim \operatorname{Uniform}([0.1, 0.3])$.
Consider the non-linear data generating process,
\begin{align*}
    Z &\sim \operatorname{Uniform}([-3, 3]) \\
    T &= B Z + C U + \eta_1 \\
    Y &= f_0(T) + U + \eta_2 
\end{align*}
with $U \sim N(0,1)$, $\eta_1,\eta_2 \sim N(0,0.1)$ and $f_0(t) = 1.5 \cos(At) + 0.1 At $. We approximate $f_0$ with a feed-forward neural network with $[20, 20, 3]$ hidden units and leaky ReLU activation functions. We train the network using different plug-and-play IV estimators and evaluate the adversarial robustness by running FGSM attacks~\citep{goodfellow2014explaining} in directions $\tilde{t}$ with strength $\epsilon \in [0, 1.0]$. Figure~\ref{fig:adversarial} shows that all IV estimators yield comparable mean-squared errors for $\epsilon=0$, clearly improving over the non-causal least squares (LSQ) solution (table). Moreover, for increasing attack strengths $\epsilon$, we see that SMM demonstrates stronger adversarial robustness than the SMD and VMM estimators. Interestingly, here, the MMR estimator which performed worse in the first experiments exhibits the least sensitivity towards adversarial perturbations. This might be understood by the fact that the MMR estimator corresponds to the limit case of SMM and VMM for $\lambda \rightarrow \infty$. Generally, strong regularization promotes flat functions which are less sensitive to the inputs, which might explain MMR's superior robustness here.
\begin{figure}[t]
    \centering
    \includegraphics{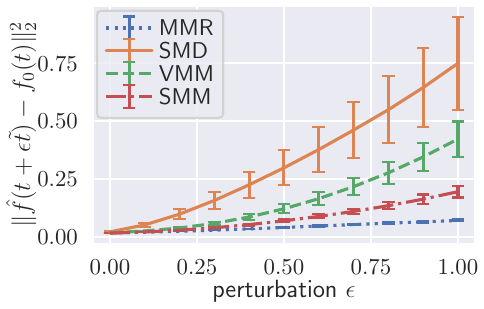}
    
    \begin{tabular}{cccccc}
      & LSQ & MMR & SMD & VMM & SMM \\
      \toprule
      MSE ($\epsilon=0$) &    
                   $0.45$ &  $0.014 $ & $ 0.018 $ & $ 0.012 $ & $0.012$ \\
    \bottomrule
    \end{tabular}
    \caption{Adversarial robustness of IV estimators. We use a training set of size $n=1000$ and evaluate the learned models over FGSM attacks with increasing strength $\epsilon$. Lines and error bars show the mean and standard error over $20$ random training datasets. The table contains the MSE in the perturbation-free case.}
    \label{fig:adversarial}
\end{figure}

In Appendix~\ref{app:3:additional} we provide results on a common modern IV benchmark that provides further evidence that SMM performs on par with state-of-the art estimators in standard IV settings. In this context, we also provide results for a Neural-SMM estimator, which proves to be competitive with state-of-the art deep learning approaches~\citep{bennett2020deep,kremer2022functional}.

\section{Related Work} \label{sec:3:related-work}
Instrumental variable regression has traditionally been addressed via the 2-stage least squares (2SLS) method, which limits both regression stages to linear models~\citep{angrist2008mostly}. Extensions to non-linear models have been provided by multiple works~\citep{AMEMIYA1974105}, recently based on density estimators~\citep{deepiv,singh2019kernel} and deep features~\citep{xu2021learning}.
As an alternative to 2SLS, estimators based on the conditional moment restriction formulation have been used based on either basis function expansions of $L^2$ \citep{ai2003efficient,Carrasco1,Carrasco2,otsu2011empirical} or machine learning models~\citep{bennett2020deep,Dikkala20:Minimax,muandet2020dual,kremer2022functional,pmlr-v202-kremer23a,bennett2020variational}.
Related to our Kernel-SMM estimator, multiple works have used RKHS functions as instrument models~\citep{Carrasco1,singh2019kernel,bennett2020variational,zhang2021maximum}, leading to similar formulations as our \eqref{eq:3:kernel-smm}. However, in contrast to ours, none of them take into account the geometry of the moment violation with respect to the data.  
Optimization over measure spaces by means of minimizing some notion of distributional distance between the optimization variable and an empirical distribution has recently attracted significant attention in the context of distributionally robust optimization~\citep{duchi2018statistics,sinha2018certifiable,MohajerinEsfahani2018,duchi2017variancebased,lamRecoveringBestStatistical2019,duchi2020learning}. On a higher level, one can distinguish between three types of approaches based on the respective distance notion (cf.\ Figure~\ref{fig:distances}): $\varphi$-divergences restrict the optimization variable to a finite dimensional vector of weights attributed to the data points and thus find optimal reweightings of the sample. Methods based on maximum-mean discrepancy~\citep{gretton2007kernel} and the Fisher-Rao metric~\citep{bauer2016uniqueness}, allow for creation and annihilation of probability mass~\citep{zhu2021kernel,pmlr-v202-kremer23a,yan2023learning}. Finally, methods based on optimal transport distances effectively allow to move around the data points in the data space~\citep{MohajerinEsfahani2018,sinha2018certifiable}.
While CMR estimation has been based on the previous two paradigms, to the best of our knowledge, our Sinkhorn Method of Moments is the first estimator based on the latter category. 
In a different context, empirical likelihood has previously been combined with Wasserstein distances to calibrate the radius of ambiguity sets in distributionally robust optimization (DRO)~\citep{blanchet2019robust}. However, their method does not extend to CMR estimation and neither does it make use of a regularized duality structure. From a mathematical perspective the derivation of our first duality result (Theorem~\ref{th:3:duality}) closely resembles the derivation of the dual Sinkhorn DRO estimator of \citet{wang2023sinkhorn}, which, nevertheless, addresses an entirely different problem. In addition, \citet{wang2023sinkhorn} relies on de-biasing techniques to optimize their objective, whereas we provided a form that can be directly optimized via stochastic gradient methods.

\section{Conclusion} \label{sec:3:conclusion}
Instrumental variable regression is an important concept in the field of causal inference, which motivates the development of estimators adapted to the intricacies of real-world datasets.
Notwithstanding recent mini-max estimators based on neural network instrument function classes showing convincing performance on benchmarks~\citep{Dikkala20:Minimax,bennett2020deep,kremer2022functional,pmlr-v202-kremer23a}, there remains a need for simple plug-and-play estimators that can be trained by practitioners without deep technical knowledge and with a manageable set of hyperparameters. We have extended the repertoire of such estimators by a method whose learning signal arises from an optimal transport geometry in the data space. We showed that our estimator exhibits favorable properties in presence of corrupted data or adversarial examples while maintaining performance competitive with state-of-the art approaches on standard benchmarks.
The simplicity of our plug-and-play estimator partially results from its kernel-based implementation which limits the scalability to large sample sizes. To address this, we provide a neural network-based implementation in the appendix, whose detailed analysis is left for future work.

\section*{Acknowledgements}
We thank Yassine Nemmour and Frederike L\"ubeck for helpful initial discussions on the project as well as Frederik Tr\"auble for insisting on using LogDet=TrLog somewhere.

\bibliography{refs}
\bibliographystyle{icml2024}

\newpage
\appendix
\onecolumn

\section{Experimental Details} \label{app:3:details}
\paragraph{Hyperparameters}
For SMM we choose the hyperparameters from the grid defined by $\epsilon \in [10^{-6},10^{-4}, 10^{-2}]$ and $\lambda/\epsilon \in [10^{-6},10^{-4}, 10^{-2}, 1.0]$. Note that as $\epsilon$ and $\gamma_t$ only appear as $\epsilon/\gamma_t$, we absorb the factor $\gamma_t$ into $\epsilon$ and consider $\gamma_t=1$ everywhere. For VMM we choose the hyperparameters from $\lambda \in [10^{-6},10^{-4}, 10^{-2}, 1.0]$ as done by the authors of the method~\citep{bennett2020variational}.
We pick the best hyperparameter configuration by evaluating the MMR objective~\citep{zhang2021maximum} on a validation data set of the same size as the training set.
We visualize the dependency on the hyperparameters for the first experiment without random covariates in Figure~\ref{fig:hparams}. We observe that the method is rather insensitive to the choice of $\epsilon$ but admits a stronger dependence on the choice of the regularization parameter $\lambda$.
\begin{figure}[t]
    \centering
    \includegraphics[width=.45\linewidth]{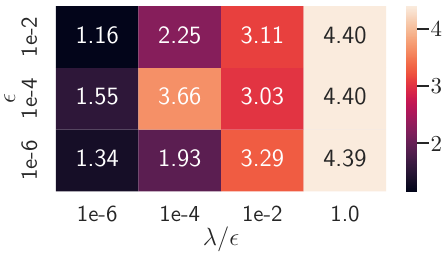}
    \caption{Kernel-SMM dependency on hyperparameters. We evaluate the SMM estimator on the first experiment without random covariates for different hyperparameter configurations. Values correspond to the mean of the prediction error $E[ \| f(T;\hat{\theta}) - f(T;\theta_0) \|_2^2]$ averaged over models trained on $20$ random training sets.}
    \label{fig:hparams}
\end{figure}

\section{Additional Results} \label{app:3:additional}
\paragraph{NetworkIV}
Here, we consider a common modern benchmark for IV regression in the standard setting without any data corruptions. 
Consider the following data generating process introduced by \citet{bennett2020deep} and subsequently used by many other works~\citep{zhang2021maximum,kremer2022functional,pmlr-v202-kremer23a},
\begin{align*}
 &y = f_0(t) + e + \delta,   &t = z + e + \gamma, \\
 &z \sim \operatorname{Uniform}([-3, 3]),& \\
   & e \sim N(0,1),  & \gamma, \delta \sim N(0, 0.1),
\end{align*}
where the function $f_0$ is chosen from the set of simple functions
\begin{center}
\begin{tabular}{ l l }
 $\operatorname{sin:} f_0(t) = \sin(t)$, & $\operatorname{abs:} f_0(t) = |t| \vspace{.5em}$,\\ 
 $\operatorname{linear:} f_0(t) = t$, & $\operatorname{step:} f_0(t) = I_{\{t \geq 0 \}}$.
\end{tabular}
\end{center}
We learn a neural network $f_\theta$ with two layers of $[20, 3]$ hidden units and leaky ReLU activation functions to approximate the function $f_0$ by imposing the conditional moment restriction $E[Y-f_\theta(T)|Z ] = 0 \ P_Z\mathrm{-a.s.}$. Table~\ref{tab:networkiv} contains the results of different plug-and-play IV estimators trained on a dataset of $1000$ points and averaged over $20$ random training datasets. We observe that SMD, VMM and SMM perform roughly on par whereas MMR only improves in one of the settings over the non-causal least squares solution (LSQ) which ignores the instruments entirely.

\begin{table}[t]
    \caption{NetworkIV experiment. Results represent the mean and standard error of the prediction error $E[ \| f(T;\hat{\theta}) - f(T;\theta_0) \|_2^2]$ resulting from 20 random training datasets.}
    \label{tab:networkiv}
    \centering
    \begin{tabular}{lccccc}
    \toprule
            & LSQ           & MMR                                       & SMD           & VMM           & SMM           \\
     \midrule
     sin    & $0.36\pm0.03$ & $0.40\pm0.02$                             & $0.12\pm0.01$ & $0.17\pm0.02$ & $0.15\pm0.01$ \\
     abs    & $1.94\pm1.48$ & $0.61\pm0.28$                             & $0.20\pm0.08$ & $0.09\pm0.04$ & $0.12\pm0.04$ \\
     step   & $0.35\pm0.04$ & $> 100$ & $0.04\pm0.01$ & $0.05\pm0.01$ & $0.04\pm0.00$ \\
     linear & $0.36\pm0.05$ & $0.36\pm0.09$                             & $0.07\pm0.04$ & $0.03\pm0.01$ & $0.07\pm0.03$ \\
    \bottomrule
    \end{tabular}
\end{table}

\paragraph{Neural Estimators}
We explore an alternative SMM implementation where we represent the instrument function $h \in \mathcal{H}$ as a neural network parameterized by $\omega \in \Omega$. With this choice, the estimator \eqref{eq:3:smm} takes the form
\begin{align}
    f^\ast = \argmin_{f \in \mathcal{F}} \max_{\omega \in \Omega} E_{\ppn}\left[\left( I+\frac{\epsilon}{2} \Delta_\xi \right) \left( \psi(\cdot;f)^T h_\omega(\cdot) \right)(\xi)  - \frac{\epsilon}{2}  \| \nabla_\xi \left( \psi(\cdot;f)^T h_\omega(\cdot) \right)(\xi) \|_{\Gamma}^2  - \frac{\lambda}{2} \| h_\omega(Z) \|_2^2 \right].
\end{align}
The Neural-SMM estimator can be trained in the same fashion as the DeepGMM~\citep{bennett2020deep} or FunctionalGEL~\citep{kremer2022functional} estimators by alternating mini-batch stochastic gradient descent steps in the the model parameters and the adversary parameters $\omega$.

We benchmark the Neural-SMM estimator against DeepGMM~\citep{bennett2020deep} and FunctionalGEL~\citep{kremer2022functional} which achieved state-of-the-art results on several benchmarks including the NetworkIV experiment.
For all methods we use the same instrument network architecture consisting of a feed-forward neural network with $[50, 20]$ hidden units and leaky ReLU activation functions. We optimize the objective by alternating steps with an optimistic Adam~\citep{daskalakis2018training} optimizer with parameters $\beta = (0.5, 0.9)$.
We tuned the learning rates, for the model and adversary by evaluating the DeepGMM estimator for different values and fix them both to $5e^{-4}$ for all methods. In the same way we fix the batch size to $200$ and the number of epochs to $3000$. For the FunctionalGEL estimator we use the Kullback-Leibler divergence version. For all methods we choose the regularization parameter $\lambda$ from $[10^{-6},10^{-4}, 10^{-2}, 1.0]$ and for Neural-SMM we additionally choose $\epsilon$ from $[10^{-6},10^{-4}, 10^{-2}, 1.0]$ by using the MMR objective on a validation set of the same size as the training set.

We observe in Table~\ref{tab:3:neuralnetworkiv} that Neural-SMM performs on par with these SOTA estimators on all variants of the NetworkIV experiment, suggesting that the geometry-awareness and additional robustness of our estimator does not come at the price of reduced performance in standard settings. It does, however, come at the price of increased computation due to the presence of the gradient and Laplace operators with respect to the data in the objective.

Figure~\ref{fig:3:neuralhparam} visualizes the dependence of Neural-SMM on its hyperparameters. We observe that for this experiment SMM requires either one or both parameters to be chosen large for optimal performance but the performance remains stable across a range of parameters.

\begin{table}[t]
    \caption{Neural CMR estimators. Results represent the mean and standard error of the prediction error $E[ \| f(T;\hat{\theta}) - f(T;\theta_0) \|_2^2]$ resulting from 20 random runs of the NetworkIV experiment.}
    \label{tab:3:neuralnetworkiv}
    \centering
    \begin{tabular}{lccc}
    \toprule
            & DeepGMM       & NeuralFGEL          & NeuralSMM     \\
    \midrule
     sin    & $0.08\pm0.01$ & $0.10\pm0.01$ & $0.07\pm0.01$ \\
     abs    & $0.04\pm0.01$ & $0.04\pm0.01$ & $0.04\pm0.01$ \\
     step   & $0.07\pm0.01$ & $0.08\pm0.01$ & $0.07\pm0.01$ \\
     linear & $0.05\pm0.01$ & $0.06\pm0.01$ & $0.05\pm0.01$ \\
    \bottomrule
    \end{tabular}
\end{table}

\begin{figure}[t]
    \centering
    \includegraphics[width=.45\linewidth]{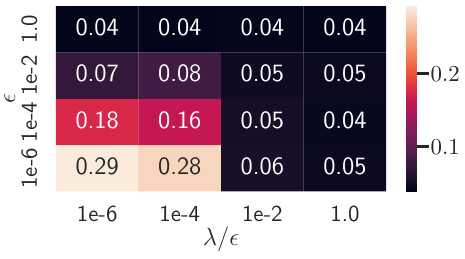}
    \caption{Neural-SMM dependency on hyperparameters. We evaluate the Neural-SMM estimator for different hyperparameter configurations exemplarily for the $\mathrm{abs}$ function in the network IV experiment. Values correspond to the mean of the prediction error $E[ \| f(T;\hat{\theta}) - f(T;\theta_0) \|_2^2]$ averaged over models trained on $20$ random training sets.}
    \label{fig:3:neuralhparam}
\end{figure}

\newpage

\section{Proofs}
\subsection{Duality Results}
\paragraph{Proof of Theorem~\ref{th:3:duality}}
\begin{proof}
    Introducing the Lagrange parameter $\rho \in \mathbb{R}$, the Lagrangian of \eqref{eq:3:primal} reads
    \begin{align}
        L(P,\rho,f) =& \min_{\pi \in \Pi(P, \ppn)} E_{(\xi,\xi') \sim \pi} \left[c(\xi,\xi') + \epsilon \log \left( \frac{\dd \pi(\xi,\xi')}{\dd \mu(\xi) \dd \nu(\xi')} \right) \right] \\ & + \rho \sup_{h \in \mathcal{H}, \| h\|_\mathcal{H} =1}E_P[\Psi(\xi;f)(h)].
    \end{align}
    As eventually the Lagrangian will be maximized with respect to $\rho$, we can merge it with the optimization over the unit ball in $\mathcal{H}$ to obtain a Lagrangian with an unrestricted parameter $h \in \mathcal{H}$,
    \begin{align}
        L(P,h,f) = \min_{\pi \in \Pi(P, \ppn)} E_{(\xi,\xi') \sim \pi} \left[c(\xi,\xi') + \epsilon \log \left( \frac{\dd \pi(\xi,\xi')}{\dd \mu(\xi) \dd \nu(\xi')} \right) \right] + E_P[\Psi(\xi;f)(h)].
    \end{align}
    Note that the Wasserstein distance is mass preserving, i.e., we do not need to explicitly impose the constraint $\ee_\pp[1]= 1$ as this is implied directly by normalization of the empirical distribution, i.e., let $p$ and $\hat{p}$ denote the density and probability mass functions of $\pp$ and $\ppn$ respectively, then $\ee_\pp[1] = \int_\Xi p(\xi) \dd \xi = \int_\Xi \sum_{i=1}^n \pi(\xi,\xi'_i)  \dd \xi = \sum_{i=1}^n \hat{p}(\xi'_i) = \sum_{i=1}^{n} \frac{1}{n}= 1$. 
    
    To derive the dual problem we need to minimize the Lagrangian over the primal variable $P$.
    By definition of the coupling distribution $\pi$ we have $p = \mathbb{P}_{1\sharp} \pi$ and thus we can collapse the minimizations over the $\pi$ and $P$ into a single minimization over $\pi \in \Pi(\ppn) := \{\mathcal{P}(\Xi \times \Xi) : \mathbb{P}_{2\sharp} \pi = \ppn \}$,
    \begin{align}
        D(h,f) = \min_{\pi \in \Pi(\ppn)} E_{(\xi,\xi') \sim \pi}\left[c(\xi,\xi') + \epsilon \log \left( \frac{\dd \pi(\xi,\xi')}{\dd \mu(\xi) \dd \nu(\xi')} \right) \right] + E_{\mathbb{P}_{1\sharp}\pi}[\Psi(\xi;f)(h)]. \label{eq:3:dual1}
    \end{align}
    Now to extract the relevant degree of freedom we can write all expectation operators as combinations of the empirical expectation and conditional expectation over $\pi(\xi,\xi')$ given its second argument $\xi' \in \Xi$. 
    To see this, note that by the product rule we have $\pi(\xi, \xi') =: \pi(\xi| \xi') \hat{p}(\xi')$ and by the law of iterated expectation we have for any function $g : \Xi \times \Xi \rightarrow \mathbb{R}$, $E_\pi[g(\xi, \xi')] = E_{\xi'\sim \ppn}[E_{\xi\sim \pi|\xi'}[g(\xi,\xi')|\xi']]$, where we defined $\pi|\xi'$ as the conditional distribution of $\xi$ given $\xi'$, with density $\pi(\xi|\xi')$. Similarly we have for any function $g : \Xi \rightarrow \mathbb{R}$,
    \begin{align}
        E_{P_{1\sharp}\pi}[g(\xi)] &= \int_\Xi g(\xi) (\mathbb{P}_{1\sharp}\pi)(\xi) \dd \xi = \int_{\Xi} g(\xi) \sum_{i=1}^n \pi(\xi, \xi_i') \dd \xi \\
        &= \int_\Xi g(\xi) \sum_{i=1}^n \pi(\xi | \xi_i') \hat{p}(\xi_i') \dd \xi = \int_\Xi g(\xi) \frac{1}{n} \sum_{i=1}^n \pi(\xi | \xi_i') \dd \xi \\
        &= E_{\xi' \sim \ppn}[E_{\xi \sim \pi|\xi'}[g(\xi) | \xi']].
    \end{align}
    Therefore the optimization over $\pi \in \Pi(\ppn)$ is equivalent to a sequence of optimization problems over $\pi|\xi' \in \mathcal{P}(\Xi)$, one for each value of $\xi'\in \Xi$.
    With this we can express the dual problem \eqref{eq:3:dual1} as
    \begin{align}
        D(h,f) = E_{\xi' \sim \ppn} \left[ \min_{\pi|\xi' \in \mathcal{P}(\Xi)}  E_{\xi \sim \pi|\xi'}\left[ c(\xi,\xi') + \epsilon \log \left( \frac{\dd (\pi|\xi')(\xi)}{\dd \mu(\xi)}\right) + \Psi(\xi;f)(h) \Bigg| \xi' \right]\right] \label{eq:3:dual3}
    \end{align}
    Now for each $\xi' \in \Xi$ consider the inner optimization problem
    \begin{align}
        G(\xi';h,f) := \min_{\pi|\xi' \in \mathcal{P}(\Xi)}  E_{\xi \sim \pi|\xi'}\left[ c(\xi,\xi') + \epsilon \log \left( \frac{\dd (\pi|\xi')(\xi)}{\dd \mu(\xi)}\right) + \Psi(\xi;f)(h) \right].
    \end{align}
    Define the density of $\pi|\xi' \in \mathcal{P}(\Xi)$ with respect to the reference measure $\mu \in \mathcal{P}(\Xi)$ as $r(\xi) = \frac{\dd (\pi|\xi')(\xi)}{\dd \mu(\xi)} $, then we can rewrite the optimization problem as an optimization over $r \in \mathcal{R} := \{r : \Xi \rightarrow \mathbb{R}_+:  E_\mu[r(\xi)] = 1 \}$,
    \begin{align}
        G(\xi';h,f) = \min_{r \in \mathcal{R}} E_{\xi \sim \mu}\left[r(\xi) c(\xi,\xi') + \epsilon r(\xi) \log\left(r(\xi) \right) + r(\xi) \Psi(\xi;f)(h) \right].
    \end{align}
    Now introducing Lagrange parameter $\eta \in \mathbb{R}$ and using Lagrangian duality we get
    \begin{align}
        G(\xi';h,f) &= \sup_{\eta \in \mathbb{R}} \min_{r : \Xi \rightarrow \mathbb{R}_+} E_{\xi \sim \mu}[ r(\xi) c(\xi,\xi') + \epsilon r(\xi) \log\left(r(\xi) \right) + r(\xi) \Psi(\xi;f)(h) + \eta (1 - r(\xi) )] \\
        &= \sup_{\eta \in \mathbb{R}} \eta - \epsilon E_{\xi \sim \mu}\left[ \sup_{t \geq 0} t \frac{\eta - c(\xi,\xi') - \Psi(\xi;f)(h)}{\epsilon} - t \log t  \right] \\
       &= \sup_{\eta \in \mathbb{R}} \eta - \epsilon E_{\xi \sim \mu}\left[ \exp\left( \frac{\eta - c(\xi,\xi') - \Psi(\xi;f)(h)}{\epsilon} - 1 \right) \right], \label{eq:3:dual2}
    \end{align}
    where we used that the Fenchel conjugate of the Kullback Leibler divergence $t \log t$ is $\sup_t \langle p, t \rangle - t \log t = e^{p-1}$.
    We can eliminate the dual normalization variable $\eta \in \mathbb{R}$ from the problem by solving the corresponding first order optimality condition
    \begin{align}
        0 = 1 - e^{\eta/\epsilon - 1}  \ee_{X\sim \mu}\left[
     \exp\left(\frac{- \Psi(\xi;f)(h) -c(\xi,\xi')}{\epsilon} \right)
      \right],
    \end{align}
    which yields
    \begin{align}
        \eta = \epsilon - \epsilon \log  \ee_{X\sim \mu}\left[ \exp\left(\frac{- \Psi(\xi;f)(h) -c(\xi,\xi')}{\epsilon} \right)
      \right].
    \end{align}
    Inserting back into \eqref{eq:3:dual2}, we obtain for each $\xi' \in \Xi$
    \begin{align}
        G(\xi';h,f) = - \epsilon \log E_{\xi \sim \mu}\left[ \exp\left(\frac{- \Psi(\xi;f)(h) -c(\xi,\xi')}{\epsilon} \right) \right].
    \end{align}
    and the result follows by inserting into \eqref{eq:3:dual3} and redefining $h/\epsilon \rightarrow h$.
    \end{proof}

\paragraph{Proof of Theorem~\ref{th:3:linearization}}
\begin{proof}
    Using the assumptions on the reference measure and cost function, we can write the objective in the form \eqref{eq:3:linear1}, where the inner expectation is given as
    \begin{align}
        E_{\xi \sim \mathcal{N}(\xi', \epsilon \Gamma^{-1})} \left[ e^{-\Psi(\xi;f)(h)} \right] 
        &= \int_{\Xi} e^{-\Psi(\xi;f)} e^{-\frac{1}{2\epsilon} \|\xi -\xi' \|_{\Gamma^{-1}}^2} \dd \xi.
    \end{align}
    As for small $\epsilon$ the integrand only provides a finite contribution in a neighborhood of $\xi'$, we can use that $\Psi$ is continuously differentiable everywhere and employ a Taylor expansion,
    \begin{align}
        \Psi(\xi;f)(h) =& \Psi(\xi';f)(h) + (\xi - \xi')^T \nabla_\xi \Psi(\xi';f)(h) \\
        &+ \frac{1}{2}(\xi - \xi')^T \nabla^2_\xi \Psi(\xi';f)(h) (\xi - \xi') + O(\|\xi - \xi' \|^3).
    \end{align}
    Note that due to the Gaussian measure under the integral we have $\|\xi - \xi' \| = O(\epsilon^{1/2})$.
    Now defining $\delta := \xi - \xi' \in \Xi$ as well as the gradient $G(\xi') := \nabla_\xi \Psi(\xi';f)(h)$ and Hessian $H(\xi') := \nabla^2_\xi \Psi(\xi';f)(h)$ of the evaluated moment functional we can insert back and get
    \begin{align}
        E_{\xi \sim \mathcal{N}(\xi', \gamma)} \left[ e^{-\Psi(\xi;f)(h)} \right] = e^{-\Psi(\xi';f)(h)} \int_\Xi \exp \left( - \frac{1}{2\epsilon} \left( 2 \epsilon \delta^T G(\xi') + \epsilon \delta^T H(\xi') \delta + \delta^T \Gamma \delta \right) \right) \dd \delta + O(\epsilon^{3/2}).
    \end{align}
    Define the regularized Hessian $\Omega_\epsilon := \Omega_\epsilon(\xi') := \Gamma + \epsilon H(\xi')$, which is invertible w.p.1, as for sufficiently small $\epsilon/\gamma$ we have $\lambda_\mathrm{min}(\Gamma) = \min_{w \in \{t,y,z \}}\gamma_w > \epsilon \lambda_{\mathrm{min}}(H(\xi'))$ w.p.1 and thus $\Omega_\epsilon$ is strictly positive definite w.p.1. Then we can employ a change of variables by defining $\omega := \Omega_\epsilon^{1/2} \delta$ and obtain
    \begin{align}
         &E_{\xi \sim \mathcal{N}(\xi', \gamma)} \left[ e^{- \Psi(\xi;f)(h)} \right] \\
         =& e^{- \Psi(\xi';f)(h)} \int \frac{1}{\left| \det \Omega_\epsilon^{1/2} \right|} \times \exp\left( -\frac{1}{2\epsilon} \left(\omega^T \omega + 2\epsilon \omega^T \Omega_{\epsilon}^{-1/2} G(\xi') 
         \right) \right) \dd \omega + O(\epsilon^{3/2}).
    \end{align}
    Now, completing the square we obtain
    \begin{align}
        &E_{\xi \sim \mathcal{N}(\xi', \gamma)} \left[ e^{-\Psi(\xi;f)(h)} \right] \\
        =& e^{-\Psi(\xi';f)(h)} e^{\frac{\epsilon}{2} G(\xi')^T \Omega_\epsilon^{-1} G(\xi')} 
        \int \frac{1}{\left| \det \Omega_\epsilon^{1/2} \right|} \exp \left(- \frac{1}{2\epsilon} \left(\omega + \epsilon \Omega_\epsilon^{-1/2} G(\xi')^2 \right) \right) \dd \omega + O(\epsilon^{3/2})\\
        =& \left(\frac{2\pi}{\epsilon}\right)^{d_{\xi}/2}  \left| \det \Omega_\epsilon^{1/2} \right|^{-1}  e^{-\Psi(\xi';f)(h)} e^{\frac{\epsilon}{2} G(\xi')^T \Omega_\epsilon^{-1} G(\xi')} + O(\epsilon^{3/2}).
    \end{align}
    Finally inserting back into \eqref{eq:3:linear1} we get
    \begin{align}
        D(f,h) =& E_{\xi'\sim \ppn}\left[ - \epsilon \log E_{\xi \sim \mathcal{N}(\xi', \gamma)} \left[ e^{\Psi(\xi;f)(h)} \right] \right] \\
        =& E_{\xi'\sim \ppn}\left[ \epsilon \Psi(\xi';f)(h) - \frac{\epsilon^2}{2} G(\xi')^T \Omega_\epsilon^{-1} G(\xi') + \frac{\epsilon}{2} \log  \left| \det \Omega_\epsilon \right| \right] - \frac{\epsilon d_\xi}{2} \log \frac{2\pi}{\epsilon} + O(\epsilon^{5/2}).
    \end{align}
    Dividing by $\epsilon$ and neglecting constant terms we get
    \begin{align}
        D(f,h) = E_{\xi'\sim \ppn}\left[ \Psi(\xi';f)(h) - \frac{\epsilon}{2} G(\xi')^T \Omega_\epsilon^{-1} G(\xi') + \frac{1}{2} \log  \left| \det \Omega_\epsilon \right| \right] + O(\epsilon^{3/2}).
    \end{align}
    Now, for small $\epsilon$ we can Taylor expand $\Omega_\epsilon^{-1}$ as
    \begin{align}
        \Omega_\epsilon^{-1} &= \left(\Gamma + \epsilon H(\xi') \right)^{-1} \\
        &= \Gamma^{-1} \left(I + \epsilon \Gamma^{-1} H \right)^{-1} \\
        &= \Gamma^{-1} \left( I - \epsilon \Gamma^{-1} H \right)  + O(\epsilon^2) \\
        &= \Gamma^{-1} + O(\epsilon).
    \end{align}
    Similarly we have 
    \begin{align}
        \log \left|\det \Omega_\epsilon \right| &= \log \left|\det \left( \Gamma + \epsilon H \right)\right| \\
        &= \log \left| \det \Gamma \right| + \log \left| \det \left(I + \epsilon \Gamma^{-1} H \right) \right|  \\
        &= \underbrace{\left(\sum_{x \in \{t,y,z \}} d_x \log \gamma_x \right)}_{=:C} + \log \det \left( I + \epsilon \Gamma^{-1} H\right) \\
        &= C  + \operatorname{Tr} \log \left(I + \epsilon \Gamma^{-1} H \right) \\
        &= C + \operatorname{Tr} \left( \epsilon \Gamma^{-1} H + O(\epsilon^2) \right) \\
        &= C + \epsilon \sum_{x \in \{t, y, z\}} \frac{1}{\gamma_x} \Delta_x \Psi(\xi';f)(h) + O(\epsilon^2).
    \end{align}
    So we finally obtain
    \begin{align}
        D(f,h) =& E_{\ppn}\left[\Psi(\xi;f)(h)  - \frac{\epsilon}{2} \sum_{x \in \{t, y, z\}} \frac{1}{\gamma_x} \left( \| \nabla_x \Psi(\xi;f)(h) \|_2^2 -  \Delta_x \Psi(\xi;f)(h) \right) \right] + O(\epsilon^{3/2}).
    \end{align}
\end{proof}

\subsection{Proof of Theorem~\ref{th:3:consistency} (Consistency)}
The objective of the SMM estimator \eqref{eq:3:smm} can be written as
\begin{align}
    \widehat{D}(h, \theta) = \left(I + \frac{\epsilon}{2} \Delta_\xi \right) E_{\ppn}[\Psi(\xi;f)(h)] - \frac{\epsilon}{2} \langle h, \widehat{\Omega}_{\lambda_n}(\bar{\theta}_n) h \rangle_\mathcal{H} ,
\end{align}
where we defined the linear operator $\widehat{\Omega}_{\lambda_n}(\bar{\theta}_n): \mathcal{H} \rightarrow \mathcal{H}$ as $\widehat{\Omega}_{\lambda_n}(\bar{\theta}_n) = E_{\ppn} \left[ \left( \nabla_\xi \Psi(\xi;\bar{\theta}_n) \right)^T\Gamma^{-1} \nabla_\xi \Psi(\xi;\bar{\theta}_n) \right] + \lambda_n I \otimes I$. Our proof of Theorem~\ref{th:3:consistency} uses properties of the spectrum of $\widehat{\Omega}_{\lambda_n}(\bar{\theta}_n)$ which we will derive in the following.

\subsubsection{Previous Results}
\begin{lemma}[Corollary~9.31, \citet{kosorok2008introduction}] \label{lemma:3:donsker}
    Let $\mathcal{F}$ and $\mathcal{G}$ be Donsker classes of functions. Then $\mathcal{F} + \mathcal{G}$ is Donsker. Further if additionally $\mathcal{F}$ and $\mathcal{G}$ are uniformly bounded, then $\mathcal{F} \cdot \mathcal{G}$ is Donsker.
\end{lemma}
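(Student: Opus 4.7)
The plan is to derive both closure properties from the standard permanence theorem for Donsker classes under Lipschitz transformations (e.g.\ Theorem 2.10.6 of van der Vaart and Wellner, which underlies Kosorok's Corollary 9.31). Concretely, I would establish the auxiliary fact that for a Lipschitz map $\varphi: \mathbb{R}^2 \to \mathbb{R}$ and Donsker classes $\mathcal{F}_1$, $\mathcal{F}_2$ with $L_2(P)$ envelopes, the image class $\{ \varphi(f_1, f_2) : f_i \in \mathcal{F}_i \}$ is Donsker whenever it admits an $L_2(P)$ envelope. Both assertions of the lemma then follow by choosing $\varphi$ appropriately.

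For the sum, I would apply this with $\varphi(x, y) = x + y$, which is $1$-Lipschitz. Covering numbers then satisfy $N(\epsilon, \mathcal{F} + \mathcal{G}, L_2(Q)) \le N(\epsilon/2, \mathcal{F}, L_2(Q)) \cdot N(\epsilon/2, \mathcal{G}, L_2(Q))$, so the uniform entropy integral is preserved; equivalently, bracketing numbers satisfy $N_{[]}(2\epsilon, \mathcal{F} + \mathcal{G}, L_2(P)) \le N_{[]}(\epsilon, \mathcal{F}, L_2(P)) \, N_{[]}(\epsilon, \mathcal{G}, L_2(P))$. Dudley's theorem together with the $L_2(P)$ envelope $F_\mathcal{F} + F_\mathcal{G}$ then yields the Donsker conclusion.

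For the product, the essential ingredient is the uniform boundedness assumption. If $\sup_{f \in \mathcal{F}} \|f\|_\infty \le M_\mathcal{F}$ and $\sup_{g \in \mathcal{G}} \|g\|_\infty \le M_\mathcal{G}$, then for any $f_1, f_2 \in \mathcal{F}$ and $g_1, g_2 \in \mathcal{G}$ the elementary identity $f_1 g_1 - f_2 g_2 = f_1 (g_1 - g_2) + (f_1 - f_2) g_2$ gives $|f_1 g_1 - f_2 g_2| \le M_\mathcal{F} |g_1 - g_2| + M_\mathcal{G} |f_1 - f_2|$. This directly transfers $L_2$-covering and bracketing numbers from $\mathcal{F}$ and $\mathcal{G}$ to $\mathcal{F} \cdot \mathcal{G}$, and the constant $M_\mathcal{F} M_\mathcal{G}$ serves as a trivially square-integrable envelope; invoking the uniform entropy integral theorem once more yields the Donsker property.

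The main subtlety I expect lies in verifying that measurability (e.g.\ pointwise measurability or image-admissible Suslin conditions) and the $L_2$ envelope requirement transfer correctly to the derived classes. It is precisely this envelope issue that forces uniform boundedness in the product statement, since without it the product map is no longer globally Lipschitz and the envelope $|F_\mathcal{F}| |F_\mathcal{G}|$ need not be square integrable under $P$. Beyond this bookkeeping, the argument reduces to a routine application of empirical process machinery and Dudley's entropy bound.
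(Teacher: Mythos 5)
Note first that the paper does not prove this lemma at all: it is imported verbatim from Kosorok's book as Corollary~9.31, so there is no internal proof to compare against. Your reduction of both closure statements to the Lipschitz-transformation preservation theorem is exactly the right structure and is indeed how Kosorok's result is organized: take $\varphi(x,y)=x+y$ for the sum, and observe that under uniform boundedness the bilinear map is effectively Lipschitz on the relevant range via the decomposition $f_1g_1 - f_2g_2 = f_1(g_1-g_2) + (f_1-f_2)g_2$, with $M_\mathcal{F} M_\mathcal{G}$ as a trivial envelope. That part is correct and is the real content of the lemma.

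The gap is in how you propose to \emph{establish} the auxiliary Lipschitz-preservation fact. You argue via covering/bracketing numbers and Dudley's entropy bound, but a finite uniform entropy integral (or bracketing entropy integral) is a \emph{sufficient} condition for the Donsker property, not a necessary one. There exist Donsker classes whose entropy integrals diverge, so an argument that pipes the entropy bound through $\varphi$ cannot yield the preservation theorem at the stated level of generality. Theorem~2.10.6 of van der Vaart and Wellner, which you correctly identify as the underlying tool, is proved differently: via the characterization of Donsker in terms of total boundedness in the $L_2(P)$ semimetric together with asymptotic uniform $\rho_P$-equicontinuity of the empirical process, both of which are preserved under Lipschitz maps. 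If you simply cite that theorem as a black box and apply it to the two choices of $\varphi$ above, your proof is complete; if you insist on deriving it, you need the equicontinuity argument, not the entropy one. The measurability and envelope caveats you flag at the end are genuine but minor bookkeeping, and you handle them correctly.
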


\begin{lemma}[Lemma~18, \citet{bennett2020variational}] \label{lemma:3:bennett}
Suppose that $\mathcal{G}$ is a class of functions of the form $g : \Xi \rightarrow \mathbb{R}$, and that $\mathcal{G}$ is $P$-Donsker in the sense of \citet{kosorok2008introduction}. Then we have
\begin{align}
    \sup_{g \in \mathcal{G}} E_{\ppn}[g(\xi)] - E[g(\xi)] = O_p(n^{-1/2}).
\end{align}
\end{lemma}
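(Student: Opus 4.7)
The plan is to derive the result directly from the definition of a $P$-Donsker class. Recall that $\mathcal{G}$ being $P$-Donsker means that the empirical process $\mathbb{G}_n \colon g \mapsto \sqrt{n}(E_{\ppn}[g] - E[g])$ converges weakly in the Banach space $\ell^\infty(\mathcal{G})$, equipped with the supremum norm $\|\phi\|_{\mathcal{G}} = \sup_{g \in \mathcal{G}} |\phi(g)|$, to a tight mean-zero Gaussian process $\mathbb{G}$ indexed by $\mathcal{G}$.

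First, I would rewrite the target quantity as
\begin{align*}
\sup_{g \in \mathcal{G}}\bigl(E_{\ppn}[g(\xi)] - E[g(\xi)]\bigr) \;=\; n^{-1/2} \sup_{g \in \mathcal{G}} \mathbb{G}_n(g),
\end{align*}
so that establishing the $O_p(n^{-1/2})$ rate reduces to proving $\sup_{g \in \mathcal{G}} \mathbb{G}_n(g) = O_p(1)$. The supremum functional $F(\phi) := \sup_{g \in \mathcal{G}} \phi(g)$ is $1$-Lipschitz on $\ell^\infty(\mathcal{G})$ with respect to the sup norm, hence continuous. Applying the continuous mapping theorem (in the Hoffmann-J{\o}rgensen/outer-probability form appropriate for empirical processes) to the Donsker convergence $\mathbb{G}_n \rightsquigarrow \mathbb{G}$ yields $F(\mathbb{G}_n) \rightsquigarrow F(\mathbb{G})$.

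Next, tightness of $\mathbb{G}$ in $\ell^\infty(\mathcal{G})$ implies that its sample paths are almost surely bounded and uniformly continuous with respect to an intrinsic semimetric, so in particular $F(\mathbb{G}) = \sup_{g \in \mathcal{G}} \mathbb{G}(g) < \infty$ almost surely. A sequence of random variables converging in distribution to an almost-surely finite limit is stochastically bounded, and therefore $\sup_{g \in \mathcal{G}} \mathbb{G}_n(g) = O_p(1)$. Dividing by $\sqrt{n}$ delivers the claim.

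The only real subtlety, and the main thing one has to be careful about, is measurability: for uncountable $\mathcal{G}$ the supremum need not be Borel measurable, so both the Donsker convergence and the ensuing stochastic boundedness must be formulated via outer expectations in the sense of van der Vaart and Wellner or \citet{kosorok2008introduction}. This is, however, already absorbed into the definition of a $P$-Donsker class invoked in the lemma's hypothesis, so no additional argument is required and the proof is essentially an unpacking of definitions plus the continuous mapping theorem.
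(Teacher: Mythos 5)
The paper does not prove this lemma itself — it imports it verbatim as Lemma~18 of \citet{bennett2020variational} and lists it under ``Previous Results.'' Your argument is correct and is, in substance, the standard derivation that the cited source relies on: $P$-Donsker means $\mathbb{G}_n \rightsquigarrow \mathbb{G}$ in $\ell^\infty(\mathcal{G})$; the one-sided supremum functional $\phi \mapsto \sup_{g\in\mathcal{G}}\phi(g)$ is $1$-Lipschitz for the sup norm, so the (outer-probability) continuous mapping theorem gives $\sup_g \mathbb{G}_n(g) \rightsquigarrow \sup_g \mathbb{G}(g)$; tightness of the limiting $P$-Brownian bridge makes that supremum a.s.\ finite; and weak convergence to an a.s.\ finite limit yields stochastic boundedness, i.e.\ $\sup_g \mathbb{G}_n(g) = O_p(1)$, hence $O_p(n^{-1/2})$ after dividing by $\sqrt{n}$. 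You also correctly observe that measurability of the uncountable supremum is already handled by the outer-expectation formulation built into the Donsker definition of \citet{kosorok2008introduction}, so no extra argument is needed. There are no gaps; this matches what the cited reference does.
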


\begin{lemma}[Lemma~E.4, \citet{pmlr-v202-kremer23a}] \label{lemma:3:kremer}
    Let Assumptions~\ref{as2:psi}-\ref{as10:smooth-param} be satisfied. Then the matrix
    \begin{align}
        \Sigma(\theta_0) = \left\langle  E[ \nabla_\theta \Psi(\xi;\theta_0)], E[ \nabla_{\theta^T} \Psi(\xi;\theta_0)] \right\rangle_{\mathcal{H}^\ast}
    \end{align}    
    is strictly positive definite and non-singular with smallest eigenvalue bounded away from zero.
\end{lemma}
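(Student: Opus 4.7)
The matrix $\Sigma(\theta_0) \in \mathbb{R}^{p\times p}$ is a Gram matrix of elements of the dual $\mathcal{H}^\ast$ and is therefore automatically symmetric positive semi-definite. Because it is finite-dimensional, ``smallest eigenvalue bounded away from zero'' is really just a restatement of strict positive-definiteness: the continuous map $v \mapsto v^T \Sigma(\theta_0) v$ on the compact unit sphere of $\mathbb{R}^p$ attains its minimum, and the minimum is positive iff the matrix is strictly PD. My plan is therefore to fix an arbitrary $v \in \mathbb{R}^p\setminus\{0\}$ and prove $v^T \Sigma(\theta_0) v > 0$. Unpacking the inner-product definition,
\begin{align*}
v^T \Sigma(\theta_0) v \;=\; \hdnorm{\,v^T E[\nabla_\theta \Psi(\xi;\theta_0)]\,}^{\!2} \;=\; \sup_{h \in \mathcal{H},\,\hnorm{h}\le 1} \bigl|L_v(h)\bigr|^2,
\end{align*}
where $L_v(h) := E[v^T \nabla_\theta \Psi(\xi;\theta_0)(h)]$ is a bounded linear functional on $\mathcal{H}$; boundedness follows from Cauchy--Schwarz and the integrability bound $E[\sup_\theta \|J_\theta \psi(X;\theta)\|^2 \mid Z] < \infty$ of Assumption~\ref{as10:smooth-param}. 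The problem reduces to showing $L_v \not\equiv 0$.

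\textbf{Tower property and the rank condition.} Differentiating $\Psi(\xi;\theta)(h) = \psi(x;\theta)^T h(z)$ in $\theta$ and applying the tower property, I would rewrite $L_v$ in a form that isolates the $Z$-dependence,
\begin{align*}
L_v(h) \;=\; E\bigl[(J_\theta \psi(X;\theta_0)\, v)^T h(Z)\bigr] \;=\; E\bigl[m_v(Z)^T h(Z)\bigr], \qquad m_v(z) \;:=\; E\bigl[J_\theta \psi(X;\theta_0)\,\big|\, Z=z\bigr]\, v.
\end{align*}
Assumption~\ref{as10:smooth-param} guarantees that the $m\times p$ matrix $E[J_\theta \psi(X;\theta_0)\mid Z]$ has full column rank $p$ with $Z$-probability one, so its null space is trivial almost surely. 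Hence for the fixed $v\neq 0$, the $\mathbb{R}^m$-valued function $m_v$ is non-zero on a set of $P_Z$-measure one.

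\textbf{Closing via richness of $\mathcal{H}$.} The final step converts ``$m_v$ not $P_Z$-a.s.\ zero'' into ``$L_v \not\equiv 0$''. This is exactly the content of Assumption~\ref{as1:instrument}: the instrument space is postulated to be rich enough so that for any $\mathbb{R}^m$-valued function $w$ of $Z$ with finite second moment, $E[w(Z)^T h(Z)] = 0$ for all $h \in \mathcal{H}$ forces $w(Z) = 0$ $P_Z$-a.s.\ (this is the contrapositive of the equivalence between \eqref{eq:3:cmr} and \eqref{eq:3:vmr}). Applied to $w = m_v$, it produces $h^\star \in \mathcal{H}$ with $L_v(h^\star) \neq 0$, giving $v^T \Sigma(\theta_0) v > 0$. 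Combined with the compactness argument of the first paragraph, this delivers the uniform lower bound on the smallest eigenvalue.

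\textbf{Main obstacle.} The delicate step is the last implication, which actually uses the structure of $\mathcal{H}$ rather than merely invoking ``richness''. For the concrete RKHS realization of Proposition~\ref{prop:3:rkhs} I would argue coordinate-wise: if $E[m_{v,l}(Z) h_l(Z)] = 0$ for every $h_l \in \mathcal{H}_l$, then the Riesz representative $g_l := E[m_{v,l}(Z) k_l(Z,\cdot)] \in \mathcal{H}_l$ vanishes, whose squared RKHS norm equals $\int\!\!\int m_{v,l}(z)\, k_l(z,z')\, m_{v,l}(z')\, dP_Z(z)\, dP_Z(z')$; integral strict positive-definiteness of $k_l$ then forces $m_{v,l} = 0$ $P_Z$-a.s., contradicting the full-rank conclusion. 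For a neural-network instantiation, one instead uses universal approximation to approximate $m_v / \|m_v\|$ by some $h \in \mathcal{H}$ in $L^2(P_Z)$ and concludes by continuity of the pairing. Both routes rely essentially on Assumption~\ref{as9:data} to ensure compactness and boundedness in the density/approximation argument.
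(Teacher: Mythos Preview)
The paper does not prove this lemma: it is quoted as Lemma~E.4 of \citet{pmlr-v202-kremer23a} under the ``Previous Results'' heading and invoked without argument, so there is no in-paper proof to compare your route against. Your proposal is a correct self-contained proof: the Gram-matrix reduction $v^T\Sigma(\theta_0)v=\hdnorm{v^T E[\nabla_\theta\Psi(\xi;\theta_0)]}^2$, the tower-property rewrite $L_v(h)=E[m_v(Z)^T h(Z)]$ with $m_v(Z)=E[J_\theta\psi(X;\theta_0)\mid Z]\,v$, the full-column-rank clause of Assumption~\ref{as10:smooth-param} to force $m_v\neq 0$ $P_Z$-a.s., and the ISPD/universality argument from Assumption~\ref{as1:instrument} (which you rightly flag as needing the concrete kernel structure rather than the abstract equivalence \eqref{eq:3:cmr}$\Leftrightarrow$\eqref{eq:3:vmr}) are all sound; and since $\Sigma(\theta_0)$ is a fixed deterministic $p\times p$ matrix, ``smallest eigenvalue bounded away from zero'' is indeed nothing more than strict positive-definiteness.
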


\subsubsection{Spectrum of $\widehat{\Omega}$}

\begin{lemma} \label{lemma:3:boundedness}
    Let Assumptions~\ref{as9:data} and \ref{as4:regularity} be satisfied. Then we have
    \begin{align}
        & \sup_{\theta \in \Theta, x \in \mathcal{T} \times \mathcal{Y}} \|\psi(x;\theta)\|_\infty \leq C_\psi < \infty \\
        & \sup_{\theta \in \Theta, x \in \mathcal{T} \times \mathcal{Y}}  \|J_x(\psi)(x;\theta)\|_\infty \leq L_\psi < \infty \\
        & \sup_{\theta \in \Theta, x \in \mathcal{T} \times \mathcal{Y}}  \|\Delta_x \psi(x;\theta)\|_\infty \leq D_\psi < \infty \\
        & \sup_{\theta \in \Theta, z \in \mathcal{Z}}  \| h(z)\|_\infty \leq C_h < \infty \\
        & \sup_{\theta \in \Theta, z \in \mathcal{Z}}  \|J_z h(z)\|_\infty \leq L_h < \infty \\
        & \sup_{\theta \in \Theta, z \in \mathcal{Z}}  \| \Delta_z h(z)\|_\infty \leq D_h < \infty,
    \end{align}
    which directly implies $\| \Delta_\xi \|_\mathrm{op} < \infty$ on $\mathcal{H}^\ast$.
\end{lemma}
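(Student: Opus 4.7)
The plan is to exploit the compactness of $\Xi$ (Assumption~\ref{as9:data}) and $\Theta$ (Assumption~\ref{as2:psi}) together with the smoothness conditions (Assumptions~\ref{as4:regularity} and~\ref{as1:instrument}) and invoke extreme value theorem-type arguments to obtain the uniform bounds. The six bounds split naturally into a $\psi$-group (bounds on $\psi$ and its data-derivatives, uniform over $\theta\in\Theta$) and an $h$-group (bounds on $h$ and its $z$-derivatives, uniform over the unit ball $\mathcal{H}_1$, since the displayed $\sup_\theta$ should be read as $\sup_{h\in\mathcal{H}_1}$).

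\textbf{Bounds on $\psi$, $J_x\psi$, $\Delta_x\psi$.} For any fixed $\theta\in\Theta$, Assumption~\ref{as4:regularity} says that $\psi(\cdot;\theta)$, $J_x\psi(\cdot;\theta)$, and $\Delta_x\psi(\cdot;\theta)$ are continuous in $x$ on the compact $\mathcal{T}\times\mathcal{Y}$ (they are $C^{\infty}$); for $P_0$-a.e.\ $x$, Assumption~\ref{as2:psi} says $\theta \mapsto \psi(x;\theta)$ is continuous on the compact $\Theta$. The main step is to upgrade these separate continuity statements to \emph{joint} continuity (or at least a joint boundedness statement) on the compact product $\Theta\times(\mathcal{T}\times\mathcal{Y})$. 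I would use the $P_0$-Donsker condition on $\{\psi(\cdot;\theta)_l : \theta\in\Theta\}$ and on $\{(J_x\psi(\cdot;\theta))_{lr} : \theta\in\Theta\}$ to extract an $L^2(P_0)$-integrable envelope, which combined with $C^\infty$-smoothness in $x$ on the compact data space and continuity in $\theta$ w.p.~1, yields joint continuity (up to a $P_0$-null modification of $x$). Extreme value theorem on the compact product then gives finite $C_\psi$, $L_\psi$, $D_\psi$.

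\textbf{Bounds on $h$, $J_z h$, $\Delta_z h$.} By Assumption~\ref{as1:instrument}, each $h\in\mathcal{H}_{l,1}$ is $C^\infty$-smooth and both $\mathcal{H}_{l,1}$ and $\{J_z h : h\in\mathcal{H}_{l,1}\}$ are $P_0$-Donsker. The same Donsker-envelope-plus-compactness argument as above (with $\mathcal{Z}$ replacing $\mathcal{T}\times\mathcal{Y}$ and $\mathcal{H}_{l,1}$ replacing $\Theta$) supplies the finite bounds $C_h$, $L_h$, $D_h$. In the RKHS case of Proposition~\ref{prop:3:rkhs} this is even more direct: the reproducing identity yields $|h(z)|\le\|h\|_{\mathcal{H}_l}\sqrt{k_l(z,z)}$, and the continuity of $k_l$ on compact $\mathcal{Z}$ bounds $\sqrt{k_l(z,z)}$ uniformly, with analogous bounds for $J_z h$ and $\Delta_z h$ obtained from the derivative reproducing kernels $\partial_z\partial_{z'}k_l(z,z')$ and $\Delta_z\Delta_{z'}k_l(z,z')$, respectively.

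\textbf{Operator-norm bound and main obstacle.} For $\|\Delta_\xi\|_\mathrm{op}<\infty$ on $\mathcal{H}^\ast$, unpack $\Psi(\xi;f)(h)=\psi(t,y;f)^T h(z)$, apply the product rule to $\Delta_\xi = \gamma_t^{-1}\Delta_t + \gamma_y^{-1}\Delta_y + \gamma_z^{-1}\Delta_z$, and bound each resulting term by a product of the six uniform bounds already established, taking the supremum over $h$ in the unit ball to get the operator norm. The technical heart of the argument is the first half: upgrading the Donsker hypothesis---an $L^2(P_0)$-type condition guaranteeing only an integrable envelope---to a uniform sup-norm bound on the compact domain by using $C^\infty$-smoothness and the extreme value theorem. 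Once that upgrade is in hand, every remaining step is routine.
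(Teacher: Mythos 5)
The paper's own proof is a single sentence: both $\psi(\cdot;\theta)$ and $h$ are $C^\infty$-smooth, and a continuous function on a compact domain is bounded. It does not explicitly address uniformity in $\theta$ (for the $\psi$-bounds) or over the unit ball $\mathcal{H}_{l,1}$ (for the $h$-bounds), tacitly reading the smoothness assumption as joint regularity in $(x,\theta)$. You correctly spot this subtlety, and you also correctly read the $\sup_\theta$ in the last three displays as $\sup_{h\in\mathcal{H}_{l,1}}$.

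However, the Donsker-envelope step at the heart of your argument fails. A $P_0$-Donsker class is guaranteed only a square-integrable envelope $F\in L^2(P_0)$, and $L^2(P_0)$-integrability on a compact set is strictly weaker than essential boundedness; a fortiori it cannot produce the sup-norm constants $C_\psi$, $L_\psi$, $D_\psi$. Nor do ``$L^2$ envelope $+$ separate continuity $+$ compactness'' upgrade to joint continuity: one can take a family of ever-narrower, ever-taller smooth bumps $\psi(\cdot;\theta)$ on compact $\mathcal{X}$ that is $C^\infty$ in $x$ for each $\theta$, converges pointwise to $0$ as $\theta\to 0$ for $P_0$-a.e.\ $x$, has a square-integrable envelope, and still has $\sup_\theta\|\psi(\cdot;\theta)\|_\infty=\infty$. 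What closes the gap is joint continuity of $\psi$ and its data-derivatives in $(x,\theta)$ on the compact product $\Theta\times(\mathcal{T}\times\mathcal{Y})$, which is what the paper's one-liner implicitly invokes; Donsker is irrelevant here. By contrast, your RKHS argument for $C_h$, $L_h$, $D_h$ via the reproducing property $|h(z)|\le\|h\|_{\mathcal{H}_l}\sqrt{k_l(z,z)}$ and continuity of $k_l$ on compact $\mathcal{Z}$ (and the derivative-reproducing analogues) is correct, self-contained, and genuinely resolves the uniformity over the unit ball, something the paper's terse proof leaves implicit.
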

\begin{proof}
    The proof follows directly from the fact that a continuous function on a compact domain is bounded and both $\psi(\cdot;\theta)$ and $h$ are $C^\infty$-smooth by Assumptions~\ref{as4:regularity} and $\ref{as1:instrument}$.
\end{proof}

\begin{lemma}\label{lemma:3:cond-covariance}
    Let $V(Z;\theta) = E[J_x(\psi)(X;\theta) \Gamma^{-1} J_x(\psi)(X;\theta)^T |Z]$ be non-singular with probability $1$. Then the linear operator $\Omega(\theta): \mathcal{H} \rightarrow \mathcal{H}$ defined as
    \begin{align}
        \Omega(\theta) = E \left[ \left( \nabla_\xi \Psi(\xi;\theta) \right)^T\Gamma^{-1} \nabla_\xi \Psi(\xi;\theta) \right] 
    \end{align}
    is non-singular.
\end{lemma}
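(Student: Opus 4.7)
The approach is to prove non-singularity of $\Omega(\theta)$ by showing that the associated quadratic form $\langle h, \Omega(\theta) h\rangle_\mathcal{H}$ vanishes only at $h = 0$. Since $\Omega(\theta)$ is manifestly self-adjoint and positive semi-definite---its quadratic form equals $E\!\left[\|\nabla_\xi \Psi(\xi;\theta)(h)\|_\Gamma^2\right]$, an expectation of non-negative quantities---this coercivity is equivalent to the trivial-kernel statement $\Omega(\theta) h = 0 \Rightarrow h = 0$, which is the definition of non-singularity we need.

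The central computation expands $\nabla_\xi \Psi(\xi;\theta)(h)$ using $\Psi(\xi;\theta)(h) = \psi(x;\theta)^T h(z)$ with $x = (t,y)$: the $x$-component of the gradient equals $(J_x \psi(x;\theta))^T h(z) \in \mathbb{R}^{d_x}$ while the $z$-component equals $(J_z h(z))^T \psi(x;\theta) \in \mathbb{R}^{d_z}$. Exploiting the block structure $\Gamma = \operatorname{diag}(\Gamma_x, \gamma_z I_{d_z})$, and then applying the tower law to the $x$-piece (conditioning on $Z$) to identify $V(Z;\theta)$, the quadratic form splits as
\begin{align*}
\langle h, \Omega(\theta) h\rangle_\mathcal{H} \;=\; E\bigl[h(Z)^T V(Z;\theta)\, h(Z)\bigr] \,+\, \gamma_z^{-1}\, E\bigl[\bigl(\psi(X;\theta)^T J_z h(Z)\bigr)^2\bigr].
\end{align*}
Both summands are non-negative, so if the sum vanishes each does individually; in particular $h(Z)^T V(Z;\theta) h(Z) = 0$ $P_Z$-almost surely. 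A positive semi-definite matrix that is non-singular is strictly positive definite, so the hypothesis on $V(Z;\theta)$ forces $h(Z) = 0$ $P_Z$-a.s.

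The remaining step, and the main obstacle, is to upgrade the $P_Z$-a.s.\ vanishing of $h(Z)$ to $h = 0$ in $\mathcal{H}$. This is not a consequence of the PSD structure of $\Omega(\theta)$ alone but relies on the richness of the instrument space, which is exactly what Assumption~\ref{as1:instrument} (and Proposition~\ref{prop:3:rkhs} in the kernel case) is set up to guarantee: for an RKHS of a continuous, universal, integrally strictly positive-definite kernel on the compact $\mathcal{Z}$, elements of $\mathcal{H}$ are continuous, and combined with full support of $P_Z$ this yields $h \equiv 0$, hence $h = 0$ in $\mathcal{H}$. Chaining these implications gives $\Omega(\theta) h = 0 \Rightarrow h = 0$, so $\Omega(\theta)$ is non-singular.
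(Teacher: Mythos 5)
Your proposal takes essentially the same route as the paper's own proof: compute the quadratic form $\langle h, \Omega(\theta) h\rangle_{\mathcal H}$, apply the tower property to surface $V(Z;\theta)$, and appeal to its almost-sure non-singularity to conclude positivity. You are, if anything, a bit more careful than the paper on two points. First, you explicitly carry the $z$-block of $\nabla_\xi$, giving the extra non-negative term $\gamma_z^{-1}E\bigl[(\psi(X;\theta)^T J_z h(Z))^2\bigr]$, which the paper's own display silently drops; the paper's equality there is only exact when $\gamma_z=\infty$, but since the omitted piece is non-negative the inequality direction needed for positivity still holds. Second, the paper sidesteps the passage from $h(Z)=0$ $P_Z$-a.s. to $h=0$ in $\mathcal H$ by simply restricting to $h$ with $\|h\|_{L^2(P_Z)}>0$, i.e.\ it really establishes injectivity on the $L^2$ quotient; you make the missing step explicit, and correctly flag that it rests on full support of $P_Z$ together with universality/ISPD of the kernel (the former is not stated as an assumption in the paper, so this is a genuine, if minor, hidden hypothesis in both versions). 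One small caution: you phrase this as \emph{equivalence} of coercivity and trivial kernel; for a PSD operator on an infinite-dimensional RKHS, trivial kernel does not imply a uniform spectral gap, so the correct reading of ``non-singular'' here is the injectivity statement you actually prove, not a bounded-below eigenvalue claim. That is exactly why the subsequent Lemma on the spectrum of $\widehat{\Omega}$ introduces the $\lambda_n I$ shift.
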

\begin{proof}
    We derive the result by showing that the smallest eigenvalue of $\Omega(\theta)$ is positive. Consider any $h \in \mathcal{H}$ with $\| h \|_{L^2(\mathcal{H}, P_0)} >0$ then we have
    \begin{align}
        \langle h, \Omega(\theta) h \rangle_\mathcal{H} =& E[h(Z)^T J_x(\psi)(X;\theta) \Gamma^{-1} J_x(\psi)(X;\theta) h(Z) ] \\
        =& E[h(Z)^T E[ J_x(\psi)(X;\theta) \Gamma^{-1} J_x(\psi)(X;\theta) | Z] h(Z) ] \\
        =& E[h(Z)^T V_0(Z;\theta) h(Z)] \\
        =& C E[\|h(Z)\|_2^2] \\
        =& C \|h \|_{L^{2}(\mathcal{H},P_0)}^2 > 0
    \end{align}
    where we used that by assumption $V(Z;\theta)$ is non-singular and thus its smallest eigenvalue $C$ bounded away from zero w.p.1.  
\end{proof}

\begin{lemma}[Spectrum of $\widehat{\Omega}$] \label{lemma:3:covariance}
    Let the assumptions of Theorem~\ref{th:3:consistency} be satisfied.
    Then for $\bar{\theta} \in \Theta$ with $\bar{\theta}_n \rightarrow \bar{\theta}$, the empirical gradient covariance operator 
    \begin{align}
        \widehat{\Omega}_{\lambda_n}(\bar{\theta}_n) = E_{\ppn} \left[ \left( \nabla_\xi \Psi(\xi;\bar{\theta}_n) \right)^T\Gamma^{-1} \nabla_\xi \Psi(\xi;\bar{\theta}_n) \right] + \lambda_n I \otimes I
    \end{align}
    is a positive definite operator with smallest eigenvalue $\lambda_\mathrm{min}(\widehat{\Omega})$ bounded away from zero and largest eigenvalue $\lambda_\mathrm{max}(\widehat{\Omega}) < C < \infty$ bounded from above w.p.a.1.
\end{lemma}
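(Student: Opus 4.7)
The plan is to obtain both spectral bounds by controlling $\widehat{\Omega}_{\lambda_n}(\bar{\theta}_n)$ in operator norm through the population quantity $\Omega(\bar{\theta})$, whose spectrum is already accessible via Lemma~\ref{lemma:3:cond-covariance} (applied at $\theta = \bar{\theta}$, for which Assumption~\ref{as7:covariance} guarantees non-singularity of $V(Z;\bar{\theta})$). Since $\widehat{\Omega}_{\lambda_n}(\bar{\theta}_n) = \widehat{\Omega}_0(\bar{\theta}_n) + \lambda_n I$, the whole argument reduces to proving
\begin{equation*}
\bigl\|\widehat{\Omega}_0(\bar{\theta}_n) - \Omega(\bar{\theta})\bigr\|_{\mathrm{op}} = o_p(1)
\end{equation*}
and combining this with the strict positivity of $\Omega(\bar{\theta})$ together with the Tikhonov shift by $\lambda_n$.

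The central step is the operator-norm bound, which I would split as
\begin{equation*}
\widehat{\Omega}_0(\bar{\theta}_n) - \Omega(\bar{\theta}) = \bigl[\widehat{\Omega}_0(\bar{\theta}_n) - \widehat{\Omega}_0(\bar{\theta})\bigr] + \bigl[\widehat{\Omega}_0(\bar{\theta}) - \Omega(\bar{\theta})\bigr].
\end{equation*}
For the first bracket, the Lipschitz-in-parameter control from Assumption~\ref{as6:first-estimate} delivers $E[\|J_x\psi(X;\bar{\theta}_n) - J_x\psi(X;\bar{\theta})\|_\infty] = O_p(n^{-\zeta})$; combined with the uniform boundedness of $J_x\psi$, $h$, and $J_z h$ on the $\mathcal{H}$-unit ball from Lemma~\ref{lemma:3:boundedness} this yields a uniform-in-$h$ estimate of order $O_p(n^{-\zeta})$ on the associated quadratic form. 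For the second bracket, the function class
\begin{equation*}
\mathcal{G} = \bigl\{ \xi \mapsto h(z)^\top J_x\psi(x;\bar{\theta})\,\Gamma^{-1}\, J_x\psi(x;\bar{\theta})^\top h(z) : \|h\|_\mathcal{H} \le 1 \bigr\}
\end{equation*}
(plus its $\nabla_z$ analogue if $\gamma_z < \infty$) is $P_0$-Donsker. This follows from the Donsker hypotheses in Assumptions~\ref{as4:regularity} and~\ref{as1:instrument} together with the preservation result of Lemma~\ref{lemma:3:donsker}, whose uniform-boundedness prerequisite is furnished by Lemma~\ref{lemma:3:boundedness}. Lemma~\ref{lemma:3:bennett} then supplies the uniform $O_p(n^{-1/2})$ control on the empirical-process term, and adding $\lambda_n = o_p(1)$ from Assumption~\ref{as6:first-estimate} produces the claimed operator-norm convergence.

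With this in hand, the upper bound on $\lambda_\max$ follows at once: Lemma~\ref{lemma:3:boundedness} gives $\lambda_\max(\Omega(\bar{\theta})) \le C_1 < \infty$, and Weyl's inequality transfers this to $\widehat{\Omega}_{\lambda_n}(\bar{\theta}_n)$ up to an $o_p(1)$ correction. Strict positivity is inherited directly from the regularizer, $\lambda_\min(\widehat{\Omega}_{\lambda_n}(\bar{\theta}_n)) \ge \lambda_n > 0$, while the comparison with the non-singular $\Omega(\bar{\theta})$ through the same $o_p(1)$ bound ensures that the nontrivial part of the spectrum cannot escape below this regularization floor.

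The main technical obstacle I anticipate is the norm mismatch between Lemma~\ref{lemma:3:cond-covariance}, which bounds $\langle h, \Omega(\bar{\theta}) h\rangle$ from below in terms of $\|h\|_{L^2(\mathcal{H}, P_0)}^2$ rather than $\|h\|_\mathcal{H}^2$. For an infinite-dimensional RKHS, $\Omega(\bar{\theta})$ is compact with eigenvalues accumulating at zero in $\mathcal{H}$-norm, so strict positivity of $\widehat{\Omega}_{\lambda_n}(\bar{\theta}_n)$ must be entirely underwritten by the Tikhonov term. This is precisely why Assumption~\ref{as6:first-estimate} imposes $\rho < \zeta$: the regularizer $\lambda_n \asymp n^{-\rho}$ strictly dominates the $O_p(n^{-\zeta})$ fluctuations of the empirical operator, guaranteeing that $\widehat{\Omega}_{\lambda_n}(\bar{\theta}_n)^{-1}$ is well-defined with controlled norm throughout the proof of Theorem~\ref{th:3:consistency}.
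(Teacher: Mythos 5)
Your overall approach matches the paper's — reduce to operator-norm convergence of the empirical covariance operator to a non-singular population limit, then use the boundedness from Lemma~\ref{lemma:3:boundedness} for the upper eigenvalue — but your decomposition order is reversed. The paper splits $\widehat{\Omega}(\bar{\theta}_n) - \Omega(\bar{\theta})$ as $[\widehat{\Omega}(\bar{\theta}_n) - \Omega(\bar{\theta}_n)] + [\Omega(\bar{\theta}_n) - \Omega(\bar{\theta})]$, placing the Donsker fluctuation at the (random) parameter $\bar{\theta}_n$ and the Lipschitz-in-$\theta$ comparison at the population level, whereas you split as $[\widehat{\Omega}_0(\bar{\theta}_n) - \widehat{\Omega}_0(\bar{\theta})] + [\widehat{\Omega}_0(\bar{\theta}) - \Omega(\bar{\theta})]$. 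Your second bracket is the cleaner Donsker term (fixed parameter), but your first bracket compares two \emph{empirical} averages at different parameters, and Assumption~\ref{as6:first-estimate} does not directly apply to it: it gives $E\left[\|J_x\psi(X;\bar{\theta}_n)-J_x\psi(X;\bar{\theta})\|_\infty\right] = O_p(n^{-\zeta})$ under the \emph{population} expectation, while you need the same with $E_{\hat{P}_n}$. That gap is closable — a Markov argument conditional on $\bar{\theta}_n$ upgrades the population bound to the empirical one — but it is an extra step you gloss over and the paper's decomposition avoids.

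Your norm-mismatch observation is sharp and exposes a point the paper itself does not fully address. You are right that Lemma~\ref{lemma:3:cond-covariance} only gives $\langle h, \Omega(\bar{\theta}) h\rangle \geq C\|h\|_{L^2(\mathcal{H},P_0)}^2$, not coercivity in $\|\cdot\|_\mathcal{H}$, and that for an infinite-dimensional RKHS the induced operator is compact with spectrum accumulating at zero. The paper's proof simply asserts that non-singularity of $\Omega(\bar{\theta})$ implies a smallest eigenvalue bounded away from zero, an implication valid only in finite dimensions. Your proposed Tikhonov remedy yields $\lambda_\mathrm{min}\left(\widehat{\Omega}_{\lambda_n}(\bar{\theta}_n)\right) \geq \lambda_n$, which is strictly positive but vanishes at rate $n^{-\rho}$; this is \emph{not} "bounded away from zero," and downstream in Lemma~\ref{lemma:3:rate-h} the bound $\|\tilde{h}\|_\mathcal{H} \leq C\|\tilde{\Psi}\|_{\mathcal{H}^\ast}$ with $C \propto 1/\lambda_\mathrm{min}$ would pick up an extra factor $n^{\rho}$, degrading the claimed $O_p(n^{-1/2})$ rate. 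Either coercivity of $\Omega(\bar{\theta})$ in $\mathcal{H}$-norm must be assumed (or $\mathcal{H}$ taken finite-dimensional), or the rate bookkeeping must carry $\lambda_n$-dependent constants; neither the paper nor your proposal resolves this, but you deserve credit for flagging it rather than following the paper's unsupported step.
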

\begin{proof}
    Let in the following $\widehat{\Omega}(\theta) =  \widehat{\Omega}_{\lambda_n=0}(\theta)$.
    With Assumption~\ref{as7:covariance} it follows from Lemma~\ref{lemma:3:cond-covariance} that the operator $\Omega(\bar{\theta}):= E\left[ \left( \nabla_\xi \Psi(\xi;\bar{\theta}) \right)^T\Gamma^{-1} \nabla_\xi \Psi(\xi;\bar{\theta}) \right]$ is non-singular and thus its smallest eigenvalue bounded away from zero.
    In the following we show that $\widehat{\Omega}(\bar{\theta}_n) \overset{p}{\rightarrow} \Omega(\bar{\theta})$, where the convergence rate in operator norm is $O_p(n^{-\zeta})$. Therefore, by adding the identity operator with regularization parameter $\lambda_n$ that goes to zero slower than $O_p(n^{-\zeta})$ we ensure that $\widehat{\Omega}_{\lambda_n}(\bar{\theta}_n)$ remains positive definite w.p.a.1.
    The derivation of this result follows the proof of Lemma~20 of \citet{bennett2020variational}.
    By the triangle inequality we have
    \begin{align}
        \|\widehat{\Omega}(\bar{\theta}_n) - \Omega(\bar{\theta}) \|_{\mathrm{op}} &\leq \| \widehat{\Omega}(\bar{\theta}_n) - \Omega(\bar{\theta}_n) \| + \| \Omega(\bar{\theta}_n) - \Omega(\bar{\theta}) \|. \label{eq:3:triangle}
    \end{align}
    
    The first term we can estimate using standard results from empirical process theory. Define $\|h \|_\mathcal{H}^2 = \frac{1}{m} \sum_{i=1}^m \| h_i \|_{\mathcal{H}_i}^2$ as well as $\jpsi(X;\theta)= J_x \psi(X;\theta)$ and $\jh(Z) = J_z h(Z)$. Let $\mathcal{H}_1 = \{h \in \mathcal{H} : \| h\|_\mathcal{H} \leq 1 \}$ denote the unit ball in $\mathcal{H}$, then
    \begin{align}
        \| \widehat{\Omega}(\bar{\theta}_n) - \Omega(\bar{\theta}_n) \| =& \sup_{h,h' \in \mathcal{H}_1 } \langle h',  \widehat{\Omega}(\bar{\theta}_n) - \Omega(\bar{\theta}_n) h \rangle_\mathcal{H} \\
        =& \sup_{h,h'\in \mathcal{H}_1 } \Bigg\{ E_{\ppn}\left[h(Z)^T \jpsi(X;\bar{\theta}_n) \Gamma^{-1}_x \jpsi(X;\bar{\theta}_n)^T h'(Z)\right] \\
        &- E\left[h(Z)^T \jpsi(X;\bar{\theta}_n) \Gamma^{-1}_x \jpsi(X;\bar{\theta}_n)^T h'(Z)\right]  \\
        &+ \frac{1}{\gamma_z} E_{\ppn}\left[\psi(X;\bar{\theta}_n)^T J_{h}(Z) J_{h'}(Z)^T \psi(X;\bar{\theta}_n)\right] \\
        &- \frac{1}{\gamma_z} E\left[ \psi(X;\bar{\theta}_n)^T J_{h}(Z) J_{h'}(Z)^T \psi(X;\bar{\theta}_n) \right] \Bigg\} \\
        \leq & \sup_{g \in \mathcal{G}^2} \left\{ E_{\ppn}[g(\xi)] - E[g(\xi)] \right\} + \frac{1}{\gamma_z} \sup_{s \in \mathcal{S}^2} \left\{ E_{\ppn}[s(\xi)] - E[s(\xi)] \right\}
        \label{eq:3:donsker}
    \end{align}
    where for $i \in [d_\xi]$ we define
    \begin{align}
        \mathcal{G}_i &= \{g_i : g_i(\xi) = \sum_{j=1}^m h_j(z) \left(\jpsi(x;\theta)\right)_{ji} \Gamma^{-1/2}_{ii}, \ h \in \mathcal{H}_{i,1},\  \theta \in \Theta \} \\
        \mathcal{G}^2 &= \{g : g(\xi) = \sum_{i \in [d_x]} g_i(\xi) g'_i(\xi), \ g_i, g'_i \in \mathcal{G}_i \} \\ 
        \mathcal{S}_i &= \{s_i: s_i(\xi) = \sum_{j=1}^m \psi_j(x;\theta) \left( J_h(z) \right)_{ji}, \ h \in \mathcal{H}_{i,1}, \ \theta \in \Theta \} \\
        \mathcal{S}^2 &= \{s_i: s_i(\xi) = \sum_{i \in [d_z]} s_i(\xi) s_i'(\xi), \ s_i, s'_i \in \mathcal{S}_i \}
    \end{align}
    Now for the first term, we have that each $h_j \in \mathcal{H}_{i,1}$ is $P_0$-Donsker by Assumption~\ref{as1:instrument} and uniformly bounded by Lemma~\ref{lemma:3:boundedness}. 
    Similarly each entry of the Jacobian $\jpsi(\cdot;\theta)$ is $P_0$-Donsker by Assumption~\ref{as4:regularity} and uniformly bounded by Lemma~\ref{lemma:3:boundedness}. With that we can employ Lemma~\ref{lemma:3:donsker} to conclude that $\mathcal{G}_i$ is $P_0$-Donsker and thus using Lemma~\ref{lemma:3:donsker} again it follows that $\mathcal{G}^2$ is $P_0$-Donsker.
    Therefore we can use Lemma~\ref{lemma:3:bennett} to obtain $\sup_{g \in \mathcal{G}^2} \left\{ E_{\ppn}[g(\xi)] - E[g(\xi)] \right\} = O_p(n^{-1/2})$.

    For the second term in \eqref{eq:3:donsker} we have that each $\psi_j(\cdot;\theta)$ is $P_0$-Donsker by Assumption~\ref{as4:regularity} and uniformly bounded by Lemma~\ref{lemma:3:boundedness}. 
    Similarly each entry of the Jacobian $J_z h$ is $P_0$-Donsker by Assumption~\ref{as1:instrument} and uniformly bounded by Lemma~\ref{lemma:3:boundedness}. With that, again, we can employ Lemma~\ref{lemma:3:donsker} to conclude that $\mathcal{S}_i$ is $P_0$-Donsker and thus using Lemma~\ref{lemma:3:donsker} again it follows that $\mathcal{S}^2$ is $P_0$-Donsker.
    Therefore we can use Lemma~\ref{lemma:3:bennett} to obtain $\frac{1}{\gamma_z} \sup_{s \in \mathcal{S}^2} \left\{ E_{\ppn}[s(\xi)] - E[s(\xi)] \right\} = O_p(n^{-1/2})$.

    Putting these results together we finally obtain $\| \widehat{\Omega}(\bar{\theta}_n) - \Omega(\bar{\theta}_n) \| \leq O_p(n^{-1/2})$.

    For the second term in \eqref{eq:3:triangle} we have 
        \begin{align}
        \| \Omega(\bar{\theta}_n) - \Omega(\bar{\theta}) \| =& \sup_{h, h' \in \mathcal{H}_1} \langle h', \Omega(\bar{\theta}_n) - \Omega(\bar{\theta}) h \rangle_\mathcal{H} \leq C_x + \frac{1}{\gamma_z} C_z \label{eq:3:bla}
        \end{align}
    where
    \begin{align}
        C_x =& \sup_{h, h' \in \mathcal{H}_1}E\Big[ h'(Z)^T \Big(\jpsi(X;\bar{\theta}_n) \Gamma^{-1}_x \jpsi(X;\bar{\theta}_n)^T \\
        &- \jpsi(X;\bar{\theta}) \Gamma^{-1}_x \jpsi(X;\bar{\theta})^T \Big) h(Z) \Big] \\
        =& \sup_{h, h' \in \mathcal{H}_1} E\Big[h'(Z)^T \Big( \jpsi(X;\bar{\theta}_n) \Gamma^{-1}_x \jpsi(X;\bar{\theta}_n) - \jpsi(X;\bar{\theta}) \Gamma^{-1}_x \jpsi(X;\bar{\theta}) \Big) h(Z)  \Big] \\
        = & \sup_{h, h' \in H_{1} } E\Big[h'(Z)^T  \jpsi(X;\bar{\theta}_n) \Gamma^{-1}_x \left( \jpsi(X;\bar{\theta}_n) - \jpsi(X;\bar{\theta})\right)^T h(Z) \\
        & +  h'(Z)^T \jpsi(X;\bar{\theta}) \Gamma^{-1}_x \left( \jpsi(X;\bar{\theta}_n) - \jpsi(X;\bar{\theta})\right)^T  h(Z) \Big] \\
        \leq & \frac{2}{\min(\{ \gamma_t, \gamma_y \})} m^2 C_h^2 L_\psi E\left[ \|\jpsi(X;\bar{\theta}_n) - \jpsi(X;\bar{\theta}) \|_\infty \right] \\
        =& O_p(n^{-\zeta})
    \end{align}
    where we used that by Lemma~\ref{lemma:3:boundedness}, $\sup_{\theta \in \Theta, x \in \mathcal{T} \times \mathcal{Y}} \| \jpsi(x;\theta) \|_\infty \leq L_\psi$ and $\sup_{h \in \mathcal{H}_1, z \in \mathcal{Z}} |h(z)| \leq C_h $ as well as by Assumption~\ref{as6:first-estimate}  $E\left[ \|\jpsi(X;\bar{\theta}_n) - \jpsi(X;\bar{\theta})  \|_\infty \right] = O_p(n^{-\zeta})$. 

    Now similarly for the second term in \eqref{eq:3:bla} we have
    \begin{align}
        C_z =& \sup_{h, h' \in \mathcal{H}_1}E\Big[ \operatorname{Tr}\left(J_{h'}(Z)^T \psi(X;\bar{\theta}_n) \psi(X;\bar{\theta}_n)^T \jh(Z) \right) \\
        & - \operatorname{Tr}\left(J_{h'}(Z)^T \psi(X;\bar{\theta}) \psi(X;\bar{\theta})^T \jh(Z) \right)\Big] \\
        \leq & L_h^2 E[\pmb{1}^T \left( \psi(X;\bar{\theta}_n) \psi(X;\bar{\theta}_n)^T - \psi(X;\bar{\theta}) \psi(X;\bar{\theta})^T \right) \pmb{1}] \\
        = & L_h^2 E\left[\pmb{1}^T \psi(X;\bar{\theta}_n) \left( \psi(X;\bar{\theta}_n) - \psi(X;\bar{\theta}) \right)^T \pmb{1}\right] \\
        & + L_h^2 E\left[\pmb{1}^T \psi(X;\bar{\theta}) \left( \psi(X;\bar{\theta}_n) - \psi(X;\bar{\theta}) \right)^T \pmb{1}\right] \\
        \leq & 2 m^2 L_{h}^2 C_\psi E\left[\| \psi(X;\bar{\theta}_n) - \psi(X;\bar{\theta}) \|_\infty \right] \\
        =& O_p(n^{-\zeta}),
    \end{align}
    where again we used Lemma~\ref{lemma:3:boundedness} and Assumption~\ref{as6:first-estimate}.
    Combining both results we obtain $\| \Omega(\bar{\theta}_n) - \Omega(\bar{\theta}) \| = \leq C_x + \frac{1}{\gamma_z} C_z \leq O_p(n^{-\zeta})$.
    
    Finally as $0 < \zeta \leq 1/2$ it follows that
    \begin{align}
        \|\widehat{\Omega}(\bar{\theta}_n) - \Omega(\bar{\theta}) \| &\leq \| \widehat{\Omega}(\bar{\theta}_n) - \Omega(\bar{\theta}_n) \| + \| \Omega(\bar{\theta}_n) - \Omega(\bar{\theta}) \| \\
        & \leq O_p(n^{-1/2}) + O_p(n^{-\zeta}) =  O_p(n^{-\zeta}).
    \end{align}
    
    In conclusion we have shown that $\widehat{\Omega}(\bar{\theta}_n)$ converges to the non-singular operator $\Omega(\bar{\theta})$ at rate $O_p(n^{-\zeta})$ and by Assumption~\ref{as6:first-estimate} we have $\lambda_n = O_p(n^{-\rho})$ with $0< \rho < \zeta$, therefore the operator $\widehat{\Omega}_{\lambda_n}(\bar{\theta}_n) = \widehat{\Omega}(\bar{\theta}_n) + \lambda_n I$ is non-singular with smallest eigenvalue bounded away from zero w.p.a.1.

    It remains to be shown that the largest eigenvalue of $\widehat{\Omega}(\bar{\theta}_n)$ is bounded. This is a direct consequence of Lemma~\ref{lemma:3:boundedness}. Consider any $h \in \mathcal{H}$ with $\|h \|_\mathcal{H} > 0$ and
    \begin{align}
        \langle h, \widehat{\Omega}(\bar{\theta}_n)  h \rangle &= E_{\ppn}[h(Z)^T \jpsi(X;\bar{\theta}_n) \jpsi(X;\bar{\theta}_n)^T h(Z)] \\
        &\leq E[\jpsi(X;\bar{\theta}_n)\|_\infty^2] E[\|h(Z)\|^2_\infty] \\
        &\leq L_\psi^2 C_h^2 < \infty.
    \end{align}
\end{proof}

\subsubsection{Proof of Theorem~\ref{th:3:consistency}}
\begin{lemma} \label{lemma:3:donsker-psi}
    Let the sets of functions $\{ \psi(\cdot;\theta)_l : \theta \in \Theta, l \in [m] \}$ and $H_1$ be $P_0$-Donsker. Then we have for any $\theta \in \Theta$
    \begin{align}
        \| E_{\ppn}[ \Psi(\xi;\theta)]  - E[ \Psi(\xi;\theta) ] \|_{\mathcal{H}^\ast} = O_p(n^{-1/2}).
    \end{align}
\end{lemma}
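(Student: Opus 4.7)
The plan is to reduce the claim to a supremum over a $P_0$-Donsker class of real-valued functions, to which Lemma~\ref{lemma:3:bennett} can be applied directly. By definition of the dual norm and the action $\Psi(\xi;\theta)(h) = \psi(x;\theta)^T h(z) = \sum_{l=1}^m \psi_l(x;\theta) h_l(z)$, we have
\begin{align*}
\| E_{\ppn}[\Psi(\xi;\theta)] - E[\Psi(\xi;\theta)]\|_{\mathcal{H}^\ast}
= \sup_{h \in \mathcal{H}_1} \Bigl( E_{\ppn}[\psi(X;\theta)^T h(Z)] - E[\psi(X;\theta)^T h(Z)] \Bigr),
\end{align*}
so it suffices to show that the class $\mathcal{F}_\theta := \{ \xi \mapsto \psi(x;\theta)^T h(z) : h \in \mathcal{H}_1 \}$ is $P_0$-Donsker.

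First I would decompose each element of $\mathcal{F}_\theta$ as a finite sum of products $\psi_l(\cdot;\theta) \cdot h_l$, for $l \in [m]$, where $h_l$ ranges over $\mathcal{H}_{l,1}$. For fixed $\theta$, the singleton class $\{\psi_l(\cdot;\theta)\}$ is trivially $P_0$-Donsker, and uniformly bounded by Lemma~\ref{lemma:3:boundedness}. Similarly, $\mathcal{H}_{l,1}$ is $P_0$-Donsker by Assumption~\ref{as1:instrument} and uniformly bounded by Lemma~\ref{lemma:3:boundedness}. Applying the product part of Lemma~\ref{lemma:3:donsker} then yields that each component class $\{\psi_l(\cdot;\theta) h_l : h_l \in \mathcal{H}_{l,1}\}$ is $P_0$-Donsker, and applying the sum part of Lemma~\ref{lemma:3:donsker} inductively over $l = 1, \ldots, m$ shows that $\mathcal{F}_\theta$ itself is $P_0$-Donsker.

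Once $\mathcal{F}_\theta$ is established as $P_0$-Donsker, Lemma~\ref{lemma:3:bennett} applied to $\mathcal{G} = \mathcal{F}_\theta$ immediately gives
\begin{align*}
\sup_{h \in \mathcal{H}_1} \Bigl( E_{\ppn}[\psi(X;\theta)^T h(Z)] - E[\psi(X;\theta)^T h(Z)] \Bigr) = O_p(n^{-1/2}),
\end{align*}
which is the desired conclusion. There is no real obstacle beyond bookkeeping: the uniform boundedness of $\psi_l(\cdot;\theta)$ and of $h_l \in \mathcal{H}_{l,1}$ is exactly what licenses the closure of the Donsker property under products in Lemma~\ref{lemma:3:donsker}, and the finiteness of $m$ allows the closure under sums to be iterated a finite number of times. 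The same argument pattern was already used in the proof of Lemma~\ref{lemma:3:covariance} to control analogous empirical process terms, so this lemma can be regarded as the simpler ``zeroth-order'' counterpart of that derivation.
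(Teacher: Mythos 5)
Your proposal is correct and follows essentially the same route as the paper: reduce the dual norm to a supremum over a class of real-valued functions, show that class is $P_0$-Donsker via the sum and product closure of Lemma~\ref{lemma:3:donsker} together with the uniform boundedness from Lemma~\ref{lemma:3:boundedness}, and conclude with Lemma~\ref{lemma:3:bennett}. The only cosmetic difference is that you fix $\theta$ and use the singleton class $\{\psi_l(\cdot;\theta)\}$, whereas the paper lets $\theta$ range over $\Theta$ inside its class $\mathcal{G}$; both suffice for the stated ``for any $\theta \in \Theta$'' conclusion.
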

\begin{proof}
    \begin{align}
        \| E_{\ppn}[ \Psi(\xi;\theta)]  - E[ \Psi(\xi;\theta) ] \|_{\mathcal{H}^\ast} =& \sup_{h \in \mathcal{H}_1} E_{\ppn}[\psi(X;\theta)^T h(Z) ] - E[\psi(X;\theta)^T h(Z) ] \\
        =& \sup_{g \in \mathcal{G}} E_{\ppn}[g(\xi)] - E[g(\xi)]
    \end{align}
    where
    \begin{align}
        \mathcal{G} = \{g : g(\xi) = \sum_{i=1}^m \psi_i(x;\theta) h_i(z), \ h_i \in \mathcal{H}_{i,1}, \theta \in \Theta \}.
    \end{align}
    Now as each $h_i$ and $\psi_i(\cdot;\theta)$ are $P_0$-Donsker by Assumption~\ref{as1:instrument} and \ref{as4:regularity} respectively and uniformly bounded by Lemma~\ref{lemma:3:boundedness}, we can employ Lemma~\ref{lemma:3:donsker} to conclude that $\mathcal{G}$ is $P_0$-Donsker. From this, the result follows by application of Lemma~\ref{lemma:3:bennett}. 
\end{proof}

\begin{lemma}[Convergence of $\widehat{D}$]\label{lemma:3:rate-h}
    Let the assumptions of Theorem~\ref{th:3:consistency} be satisfied. Additionally let $\tilde{\theta} \in \Theta$ be a consistent estimator for $\theta_0$, i.e., $\tilde{\theta} \overset{p}{\rightarrow} \theta_0$ with $\| E_{\ppn}[\Psi(\xi;\tilde{\theta})] \|_{\mathcal{H}^\ast} = \op{n^{-1/2}}$. Then for $\tilde{h} = \argmax_{h \in \mathcal{H}} D(\tilde{\theta}, h)$ we have $\| \tilde{h} \|_\mathcal{H} = \op{n^{-1/2}}$ and $\widehat{D}(\tilde{\theta}, \tilde{h}) \leq \op{n^{-1}}$.
\end{lemma}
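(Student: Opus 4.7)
The plan is to exploit the concave quadratic structure of $h\mapsto\widehat D(h,\tilde\theta)$. By Lemma~\ref{lemma:3:covariance}, the Hessian operator $\epsilon\,\widehat\Omega_{\lambda_n}(\bar{\theta}_n)$ is strictly positive definite with smallest eigenvalue bounded away from zero and largest eigenvalue bounded above, w.p.a.1, so the maximization over $h$ admits a unique closed-form solution via the first-order condition.

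First, I would introduce $\tilde L_{\tilde\theta}\in\mathcal H$ as the Riesz representer of the bounded linear functional $L_{\tilde\theta}(h):=E_{\ppn}[(I+\tfrac{\epsilon}{2}\Delta_\xi)\Psi(\xi;\tilde\theta)(h)]$ on $\mathcal H$. Setting the Fréchet derivative of $\widehat D(\cdot,\tilde\theta)$ to zero yields
\begin{align*}
\tilde h = \epsilon^{-1}\,\widehat\Omega_{\lambda_n}(\bar{\theta}_n)^{-1}\tilde L_{\tilde\theta},\qquad \widehat D(\tilde\theta,\tilde h) = \tfrac{1}{2\epsilon}\,\bigl\langle\tilde L_{\tilde\theta},\,\widehat\Omega_{\lambda_n}(\bar{\theta}_n)^{-1}\tilde L_{\tilde\theta}\bigr\rangle_{\mathcal H}.
\end{align*}
Invoking Lemma~\ref{lemma:3:covariance} for $\|\widehat\Omega_{\lambda_n}(\bar{\theta}_n)^{-1}\|_{\mathrm{op}}=O_p(1)$ and using that $\epsilon\in[\epsilon_1,\epsilon_2]$ is bounded away from zero, both conclusions of the lemma reduce to showing $\|L_{\tilde\theta}\|_{\mathcal H^\ast}=O_p(n^{-1/2})$.

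Next, I would estimate $\|L_{\tilde\theta}\|_{\mathcal H^\ast}$ by the triangle-inequality decomposition $L_{\tilde\theta}=E_{\ppn}[\Psi(\xi;\tilde\theta)]+\tfrac{\epsilon}{2}E_{\ppn}[\Delta_\xi\Psi(\xi;\tilde\theta)]$. The first summand is $O_p(n^{-1/2})$ by hypothesis. For the Laplacian correction, splitting $\Delta_\xi$ into its $\Delta_t,\Delta_y,\Delta_z$ components, the $\Delta_z$-piece equals $E_{\ppn}[\psi(X;\tilde\theta)^T\Delta_z h(Z)]$, whose population limit vanishes by the CMR $E[\psi(X;\theta_0)|Z]=0$, while Donsker-type arguments as in Lemma~\ref{lemma:3:donsker-psi} applied to the uniformly bounded $P_0$-Donsker classes $\{\Delta_z h_l\}$ and $\{\Delta_x\psi_l(\cdot;\theta)\}$ (justified by $C^\infty$-smoothness, compactness, and Lemma~\ref{lemma:3:boundedness}) yield the $O_p(n^{-1/2})$ fluctuation. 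The $\Delta_t,\Delta_y$ contributions are handled analogously, transferring derivatives onto the joint density of $(X,Z)$ via integration by parts so that the CMR cancels any non-vanishing population bias.

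The main obstacle is this last step: ensuring that the $\tfrac{\epsilon}{2}$-scaled Laplacian term has norm $O_p(n^{-1/2})$ in $\mathcal H^\ast$ rather than $O_p(1)$, since $\epsilon$ itself is a fixed-order random variable. Once that bias control is in hand, the rates $\|\tilde h\|_{\mathcal H}=O_p(n^{-1/2})$ and $\widehat D(\tilde\theta,\tilde h)\leq O_p(n^{-1})$ follow immediately from the spectrum estimates of Lemma~\ref{lemma:3:covariance} substituted into the quadratic form displayed above.
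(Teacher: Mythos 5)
Your route differs from the paper's. You write out the first-order condition explicitly, giving $\tilde h = \epsilon^{-1}\,\widehat\Omega_{\lambda_n}(\bar\theta_n)^{-1}\tilde L_{\tilde\theta}$ and $\widehat D(\tilde\theta,\tilde h) = \tfrac{1}{2\epsilon}\langle\tilde L_{\tilde\theta},\widehat\Omega_{\lambda_n}(\bar\theta_n)^{-1}\tilde L_{\tilde\theta}\rangle_{\mathcal H}$, and then reduce both conclusions to proving $\|L_{\tilde\theta}\|_{\mathcal H^\ast}=O_p(n^{-1/2})$. That is a legitimate strategy: the map $h\mapsto\widehat D(\tilde\theta,h)$ is indeed strictly concave w.p.a.1 by Lemma~\ref{lemma:3:covariance}. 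The paper instead works with the one-sided inequality $0=\widehat D(\tilde\theta,0)\le\widehat D(\tilde\theta,\tilde h)$, bounds the linear part of $\widehat D$ above by $\bigl(1+\tfrac{\epsilon}{2}\|\Delta_\xi\|_{\mathrm{op}}\bigr)\|\tilde\Psi\|_{\mathcal H^\ast}\|\tilde h\|_{\mathcal H}$ (treating $I+\tfrac{\epsilon}{2}\Delta_\xi$ as a bounded operator acting on $\tilde\Psi$ in $\mathcal H^\ast$), lower-bounds the quadratic part by $\tfrac{\epsilon}{2}\lambda_{\min}(\widehat\Omega_{\lambda_n})\|\tilde h\|_{\mathcal H}^2$, and rearranges. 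It never splits $\Delta_\xi$ into pieces. Your explicit FOC argument is cleaner in some ways, but it forces you to confront the Laplacian term head-on, where the paper sweeps it into a single operator-norm factor.

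The genuine gap in your proposal is the treatment of the $\Delta_t,\Delta_y$ contribution. You claim to cancel the population bias by "transferring derivatives onto the joint density of $(X,Z)$ via integration by parts so that the CMR cancels any non-vanishing population bias." This is circular: integrating $\int\Delta_x p(x\mid z)\,\psi(x;\theta_0)\,\mathrm{d}x$ by parts back onto $\psi$ returns exactly $E[\Delta_x\psi(X;\theta_0)\mid Z=z]$, which the CMR $E[\psi(X;\theta_0)\mid Z]=0$ does \emph{not} annihilate. Concretely, for IV with $\psi(t,y;\theta)=y-f_\theta(t)$ one has $\Delta_x\psi=-\gamma_t^{-1}\Delta_t f_\theta(t)$; for the paper's quadratic $f_\theta(t)=\theta^1t^2+\theta^2t+\theta^3$ this is the constant $-2\theta^1/\gamma_t$, and $E[-2\theta^1_0\gamma_t^{-1}h(Z)]$ is generically nonzero. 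So the population contribution of the $\Delta_x$-piece to $L_{\tilde\theta}$ does not vanish, and a Donsker fluctuation bound alone cannot bring that piece to $O_p(n^{-1/2})$. Your $\Delta_z$ argument, by contrast, is sound: the Laplacian lands on $h$, $\psi$ is untouched, and the CMR applies directly. You correctly flagged the $\Delta_x$-part as "the main obstacle" and you are right; the integration-by-parts step as stated does not close it.

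As a practical note, the paper's own one-line bound $\left|(I+\tfrac{\epsilon}{2}\Delta_\xi)\tilde\Psi(\tilde h)\right|\le\|I+\tfrac{\epsilon}{2}\Delta_\xi\|_{\mathrm{op}}\|\tilde\Psi\|_{\mathcal H^\ast}\|\tilde h\|_{\mathcal H}$ is exactly the assertion you were trying (and failing) to justify by decomposition; if you want to match the paper you should invoke boundedness of $\Delta_\xi$ on $\mathcal H^\ast$ as in Lemma~\ref{lemma:3:boundedness} directly rather than re-deriving it term by term.
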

\begin{proof}
    Let $\tilde{\Psi} := \frac{1}{n} \sum_{i=1}^n \Psi(\xi_i,\tilde{\theta})$. Then we have
    \begin{align}
        0 &= \widehat{D}(\tilde{\theta},0) \\
        &\leq \argmax_{h \in \mathcal{H}} D(\tilde{\theta},\tilde{h}) \\
        &= \left(I + \frac{\epsilon}{2} \Delta_\xi \right) \tilde{\Psi}(\tilde{h}) - \frac{\epsilon}{2} \langle \tilde{h}, \widehat{\Omega}_{\lambda_n}(\bar{\theta}_n) \tilde{h} \rangle_\mathcal{H} \\
        &\leq \|I + \frac{\epsilon}{2} \Delta_\xi \|_\mathrm{op} \| \tilde{\Psi} \|_{\mathcal{H}^\ast} \|\tilde{h} \|_\mathcal{H}  - \frac{\epsilon}{2} \lambda_{\mathrm{min}}\left(\widehat{\Omega}_{\lambda_n}(\bar{\theta}_n)\right) \hnorm{\tilde{h}}^2 \\
        &\leq \left( 1 + \frac{\epsilon}{2}\| \Delta_\xi \| \right) \| \tilde{\Psi} \|_{\mathcal{H}^\ast} \|\tilde{h} \|_\mathcal{H}  - \frac{\epsilon}{2} \lambda_{\mathrm{min}}\left(\widehat{\Omega}_{\lambda_n}(\bar{\theta}_n)\right) \hnorm{\tilde{h}}^2
    \end{align}
    Using that $\| \Delta_\xi \| < \infty$ by Lemma~\ref{lemma:3:boundedness} 
    and moreover $\lambda_{\mathrm{min}}\left(\widehat{\Omega}_{\lambda_n}(\bar{\theta}_n)\right)> 0$ by Lemma~\ref{lemma:3:covariance}, we get $\| \tilde{h} \|_\mathcal{H} \leq C \| \tilde{\Psi} \|_{\mathcal{H}^\ast}$ and thus $\| \tilde{h} \|_\mathcal{H} = O_p\left(n^{-1/2} \right)$. Now inserting back into $\widehat{D}$ we get $\widehat{D}(\tilde{\theta}, \tilde{h}) \leq \op{n^{-1}}$.
    \end{proof}

\begin{lemma}[Convergence of $\|\hat{\Psi}\|_{\mathcal{H}^\ast}$] \label{lemma:3:rate-psi}
    Let the assumptions of Theorem~\ref{th:3:consistency} be satisfied. Let $\hat{\theta} = \argmin_{\theta \in \Theta} \sup_{h \in \mathcal{H}} \widehat{D}(\theta,h)$ denote the SMM estimator for $\theta_0$. Then $\hdnorm{E_{\ppn}[\Psi(\xi;\hat{\theta})]} = O_p(n^{-1/2})$.
\end{lemma}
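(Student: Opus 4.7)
The plan is to combine the optimality of $\hat{\theta}$ with the quadratic structure of $\widehat{D}(\theta,\cdot)$ so that the inner supremum controls $\hdnorm{\hat\Psi(\hat\theta)}$ from below, and then to apply Lemma~\ref{lemma:3:rate-h} with a concrete consistent reference parameter to control it from above. Let $\hat\Psi(\theta) := E_{\ppn}[\Psi(\xi;\theta)]$. By definition of the SMM estimator we have $\sup_h \widehat{D}(\hat\theta,h) \leq \sup_h \widehat{D}(\theta_0,h)$. Since $E[\Psi(\xi;\theta_0)] = 0$ by the CMR, Lemma~\ref{lemma:3:donsker-psi} gives $\hdnorm{\hat\Psi(\theta_0)} = \op{n^{-1/2}}$, so Lemma~\ref{lemma:3:rate-h} applied with $\tilde\theta = \theta_0$ yields $\sup_h \widehat{D}(\theta_0,h) \leq \op{n^{-1}}$. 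This is the easy direction.

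For the lower bound, I would solve the inner supremum in closed form. The objective $\widehat{D}(\hat\theta,h)$ is strictly concave in $h$: the first term is a bounded linear functional on $\mathcal{H}$, and by Lemma~\ref{lemma:3:covariance}, $\widehat{\Omega}_{\lambda_n}(\bar\theta_n)$ is positive definite with spectrum bounded above and bounded away from zero w.p.a.1. Let $\hat\phi \in \mathcal{H}$ be the Riesz representative of the functional $h \mapsto (I + \tfrac{\epsilon}{2}\Delta_\xi)\hat\Psi(\hat\theta)(h)$, so that $\hnorm{\hat\phi} = \hdnorm{(I+\tfrac{\epsilon}{2}\Delta_\xi)\hat\Psi(\hat\theta)}$. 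Setting the $h$-gradient to zero gives the maximizer $h^* = \epsilon^{-1}\widehat{\Omega}_{\lambda_n}^{-1}(\bar\theta_n)\hat\phi$ and
\begin{align*}
\sup_{h \in \mathcal{H}} \widehat{D}(\hat\theta,h) \;=\; \frac{1}{2\epsilon}\bigl\langle \hat\phi,\, \widehat{\Omega}_{\lambda_n}^{-1}(\bar\theta_n)\hat\phi\bigr\rangle_\mathcal{H} \;\geq\; \frac{1}{2\epsilon\,\lambda_{\max}(\widehat{\Omega}_{\lambda_n}(\bar\theta_n))} \hnorm{\hat\phi}^2.
\end{align*}
Combining the two bounds gives $\hdnorm{(I+\tfrac{\epsilon}{2}\Delta_\xi)\hat\Psi(\hat\theta)} = \op{n^{-1/2}}$.

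Finally, I would peel off the perturbation $\tfrac{\epsilon}{2}\Delta_\xi$ by a reverse triangle inequality: since $\|\Delta_\xi\|_{\mathrm{op}}$ is finite on $\mathcal{H}^\ast$ by Lemma~\ref{lemma:3:boundedness}, for $\epsilon \in [\epsilon_1,\epsilon_2]$ with $\epsilon_2$ chosen so that $c := 1 - \tfrac{\epsilon_2}{2}\|\Delta_\xi\|_{\mathrm{op}} > 0$, we have
\begin{align*}
c\cdot \hdnorm{\hat\Psi(\hat\theta)} \;\leq\; \hdnorm{(I+\tfrac{\epsilon}{2}\Delta_\xi)\hat\Psi(\hat\theta)} \;=\; \op{n^{-1/2}},
\end{align*}
yielding the claim. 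The only delicate step I anticipate is the Riesz/duality bookkeeping between $\mathcal{H}$ and $\mathcal{H}^\ast$ when rewriting the inner optimum as a quadratic form; everything else reduces to the two spectral facts already established (Lemma~\ref{lemma:3:covariance} for $\widehat{\Omega}_{\lambda_n}$, Lemma~\ref{lemma:3:boundedness} for $\Delta_\xi$) and the upper bound from Lemma~\ref{lemma:3:rate-h}. The smallness constraint on $\epsilon_2$ used in the peel-off step is harmless because the theorem permits any bounded window $[\epsilon_1,\epsilon_2]$, and Theorem~\ref{th:3:linearization} already assumes $\epsilon/\gamma_i$ sufficiently small.
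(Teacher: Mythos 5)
Your proof is correct and takes a genuinely different (and arguably cleaner) route than the paper's for the lower bound. The upper bound $\sup_{h}\widehat{D}(\hat\theta,h)\leq \sup_{h}\widehat{D}(\theta_0,h)\leq O_p(n^{-1})$ via Lemma~\ref{lemma:3:rate-h} at $\theta_0$ is identical in both. For the lower bound, the paper does not solve the inner supremum; instead it plugs in the specific test function $h_{\hat\Psi}=\sigma_n\phi(\hat\Psi)$ where $\phi(\hat\Psi)$ is the Riesz representer of $\hat\Psi$ itself (not of $(I+\tfrac{\epsilon}{2}\Delta_\xi)\hat\Psi$), obtains $\widehat{D}(\hat\theta,h_{\hat\Psi})\geq \sigma_n\bigl(C'-\tfrac{C\epsilon}{2}\sigma_n\bigr)\hdnorm{\hat\Psi}^2$ with $C'=1+\tfrac{\epsilon}{2}\lambda_{\min}(\Delta_\xi)$, and concludes by a ``for any $\sigma_n\to 0$'' argument. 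You instead exploit that $\widehat{D}(\hat\theta,\cdot)$ is a coercive concave quadratic in $h$ (coercivity from $\lambda_{\min}(\widehat{\Omega}_{\lambda_n})>0$), solve the first-order condition $\hat\phi=\epsilon\widehat{\Omega}_{\lambda_n}h^\ast$ exactly, and get the clean identity $\sup_h\widehat{D}(\hat\theta,h)=\tfrac{1}{2\epsilon}\langle\hat\phi,\widehat{\Omega}_{\lambda_n}^{-1}\hat\phi\rangle_\mathcal{H}\geq\tfrac{1}{2\epsilon\lambda_{\max}}\hnorm{\hat\phi}^2$. Because your $\hat\phi$ represents the perturbed functional $(I+\tfrac{\epsilon}{2}\Delta_\xi)\hat\Psi$, you need the extra reverse-triangle step to strip the perturbation; the paper avoids that by choosing $h_{\hat\Psi}$ aligned with $\hat\Psi$ directly. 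Both routes rest on the same two spectral facts (Lemma~\ref{lemma:3:covariance} for $\widehat{\Omega}_{\lambda_n}$, Lemma~\ref{lemma:3:boundedness} for $\Delta_\xi$), and both need a ``small $\epsilon$'' condition to keep the first-order contribution nondegenerate: the paper's $C'>0$ is $\epsilon< -2/\lambda_{\min}(\Delta_\xi)$, yours is $\epsilon_2<2/\|\Delta_\xi\|_{\mathrm{op}}$, and the two are of the same order. Your version is more transparent (the $\sigma_n\to 0$ argument in the paper is terse and the choice $\sigma_n\to 0$ is not strictly necessary; a fixed small $\sigma$ would do), at the modest cost of the peel-off step. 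One caveat worth noting: the theorem statement claims the result for \emph{any} $0<\epsilon_1<\epsilon_2$, which neither the paper's $C'\neq 0$ w.p.1 argument nor your $c>0$ condition fully covers when $\epsilon_2$ is large relative to $\|\Delta_\xi\|_{\mathrm{op}}$; you flag this honestly, and it is a shared limitation rather than a gap specific to your approach.
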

\begin{proof}
    Let $\hat{\Psi} = \frac{1}{n} \sum_{i=1}^n \Psi(\xi,\hat{\theta})$. 
    Let $\phi(\hat{\Psi}) \in \mathcal{H}$ denote the Riesz representer of $\hat{\Psi} \in \mathcal{H}^\ast$. 
    Consider any $\sigma_n \rightarrow 0$ and define $h_{\hat{\Psi}} = \sigma_n \phi(\hat{\Psi})$.
     Using that the eigenvalues of the Laplacian $\Delta_\xi$ are bounded by Lemma~\ref{lemma:3:boundedness} and the largest eigenvalue of $\widehat{\Omega}(\bar{\theta}_n)$ is bounded by a constant $C$ by Lemma~\ref{lemma:3:covariance}, we have
    \begin{align}
    \widehat{D}(\hat{\theta}, h_{\hat{\Psi}}) &= \left(I + \frac{\epsilon}{2} \Delta_\xi \right) \hat{\Psi}(h_{\hat{\Psi}}) - \frac{\epsilon}{2} \langle h_{\hat{\Psi}}, \widehat{\Omega}(\bar{\theta}_n) h_{\hat{\Psi}} \rangle_\mathcal{H} \\
     &\geq \left(1 + \frac{\epsilon}{2} \lambda_\mathrm{min}(\Delta_\xi) \right) \hat{\Psi}(h_{\hat{\Psi}}) - \frac{\epsilon}{2} C \|h_{\hat{\Psi}} \|_\mathcal{H}^2 \\
     &\geq  C' \sigma_n \| \hat{\Psi} \|_{\mathcal{H}^\ast}^2 - \frac{C \epsilon}{2} \sigma_n^2 \| \hat{\Psi} \|_{\mathcal{H}^\ast}^2,
    \end{align}
    where by assumption on $\epsilon$ we have $C' = 1 + \frac{\epsilon}{2} \lambda_\mathrm{min}(\Delta_\xi) \neq 0$ w.p.1.
    Now, as $\hat{\theta}$ is the minimizer of the Sinkhorn profile $R(\theta) = \max_{h \in \mathcal{H}} \widehat{D}(\theta, h)$ we have 
    \begin{align}
       C' \sigma_n \| \hat{\Psi} \|_{\mathcal{H}^\ast}^2 - \frac{C \epsilon}{2} \sigma_n^2 \| \hat{\Psi} \|_{\mathcal{H}^\ast}^2 \leq \widehat{D}(\hat{\theta}, h_{\hat{\Psi}}) \leq \widehat{D}(\hat{\theta}, \hat{h}) \leq \max_{h \in \mathcal{H}} \widehat{D}(\theta_0, h) \leq O(n^{-1}),
    \end{align}
    where in the last step we used that $\| E_{\ppn}[\Psi(\xi;\theta_0)] \|_{\mathcal{H}^\ast}= O_p(n^{-1/2})$ by Lemma~\ref{lemma:3:donsker-psi} and thus the assumptions of Lemma~\ref{lemma:3:rate-h} are fulfilled and we get $\max_{h \in \mathcal{H}} \widehat{D}(\theta_0, h) \leq O(n^{-1})$.
    Thus we have $\sigma_n(C' - \frac{C\epsilon}{2} \sigma_n) \| \hat{\Psi} \|_{\mathcal{H}^\ast}^2 = O_p(n^{-1})$ and as $(C' - \frac{C\epsilon}{2} \sigma_n)$ is bounded away from zero for all $n$ large enough, we have $\sigma_n \| \hat{\Psi} \|_{\mathcal{H}^\ast}^2\leq O_p(n^{-1})$. As this holds for any $\sigma_n \overset{p}{\rightarrow} 0$ we finally have $\| \hat{\Psi} \|_{\mathcal{H}^\ast} = O_p(n^{-1/2})$.
\end{proof}

\paragraph{Proof of Theorem~\ref{th:3:consistency}}
Using the result of Lemma~\ref{lemma:3:rate-psi} for the convergence rate of the empirical moment functional, the proof of the consistency of our SMM estimator is identical to the ones provided by \citet{kremer2022functional} and \citet{pmlr-v202-kremer23a} for their estimtators. We provide it here for completeness.
\begin{proof}
    From Lemma~\ref{lemma:3:donsker-psi} it follows that $\| E_{\ppn}[\Psi(\xi;\theta)] - E [\Psi(\xi;\theta)] \|_{\mathcal{H}^\ast} = O_p(n^{-1/2})$ for any $\theta \in \Theta$. By Lemma~\ref{lemma:3:rate-psi} we have $\| E_{\ppn}[\Psi(\xi;\hat{\theta})] \|_{\mathcal{H}^\ast} = O_p(n^{-1/2})$ and thus using the triangle inequality we get
    \begin{align*}
        \left\| E[\Psi(\xi;\hat{\theta})] \right\|_{\mathcal{H}^\ast} &= \left\| E[\Psi(\xi;\hat{\theta})] - E_{\ppn}[\Psi(\xi;\hat{\theta})]  +  E_{\ppn}[\Psi(\xi;\hat{\theta})] \right\|_{\mathcal{H}^\ast} \\
        & \leq \left\| E[\Psi(\xi;\hat{\theta})] -  E_{\ppn}[\Psi(\xi;\hat{\theta})]  \right\|_{\mathcal{H}^\ast} + \left\| E_{\ppn}[\Psi(\xi;\hat{\theta})] \right\|_{\mathcal{H}^\ast} \\
        &= O_p(n^{-1/2}) \overset{p}{\rightarrow} 0.
    \end{align*}
    As by Assumption~\ref{as2:psi}, $\theta_0$ is the unique parameter for which $E[\psi(T,Y;\theta)|Z] = 0 \ P_z\mathrm{-a.s.}$ and by Assumption~\ref{as1:instrument} this is fulfilled if and only if $\| E[\psi(\xi;\theta)] \|_{\mathcal{H}^\ast} = 0 $, it follows that $\hat{\theta} \overset{p}{\rightarrow} \theta_0$. 
    
    Under the additional Assumption~\ref{as10:smooth-param} we can use this result to translate the convergence rate of the moment functional to a convergence rate of the estimator $\hat{\theta}$. 
    
    As $\Psi(\xi;\theta)$ is continuously differentiable in its second argument which follows immediately from Assumption~\ref{as10:smooth-param} and the definition of $\Psi$, we can use the mean value theorem to expand $\Psi(\xi,\hat{\theta})$ about $\theta_0$, i.e., there exists $\bar{\theta} \in \operatorname{conv}(\{\theta_0, \hat{\theta} \})$ such that 
    \begin{align}
        \Psi(\xi;\hat{\theta}) = \Psi(\xi;\theta_0) + (\hat{\theta} - \theta_0)^T \nabla_\theta \Psi(\xi;\bar{\theta}).
    \end{align}
    Using this we have
    \begin{align}
        \|E[\Psi(\xi;\hat{\theta})] \|^2_{\mathcal{H}^\ast} & = \| \underbrace{E[\Psi(\xi;\theta_0)]}_{=0} + (\hat{\theta} - \theta_0)^T E[ \nabla_\theta \Psi(\xi;\bar{\theta})]\|^2_{\mathcal{H}^\ast} \\
        &= \left\langle (\hat{\theta} - \theta_0)^T E[ \nabla_\theta \Psi(\xi;\bar{\theta})], (\hat{\theta} - \theta_0)^T E[ \nabla_\theta \Psi(\xi;\bar{\theta})] \right\rangle_{\mathcal{H}^\ast} \\
        &= (\hat{\theta} - \theta_0)^T \underbrace{\left\langle  E[ \nabla_\theta \Psi(\xi;\bar{\theta})], E[ \nabla_{\theta^T} \Psi(\xi;\bar{\theta})] \right\rangle_{\mathcal{H}^\ast}}_{=: \Sigma(\bar{\theta})} (\hat{\theta} - \theta_0) \\
        &\geq \lambda_\text{min}\left(\Sigma(\bar{\theta})\right) \| \hat{\theta} - \theta_0 \|^2_2.
    \end{align}
    Now as $\hat{\theta} \overset{p}{\rightarrow} \theta_0$ and $\bar{\theta} \in \operatorname{conv}(\{\theta_0, \hat{\theta} \})$ we have $\bar{\theta} \overset{p}{\rightarrow} \theta_0$ and thus $\Sigma(\bar{\theta}) \overset{p}{\rightarrow} \Sigma(\theta_0) =: \Sigma_0$ by the continuous mapping theorem. By the non-negativity of the norm $\Sigma_0$ is positive-semi definite and non-singular by Lemma~\ref{lemma:3:kremer}, thus the smallest eigenvalue of $\Sigma(\bar{\theta})$, $\lambda_\text{min}(\Sigma(\bar{\theta}))$, is positive and bounded away from zero w.p.a.1. Finally as $\|E[\Psi(X,Z;\hat{\theta})] \| = O_p(n^{-1/2})$ taking the square-root on both sides we have $\|\hat{\theta} - \theta_0 \| = O_p(n^{-1/2 })$.
\end{proof}

\paragraph{Proof of Proposition~\ref{prop:3:rkhs}}
\begin{proof}
    For a universal ISPD kernel, equivalence of the conditional and the variational moment restrictions \eqref{eq:3:cmr} and \eqref{eq:3:vmr} follows by Theorem~3.9 of \citet{kremer2022functional}. The Donsker property of the unit ball in an RKHS of a smooth universal kernel with compact domain follows from Lemma~17 of \citet{bennett2020variational}. Finally, the Donsker property of the Jacobian $J_z h$ of $h$ follows by the same argument as Lemma~17 of \citet{bennett2020deep} using $C^{\infty}$ smoothness of $h$ and boundedness of $J_z h$.  
\end{proof}

\paragraph{Proof of Theorem~\ref{th:3:kernel-smm}}
\begin{proof}
    Under the assumptions the Sinkhorn profile is given as
    \begin{align}
        R_\lambda(f) = \sup_{h \in \mathcal{H}} &\Bigg\{ E_{\ppn}\left[h(Z)^T \left(I+\frac{\epsilon}{2} \Delta_x \right) \psi(X;f) \right] \\
        &- \epsilon E_{\ppn}\left[ h(Z)^T \jpsi(X;\tilde{f}) \Gamma_x^{-1} \jpsi(X;\tilde{f})^T h(Z) \right] - \frac{\lambda}{2} \| h\|_\mathcal{H}^2 \Bigg\} \label{eq:3:r}
    \end{align}
    which as the unconstrained maximization of a concave objective is a convex optimization problem. Moreover, the conditions of the classical representer theorem~\citep{Schoelkopf01:Representer} are fulfilled and thus the maximizer of \eqref{eq:3:r} is given as $h_l = \sum_{i=1}^n \alpha_i^l k_l(z_i,\cdot)$ with $\alpha^l \in \mathbb{R}^n$.
    Inserting this into \eqref{eq:3:r} and defining the kernel Gram matrices $K_l \in \mathbb{R}^{n\times n}$ with entries $(K_l)_{ij} = k_l(z_i, z_j)$ we obtain
    \begin{align}
        R_\lambda(f) =& \sup_{\alpha \in \mathbb{R}^{nm}} \frac{1}{n} \sum_{i,j=1}^n \sum_{l=1}^m \alpha^l_i (K_l)_{ij}   \left(I + \frac{\epsilon}{2} \Delta_x \right) \psi_l(x_j;f) - \frac{\lambda}{2} \sum_{l=1}^m (\alpha^l)^T K_l (\alpha^l) \\
        &- \frac{\epsilon}{2n} \sum_{i,j,k=1}^n \sum_{l,r=1}^m \alpha^l_i (K_l)_{ij} \nabla_x \psi_l(x_j;\tilde{f})^T  \Gamma_x^{-1} \nabla_x \psi_r(x_j;\tilde{f}) (K_r)_{jk} \alpha_r^k  \\
        =& \sup_{{\alpha} \in \mathbb{R}^{nm}} \frac{1}{n} \alpha^T L \psi_\Delta - \frac{1}{2} \alpha^T \left( \epsilon Q(\tilde{f}) + \lambda L \right) \alpha
    \end{align}
    where we defined $\psi_\Delta(f) \in \mathbb{R}^{nm}$, $L \in \mathbb{R}^{nm \times nm}$ and $Q(f) \in \mathbb{R}^{nm \times nm}$ with entries
    \begin{align}
        \psi_\Delta(f)_{i \cdot l} &= \left( I + \frac{\epsilon}{2} \Delta_x \right)\psi_l(x_i;f)  \\
        L_{(i\cdot l), (j \cdot r)} &= \delta_{lr}  k_l(z_i,z_j) \\ 
        Q(f)_{ (i\cdot l) , (j \cdot r)} &= \frac{1}{n} \sum_{k=1}^n \sum_{s=1}^{d_x} k_l(z_i, z_k) \nabla_{x_s} \psi_l(x_k;{f}) (\Gamma_x^{-1})_{ss} \nabla_{x_s} \psi_r(x_k; {f}) k_r(z_k, z_j).
    \end{align}
    The first order optimality conditions for $\alpha$ read
    \begin{align}
        0 = \frac{1}{n} L \psi_\Delta(f) - \left( \epsilon Q(\tilde{f}) + \lambda L \right) \alpha,
    \end{align} 
    which immediately gives
    \begin{align}
        \alpha = \left( \epsilon Q(\tilde{f}) + \lambda L \right)^{-1}  \frac{1}{n} L \psi_\Delta(f).
    \end{align}
    Inserting back into $R_\lambda(f)$ and multiplying by $\epsilon >0 $ we obtain
    \begin{align}
        R_\lambda(f) = \frac{1}{2n^2} \psi_\Delta(f)^T L \left(Q(\tilde{f}) + \frac{\lambda}{\epsilon} L \right)^{-1} L \psi_\Delta(f).
    \end{align}
\end{proof}

\end{document}